\newtheorem{theorem}{Theorem}[section]
\newtheorem{corollary}{Corollary}[section]
\newtheorem{lemma}{Lemma}[section]
\newtheorem{proposition}{Proposition}[section]
\newtheorem{definition}{Definition}[section]
\newtheorem{remark}{Remark}[section]
\title{%
  Sample Path Regularity of Gaussian Processes from the Covariance Kernel
}
\author{%
  Nathaël Da Costa\thanks{Tübingen AI Center, University of Tübingen}
  \and
  Marvin Pförtner\footnotemark[1]
  \and
  Lancelot Da Costa\thanks{Verses AI Research Lab \& ELLIS Institute Tübingen}
  \and
  Philipp Hennig\footnotemark[1]
}
\begin{document}

\maketitle

\begin{abstract}
Gaussian processes (GPs) are the most common formalism for defining probability distributions over spaces of functions. While applications of GPs are myriad, a comprehensive understanding of GP sample paths, i.e.~the function spaces over which they define a probability measure, is lacking. In practice, GPs are not constructed through a probability measure, but instead through a mean function and a covariance kernel. In this paper we provide necessary and sufficient conditions on the covariance kernel for the sample paths of the corresponding GP to attain a given regularity. We focus primarily on Hölder regularity as it grants particularly straightforward conditions, which simplify further in the cases of stationary and isotropic GPs. We then demonstrate that our results allow for novel and unusually tight characterisations of the sample path regularities of the GPs commonly used in machine learning applications, such as the Matérn GPs.

\end{abstract}

\section{Introduction}
Gaussian processes (GPs) provide a formalism to assign probability distributions over spaces of functions. That distribution is principally characterised and controlled by the process' covariance function, which is a positive definite kernel. Such kernels are also associated with reproducing kernel Hilbert spaces (RKHSs).
However, it is relatively widely known that the \emph{sample paths} of the associated GP are not generally elements of the RKHS, but form a ``larger'' space of typically less regular functions \cite[Section 4]{kanagawa_gaussian_2018}. This sample path space is \emph{much harder to characterise} than the RKHS.

GP regression is widely applied in statistics and machine learning for inference from physical observations. In such settings, practitioners mostly concern themselves with the posterior mean function (which \emph{is} an element of the RKHS) and the marginal variance (which is \emph{not} in the RKHS, but inherits its regularity) while largely ignoring the regularity of the sample paths and other properties of the support of the prior probability measure. But recently relevant use cases for GPs in computational tasks urgently require a more careful, and ideally tight analysis of the sample path regularity, which we provide in this work. For instance, \cite{pfortner_physics-informed_2023} recently showed that a large class of classic solution methods for linear partial differential equations (PDEs) can be interpreted as GP inference, i.e.~as base instances of probabilistic numerical methods. To infer the solution to a PDE using a probabilistic numerical method, we want to construct a GP prior for it. It should be tailored to the problem as tightly as possible, for the following reasons:
\begin{enumerate}
    \item We need to ensure that the PDE's differential operator is well-defined on all samples of the GP. Hence, the sample paths must be \textbf{regular enough}. Here, regularity typically refers to the existence of a number of strong or weak partial derivatives, which is encapsulated in the frameworks of Hölder and Sobolev regularity respectively.
    \item But we also want the GP posterior to provide a useful uncertainty estimate over the solution of the PDE. Hence, we do not want to needlessly impose \emph{additional} regularity constraints on the sample paths. The sample paths should be \textbf{as irregular as possible}, to avoid overconfidence. Such cautious models may also be advisable on numerical grounds, to avoid instabilities such as Gibbs or Runge phenomena.
\end{enumerate}
The above is just one example of a setting in which one would like to characterise the regularity of GP sample paths as tightly as possible. Further examples include generalised coordinates, a method that underlies the analysis and filtering of stochastic differential equations driven by noise admitting a high number of derivatives \cite{heins_collective_2023}. These noise signals are usually modelled with GPs, and hence characterising GP sample path regularity is crucial for understanding in which situations these methods can be applied \cite{costa_theory_2024}.
\begin{figure}[t]\label{fig: tensor materns}
\centering
\includegraphics[height=174pt]{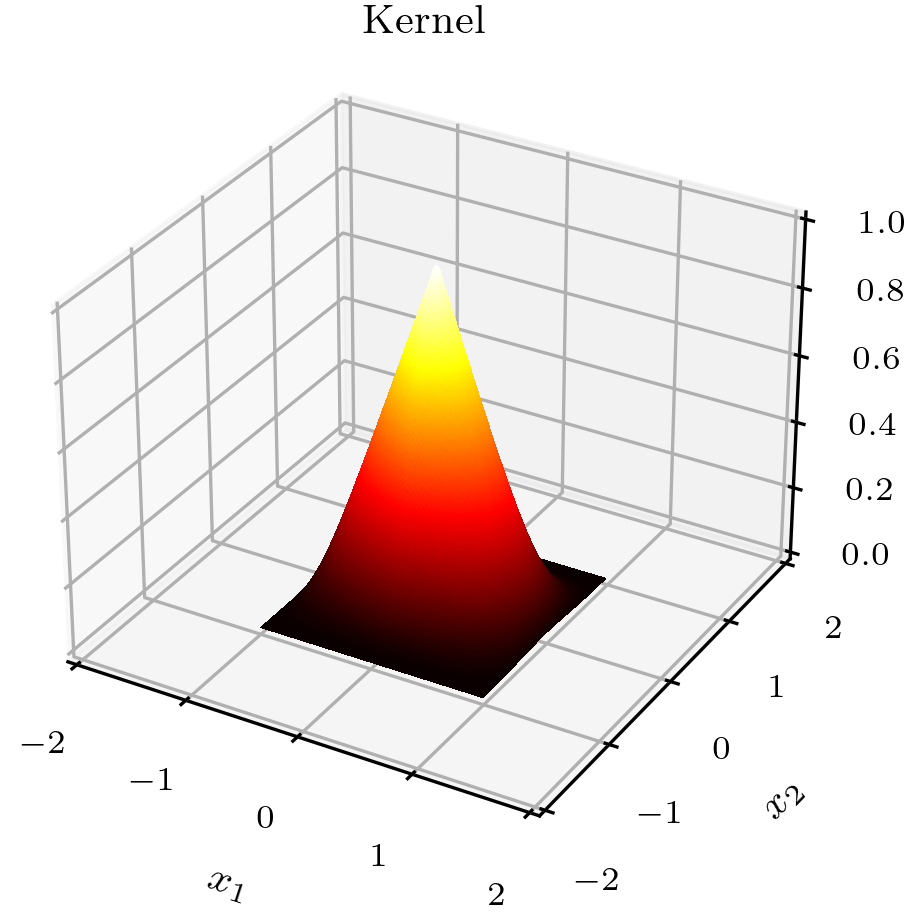}
\includegraphics[height=174pt]{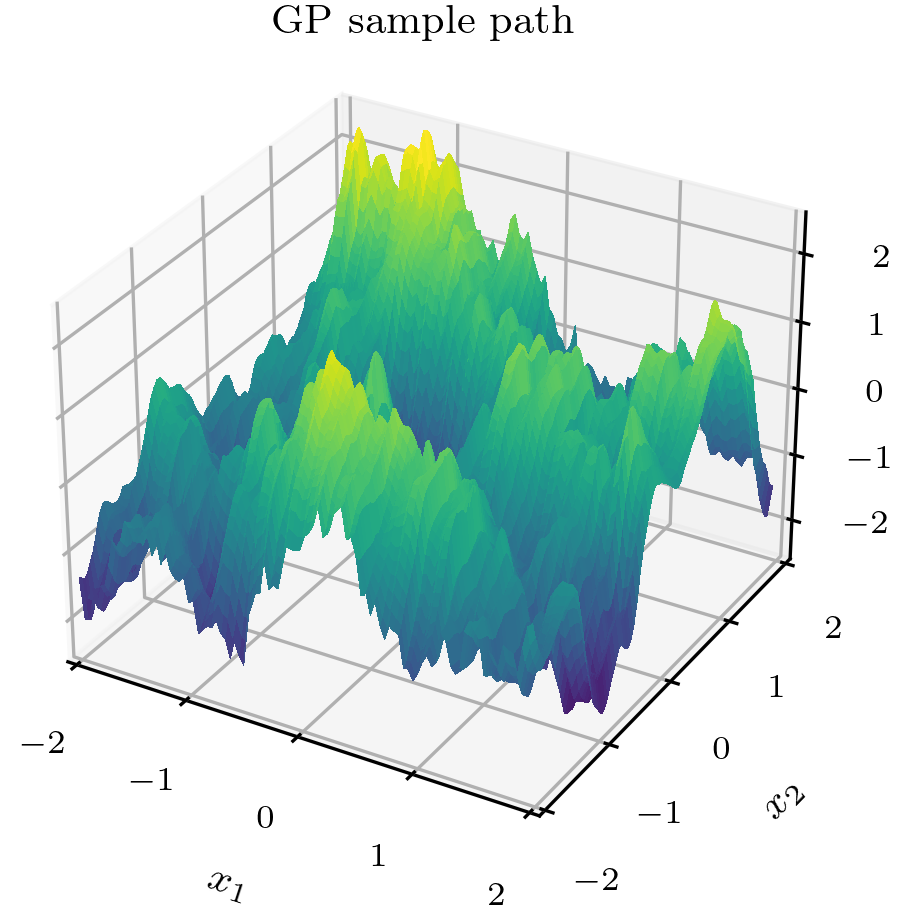}
\caption{\textit{Left:} surface plot of a kernel defined as the tensor product of two Wendland kernels (with $d=1$, $n=0$ on the $x_1$-axis and $d=1$, $n=1$ on the $x_2$-axis). \textit{Right:} surface plot of a sample path from the corresponding centered GP. The sample path has different regularity along each axis (non-differentiable along the $x_1$-axis; once differentiable along the $x_2$-axis), reflecting the rotational asymmetry in the regularity of the kernel.}
\end{figure}

\subsection{Summary of Contributions}

A consequence of our main result, Theorem \ref{thm: holder}, may be written as follows:
\begin{corollary}
    Let $k \colon \R^d \times \R^d \to \R$ be a symmetric positive definite kernel.
    If either
    \begin{itemize}
        \item all partial derivatives of the form $\frac{\partial^{2n} k}{\partial \bm x^{\bm \alpha}\partial \bm y^{\bm \alpha}}$ for multi-indices $\bm \alpha \in \N_0^d$ with $|\bm\alpha| = n$ exist and are (locally) Lipschitz,
        \item for stationary $k(\bm x, \bm y) = k_\delta(\bm x - \bm y)$, $\frac{\partial^{2n} k_\delta}{\partial \bm x^{\bm\alpha}}$ for $|\bm \alpha| = 2n$ exist and are (locally) Lipschitz,
        \item for isotropic $k(\bm x, \bm y) = k_r(\| \bm x - \bm y \|)$, $k_r^{(2n)}$ exists and is (locally) Lipschitz,
    \end{itemize}
    then the sample paths of $f \sim \GP(0, k)$ are, up to modification, $n$ times continuously differentiable.
\end{corollary}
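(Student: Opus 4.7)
The plan is to derive this corollary as a direct consequence of the paper's main result, Theorem~\ref{thm: holder}. The key observation is that local Lipschitz continuity is exactly local Hölder continuity with exponent $\beta = 1$. Since the statement only asks for $C^n$ sample paths (continuous $n$-th derivatives), and Theorem~\ref{thm: holder} is expected to yield strictly more regularity in this setting -- namely Hölder-$(1/2 - \varepsilon)$ regularity on the $n$-th derivatives, in keeping with the classical ``loss of half a derivative'' for Gaussian processes -- the corollary should fall out immediately once we check that each hypothesis matches the format of the theorem.

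Concretely, for the general case, I would first recall the standard identification: for a multi-index $\bm\alpha$ with $|\bm\alpha| = n$, the mean-square partial derivative process $\partial^{\bm\alpha} f$ of $f \sim \GP(0, k)$ is itself a centered GP, with covariance kernel $\frac{\partial^{2n} k}{\partial \bm x^{\bm\alpha} \partial \bm y^{\bm\alpha}}$. The hypothesis that this kernel is locally Lipschitz then puts us in the Hölder-1 regime of Theorem~\ref{thm: holder}, which outputs sample paths of $\partial^{\bm\alpha} f$ that are Hölder of exponent $1/2 - \varepsilon$ for every $\varepsilon > 0$; in particular they are continuous. Ranging over all $|\bm\alpha| \leq n$ (noting that Lipschitzness at order $n$ forces sufficient regularity at lower orders by integration along paths in a convex neighbourhood) yields that all partial derivatives of $f$ up to order $n$ are continuous almost surely, i.e.\ $f \in C^n$.

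For the stationary case, I would reduce to the general case via the elementary identity $\frac{\partial^{2n} k}{\partial \bm x^{\bm\alpha} \partial \bm y^{\bm\alpha}}(\bm x, \bm y) = (-1)^{|\bm\alpha|} \frac{\partial^{2n} k_\delta}{\partial \bm\delta^{2\bm\alpha}}(\bm x - \bm y)$, which shows that local Lipschitzness of the pure derivatives $\frac{\partial^{2n} k_\delta}{\partial \bm\delta^{2\bm\alpha}}$ implies local Lipschitzness of all the mixed partials appearing in the general hypothesis. For the isotropic case, one further applies the multivariate chain rule to $k_r(\|\bm x - \bm y\|)$ to express $\frac{\partial^{2n} k_\delta}{\partial \bm\delta^{2\bm\alpha}}$ as a polynomial in $\|\bm\delta\|^{-1}$ and derivatives $k_r^{(j)}$ with $j \leq 2n$, so local Lipschitzness of $k_r^{(2n)}$ propagates (possibly with some care near the origin) to the stationary condition.

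The main obstacle I foresee is not conceptual but technical: making sure that a Lipschitz (rather than strictly Hölder of higher order) condition at exactly the threshold order $2n$ is enough to produce $C^n$ rather than merely $C^{n-1}$ sample paths. This hinges on the sharpness of Theorem~\ref{thm: holder} at the Hölder-1 endpoint -- specifically, on the fact that a Kolmogorov--Chentsov-type argument applied to $\partial^{\bm\alpha} f$ with a Lipschitz covariance gains an extra half-derivative. Checking the reduction in the isotropic case near $\bm x = \bm y$ (where the radial change of variables is singular) is a secondary point of care, but should be routine given the formula for $k_r^{(2n)}$ in terms of Cartesian partial derivatives.
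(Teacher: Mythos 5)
Your overall strategy is the right one and is essentially what the paper intends: the corollary is a direct application of Theorem~\ref{thm: holder} with $\gamma=\nicefrac{1}{2}$, since a locally Lipschitz bound on the relevant derivatives gives $\mathcal O(\|\bm h\|)=\mathcal O(\|\bm h\|^{2\epsilon})$ for every $\epsilon\in(0,\nicefrac{1}{2})$, and $C^{(n+\nicefrac{1}{2})^-}_{loc}(O)\subset C^n(O)$. Your worry about ``sharpness at the Hölder-$1$ endpoint'' is therefore a non-issue: you never need the endpoint, only $\gamma=\nicefrac{1}{2}$. Likewise, the stationary and isotropic reductions you sketch are unnecessary work, because items \ref{thm holder: stationary case} and \ref{thm holder: isotropic case} of Theorem~\ref{thm: holder} already state the conditions directly in terms of $k_\delta$ and $k_r$; Lemma~\ref{lem equiv} in the paper has already done the chain-rule bookkeeping near the origin that you flag as a ``secondary point of care''.

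Two genuine loose ends remain. First, the hypothesis of \thmitemref{Theorem}{thm: holder}{thm holder: general case} involves \emph{all} mixed partials $\partial^{\bm\alpha,\bm\beta}k$ with $|\bm\alpha|=|\bm\beta|=n$, including $\bm\alpha\neq\bm\beta$, whereas the corollary only assumes control of the diagonal derivatives $\partial^{\bm\alpha,\bm\alpha}k$. You never bridge this. The bridge is short but must be said: Lemma~\ref{lem extrapol} upgrades existence and continuity of the diagonal derivatives near the diagonal to $k\in C^{n\otimes n}(O\times O)$, and the Cauchy--Schwarz computation of Equation~(\ref{eq: holder kernel}) bounds the off-diagonal second difference by the geometric mean of the two diagonal ones, so the $\mathcal O(\|\bm h\|)$ bound propagates. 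Second, as written you apply the theorem at order $0$ to the mean-square derivative process $\partial^{\bm\alpha}_{ms}f\sim\GP(0,\partial^{\bm\alpha,\bm\alpha}k)$ and then conclude ``the partial derivatives of $f$ are continuous, i.e.\ $f\in C^n$''. This silently identifies the mean-square derivative with the classical sample-path derivative, which is precisely the hard step inside the proof of Theorem~\ref{thm: holder} (it relies on Potthoff's theorem that a continuous modification of the mean-square derivative is a genuine sample-path derivative). You should not re-open that question in the corollary: apply Theorem~\ref{thm: holder} once, at order $n$ with $\gamma=\nicefrac{1}{2}$, and read off the conclusion, including the sample-wise identity $\partial^{\bm\alpha}f\sim\GP(0,\partial^{\bm\alpha,\bm\alpha}k)$ that the theorem already provides.
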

Note that, by the mean value theorem, the existence of one additional continuous derivative of $k$ is sufficient for local Lipschitz continuity of the lower derivatives.

The actual Theorem \ref{thm: holder} is sharper than this.
It provides necessary and sufficient conditions on the regularity of the kernel for the sample paths to attain a given Hölder regularity.
Just as the above corollary, the theorem encompasses statements for stationary and isotropic GPs.

We apply Theorem \ref{thm: holder} in particular to the Matérn GPs to obtain Proposition \ref{prop: matern}, a consequence of which is the following:
\begin{corollary}
    The sample paths of a centered Matérn GP with smoothness parameter $\nu\not\in\N_0$ are, up to modification, $\lfloor\nu\rfloor$ times continuously differentiable and no more.
    In particular, when $\nu =n+\nicefrac{1}{2}$ for some $n\in\N_0$, the sample paths are, up to modification, $n$ times continuously differentiable and no more.
\end{corollary}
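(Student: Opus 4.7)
The plan is to apply the isotropic form of Theorem~\ref{thm: holder} to the Matérn radial function $k_r$. Write $\nu = n + s$ with $n = \lfloor \nu \rfloor \in \N_0$ and $s \in (0,1)$. The key analytic input is the small-$r$ expansion of $k_r$: for non-integer $\nu$, using the identity $K_\nu(z) = \tfrac{\pi}{2 \sin(\pi\nu)}\bigl(I_{-\nu}(z) - I_\nu(z)\bigr)$ together with the power series of $I_{\pm\nu}$, one obtains an expansion of the form
\begin{equation*}
  k_r(r) = P(r^2) + c_\nu\, r^{2\nu}\,\bigl(1 + Q(r^2)\bigr),
\end{equation*}
where $P$ and $Q$ are real-analytic on $\R$ and $c_\nu \neq 0$ depends explicitly on $\nu$ and the length scale.

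For the positive half of the claim, I would differentiate this expansion $2n$ times. The analytic piece $P(r^2)$ remains smooth, while the second piece contributes a term proportional to $r^{2s}\bigl(1 + O(r^2)\bigr)$. Since $s > 0$ this is continuous at the origin and locally Hölder of exponent $\min(2s, 1)$. Feeding this regularity of $k_r^{(2n)}$ into the isotropic case of Theorem~\ref{thm: holder} at level $n$ then yields that the sample paths are $n = \lfloor\nu\rfloor$ times continuously differentiable.

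For the ``no more'' half, I would invoke the necessary direction of Theorem~\ref{thm: holder}: if the sample paths were a.s.\ $(n+1)$ times continuously differentiable, the theorem would force $k_r^{(2(n+1))}$ to exist at the origin (with appropriate local regularity). But differentiating the expansion $2(n+1)$ times produces a leading contribution proportional to $r^{2s-2}$, which is unbounded near $r=0$ because $2s - 2 < 0$. Hence $(n+1)$-fold sample differentiability is ruled out. The half-integer case $\nu = n + \nicefrac{1}{2}$ is just $s = \nicefrac{1}{2}$, to which the above argument applies verbatim.

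The step I expect to be the main obstacle is matching the blow-up of $r^{2s-2}$ to the precise hypothesis of the necessary direction of Theorem~\ref{thm: holder}: one must confirm that the theorem requires \emph{pointwise} existence of $k_r^{(2(n+1))}(0)$ rather than some weaker (one-sided or distributional) notion, so that the divergence is a genuine obstruction. Once this is established, the remaining work is routine bookkeeping with the Bessel series, including tracking the non-vanishing of $c_\nu$ for all non-integer $\nu > 0$.
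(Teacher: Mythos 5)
Your decomposition of $k_r$ is exactly the one the paper uses (the identity $K_\nu = \tfrac{\pi}{2\sin(\pi\nu)}(I_{-\nu}-I_\nu)$ splitting $k_r$ into a function analytic in $r^2$ plus $c_\nu r^{2\nu}$ times such a function), and the positive half of your argument is correct: after $2n$ derivatives the non-smooth piece contributes an increment $\mathcal O(|h|^{2s})$ at the origin, which verifies the isotropic condition of Theorem~\ref{thm: holder} with $\gamma=s$ and hence gives samples in $C^{\nu^-}_{loc}\subset C^n$.

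The gap is precisely the step you flagged. The necessary direction of Theorem~\ref{thm: holder} does \emph{not} say that $(n+1)$-fold continuous sample differentiability forces $k_r^{(2(n+1))}(0)$ to exist; the theorem only characterises the almost-H\"older classes $C^{(m+\gamma)^-}_{loc}$, and the paper explicitly cautions that it does not directly give necessary conditions for membership in $C^m$. The correct route is: $C^{n+1}(O)\subset C^{(n+1)^-}_{loc}(O)$ (since an $(n+1)$-st continuous derivative makes all $n$-th derivatives locally Lipschitz), and $n+1 = n+\gamma$ with $\gamma=1$, so if the samples were in $C^{n+1}$ the necessary direction would force $|k_r^{(2n)}(h)-k_r^{(2n)}(0)| = \mathcal O(|h|^{2\epsilon})$ for \emph{all} $\epsilon\in(0,1)$. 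Your expansion shows this increment is exactly of order $|h|^{2s}$ with $c_\nu\neq 0$, which fails for any $\epsilon\in(s,1)$ (nonempty because $\nu\notin\N_0$ gives $s<1$). So the contradiction comes from the failure of the H\"older condition on $k_r^{(2n)}$, not from a divergent $r^{2s-2}$ term in a $(2n+2)$-nd derivative that the theorem never requires you to take. With that substitution the argument closes, and it is then the same proof as the paper's, which in fact records the sharper statement that the samples lie in $C^{\nu^-}_{loc}$ and in no $C^{(n'+\gamma')^-}_{loc}$ with $n'+\gamma'>\nu$.
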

We then obtain similar results for other GPs, such as the Wendland (Proposition \ref{prop: wendland}), squared exponential and rational quadratic GPs (Remark \ref{rmk: smooth}).

In Section \ref{sec: kernel algebra} we describe how to apply Theorem \ref{thm: holder} to covariance kernels constructed through algebraic transformations of other kernels, including conic combinations (Proposition \ref{prop: conic}), products (Proposition \ref{prop: product}), tensor products (Proposition \ref{prop: tensor product}) and coordinate transformations (Proposition \ref{prop: coord}). We note in Section \ref{sec: manifold} how the results in this work may be generalised to manifold GPs.

Theorem \ref{thm: sobolev} summarises results about GP sample path Sobolev regularity. Comparing these with Theorem \ref{thm: holder}, we see that for stationary GPs one should not expect sample paths to admit a greater number of weak (Sobolev) derivatives than of strong derivatives.

We then plot a number of stationary kernel functions as well as sample paths from the corresponding centered GPs, as in Figure \ref{fig: tensor materns}.

All regularity properties considered in this work are \emph{local}, in the sense that they capture the infinitesimal behaviour of functions around each point in their domain. Note that our proofs can be extended and our assumptions strengthened to consider global regularity properties, such as global Hölder continuity.

\subsection{Related Work}
Some of the earliest work on sample path regularity can be attributed to Kolmogorov and \cite{chentsov_weak_1956}. Later, \cite{fernique_regularite_1975} proved the first necessary and sufficient conditions for sample path continuity of stationary GPs. Since then, many authors have extended these continuity results. \cite{potthoff_sample_2009} provides an approachable overview of how such results can be obtained. It also investigates conditions for uniform continuity, which is a \emph{global} regularity property. \cite[Theorem 8.3.2]{adler_geometry_2010} characterises the Hölder continuity of GPs. \cite{azmoodeh_necessary_2014} further provides necessary and sufficient conditions for Hölder continuity of GPs with one-dimensional real inputs, which can be viewed as a converse to the Kolmogorov continuity theorem for GPs. \cite{nummi_necessary_2024}, which we rely on in this work, generalises this to multi-dimensional inputs, and to a class of hypercontractive processes.

Differentiability has also been investigated by many authors, including \cite{scheuerer_regularity_2010} for general random fields, \cite[Theorem 1.4.2]{adler_random_2007}, \cite{potthoff_sample_2010} and \cite{henderson_sobolev_2024} for Sobolev regularity. \cite[Proposition I.3]{ghosal_fundamentals_2017} combines differentiability results with the Kolmogorov continuity theorem, and can be seen as the global version of the forward implication of our \thmitemref{Theorem}{thm: holder}{thm holder: general case}. \cite{kerkyacharian_regularity_2018} gives necessary and sufficient conditions for the Hölder regularity (both differentiability and Hölder continuity) of GPs on certain compact metric spaces, obtaining a result which resembles Theorem \ref{thm: holder}. However, unlike Theorem \ref{thm: holder} and as discussed by the authors, its assumptions are difficult to verify.

Although no work applies general GP sample path results across a range of examples, some particular GPs have been investigated individually in the literature. The regularity of Brownian motion sample paths is well-studied (see for example \cite{roynette_mouvement_1993}). The Matérn GPs were studied in \cite[Example 4.8]{steinwart_convergence_2019} and \cite[Examples 5.3.19, 5.5.6 \& 5.5.12]{scheuerer_comparison_2010}.

Compared to the existing literature, the present work offers a unique combination of \emph{generality}, \emph{versatility}, and \emph{practical applicability} in the study of GP sample path continuity and differentiability. To achieve this we unify existing results and prove several new ones. Novel results include the converse of Theorem \ref{thm: holder} relating the regularity of the sample paths to the regularity of the kernels, \thmitemref{Theorem}{thm: holder}{thm holder: isotropic case} characterising the sample path regularity for GPs with isotropic kernels, Proposition \ref{prop: product} characterising it for GPs with product kernels, and Proposition \ref{prop: tensor product} characterising it for GPs with tensor product kernels.
\section{Preliminaries}\label{sec: preliminaries}
\begin{definition}\label{def: gp}
    A Gaussian process on a set $O$ is a map $f \colon O\times \Omega \to \R$, where $\Omega$ is a probability space, such that for all $x\in O$, $f(x,\cdot) \colon O \to \R$ is measurable, and such that for each $X := (x_1,\dots,x_N) \in O^N$, the map $f(X, \cdot) \colon \Omega \to \R^{N}$ given by $f(X, \omega) =(f(x_1,\omega),\dots, f(x_N,\omega))$ is a multivariate Gaussian random variable.

    The maps $f(\cdot,\omega) \colon O \to \R$ for $\omega\in \Omega$ are the sample paths of the GP.
\end{definition}
We would like to study the regularity of the sample paths of a GP. However, Definition \ref{def: gp} is not convenient to work with in practice: one rarely constructs the probability space $\Omega$ of a GP. Instead, one characterises a GP by its \emph{mean} and its \emph{covariance kernel}.
\begin{definition}\label{def: mean and kernel}
    The mean $\mu$ of a GP $f \colon O\times \Omega \to \R$ is the map
    $$\mu \colon O \to \R,\; x \mapsto \E(f(x,\cdot)).$$
    The covariance kernel $k$ is the map
    $$k \colon O \times O \to \R,\; (x,y) \mapsto \cov(f(x,\cdot),f(y,\cdot)).$$
    The GP $f$ is said to be centered if $\mu = 0$.
\end{definition}
Conversely, given maps $\mu \colon O \to \R$ and $k \colon O \times O \to \R$ with $k$ symmetric positive definite -- meaning that for any finite set of points $\{x_1,\dots,x_N\} \subset \Omega$, the matrix $(k(x_i,x_j))_{i,j}$ is symmetric positive semi-definite -- one can construct a probability space $\Omega$, and a GP $f \colon O\times \Omega \to \R$ with mean $\mu$ and covariance kernel $k$ \cite[Theorem 14.36]{klenke_probability_2014}. We then write $f\sim \GP(\mu,k)$.

At this point it is important to note that $f\sim \GP(\mu,k)$ does not uniquely specify $f$ or the probability space $\Omega$. However it uniquely specifies the finite dimensional distributions of $f$ -- since multivariate Gaussians are entirely characterised by their first two moments. In machine learning applications all that will ever be observed are evaluations of $f$ at finitely many points. Therefore we would like to study sample path regularity of $f$ \emph{up to modification}. By this we mean that $f\sim \GP(\mu,k)$ will be said to have samples in $\mathcal F(O)$, where $\mathcal F(O)$ is some space of functions $O \to \R$, if there exists a construction of the GP $f$, say $f \colon O \times \Omega \to \R$, such that $f(\cdot,\omega)\in \mathcal F (O)$ for all $\omega\in \Omega$.

Finally, note that $f\sim \GP(\mu,k)$ having samples in $\mathcal F(O)$ is equivalent to $\tilde f+\mu$ having samples in $\mathcal F(O)$, where $\tilde f\sim \GP(0,k)$. For this it suffices that $\mu\in \mathcal F(O)$ and $\tilde f$ has samples in $\mathcal F(O)$. Therefore, in what follows, we will consider only centered GPs.

Thus we are studying sample path regularity of GPs \emph{from the covariance kernel}, and our goal is to link the regularity of the kernel to the regularity of the GP sample paths.

\subsection{Setup}
We study GPs defined on open subsets of Euclidean spaces: $O\subset \R^d$ is open, for some $d\in\N$, and $f\sim \GP(0, k)$ is a centered GP on $O$.
An important special case is when $k$ is stationary, i.e.~$O=\R^d$ and there is an even function $k_\delta \colon \R^d \to \R$ such that
\begin{equation}
    k(\bm x, \bm y) = k_\delta (\bm x-\bm y)
\end{equation}
for all $\bm x, \bm y \in \R^d$.
A further special case of $k$ stationary is when $k$ is isotropic, i.e.~$O=\R^d$ and there is an even function $k_r \colon \R \to \R$ such that
\begin{equation}
    k(\bm x, \bm y) = k_r (\|\bm x-\bm y\|)
\end{equation}
for all $\bm x, \bm y \in \R^d$, where $\|\cdot \|$ denotes the standard Euclidean norm on $\R^d$.

\section{Hölder Regularity}
The main type of sample path regularity we would like to investigate is continuity and the order of continuous differentiability. These are encapsulated in the framework of \emph{Hölder regularity}. \par
We choose $O$ to be an open set in $\R^d$ in order to capture differentiability of the sample paths. However this implies that Hölder spaces on $O$ are inconvenient for our purposes, as they do not solely characterise local regularity properties of functions on $O$. Indeed, the Hölder spaces not only constrain the local behaviour of functions, but also their behaviour at infinity, or near the boundary of $O$. To retain solely local constraints we employ local Hölder spaces.
\begin{definition}[Local Hölder and almost-Hölder spaces]\label{def: holder}
    Let $n \in \N_0$ and $\gamma \in [0,1]$.
    \begin{enumerate}[label=(\arabic*)]
        \item The local Hölder space $C^{n,\gamma}_{\loc}(O)$ is the space of functions $f$ on $O$ for which $\partial^{\bm \alpha} f$ exists for all multi-indices $\bm \alpha = (\alpha_1,\dots,\alpha_d)\in \N_0^d$ with $|\bm\alpha| := \alpha_1+\dots+\alpha_d\leq n$, and such that the highest order partial derivatives satisfy a Hölder condition of the form: for all compact subsets $K \subset O$ there is a constant $C_K>0$ such that
            $$|\partial ^{\bm \alpha} f(\bm x) -\partial ^{\bm \alpha} f(\bm y)| \leq C_K\|\bm x-\bm y\|^\gamma$$
            for all $\bm x,\bm y \in K$ and $|\bm\alpha|=n$.
    \item The local almost-Hölder space $C^{(n+\gamma)^-}_{\loc}(O)$ is defined as $\bigcap_{n'+\gamma'<n+\gamma}C^{n',\gamma'}_{\loc}(O)$.
    \end{enumerate}
\end{definition}
\begin{remark}
    For $n,n'\in \N_0$ and $\gamma,\gamma'\in [0,1]$, $n'+\gamma' < n +\gamma$ implies $C^{n',\gamma'}_{\loc}(O)\supsetneq C^{(n+\gamma)^-}_{\loc}(O)\supsetneq C^{n,\gamma}_{\loc}(O)$. Moreover $C^n(O)=C^{n,0}_{\loc}(O)$.

    The key point in the definition of $C^{(n+\gamma)^-}_{\loc}(O)$ is the strict inequality $n'+\gamma'<n+\gamma$, since allowing for equality we have $\bigcap_{n'+\gamma'\leq n+\gamma}C^{n',\gamma'}_{\loc}(O)=C^{n,\gamma}_{\loc}(O)$.
\end{remark}
As will become clear, the local almost-Hölder spaces are natural for characterising GP sample path regularity.

To describe the regularity of the covariance kernel $k$, we will require derivatives of the form $\partial^{\bm \alpha,\bm \beta}k$. Here $\partial^{\bm \alpha,\bm \beta}$ stands for the application of $\partial^{\bm\alpha}$ with respect to the first variable, followed by the application of $\partial^{\bm\beta}$ with respect to the second variable. In fact the order in which they are applied will not matter, by continuity of these partial derivatives. We moreover write $C^{n\otimes n}(O\times O)$ for the space of functions $k$ on $O\times O$ for which $\partial^{\bm \alpha,\bm \beta}k$ exists and is continuous for all $|\bm\alpha|,|\bm\beta|\leq n$.

Also recall the ``big O'' notation: the functions $f,g\colon\R^d\setminus \{\bm 0\}\to \R$ satisfy $f(\bm h) = \mathcal O(g(\bm h))$ as $\bm h\to \bm 0$ if and only if there is $C>0$ such that $\limsup_{\bm h\to\bm 0} \left|f(\bm h)/g(\bm h)\right| \leq C$. Moreover, for a family of functions $(f_{\bm x} )_{\bm x\in O}$, we say $f_{\bm x}(\bm h) = \mathcal O(g(\bm h))$ as $\bm h\to \bm 0$ locally uniformly in $\bm x\in O$ if, for every compact subset $K\subset O$, $C>0$ may be chosen independently of $\bm x\in K$.

We are now in a position to state the main result of this paper.
\begin{restatable}[Sample path Hölder regularity]{theorem}{thmholder}
\label{thm: holder}
Let $n\in\N_0$ and $\gamma\in (0,1]$.
The process $f \sim \GP(0, k)$ has samples in $C^{(n+\gamma)^-}_{\loc}(O)$ if and only if,
\begin{enumerate}[label=(\arabic*)]
    \item \label{thm holder: general case} for general $k$,
        \begin{itemize}
            \item $k\in C^{n\otimes n}(O\times O)$,
            \item $|\partial^{\bm \alpha,\bm \beta} k(\bm x+\bm h,\bm x+\bm h)-\partial^{\bm \alpha,\bm \beta}k(\bm x+\bm h,\bm x)-\partial^{\bm \alpha,\bm\beta}k(\bm x,\bm x+\bm h)+\partial^{\bm \alpha,\bm \beta}k(\bm x,\bm x)| = \mathcal{O}(\|\bm h\|^{2\epsilon})$ \\
            as $\bm h \to \bm 0$, locally uniformly in $\bm x\in O$, for all $\epsilon \in (0,\gamma)$ and $|\bm \alpha|=|\bm\beta|=n$.
        \end{itemize}
    \item \label{thm holder: stationary case} for stationary $k(\bm x, \bm y) = k_\delta(\bm x - \bm y)$,
        \begin{itemize}
            \item $k_\delta\in C^{2n}(\R^d)$,
            \item $|\partial^{\bm\alpha}k_\delta(\bm h)-\partial^{\bm\alpha}k_\delta(\bm 0)| = \mathcal{O}(\|\bm h\|^{2\epsilon})$ as $\bm h \to \bm 0$ for all $\epsilon \in(0,\gamma)$ and $|\bm \alpha|=2n$.
        \end{itemize}
    \item \label{thm holder: isotropic case} for isotropic $k(\bm x, \bm y) = k_r(\| \bm x - \bm y \|)$,
        \begin{itemize}
            \item $k_r\in C^{2n}(\R)$,
            \item $|k_r^{(2n)}(h)-k_r^{(2n)}(0)| = \mathcal{O}(|h|^{2\epsilon})$ as $h \to 0$ for all $\epsilon \in (0, \gamma)$.
        \end{itemize}
\end{enumerate}
In each case, differentiating sample-wise we have $\partial^{\bm \alpha} f\sim \GP(0, \partial^{\bm \alpha,\bm\alpha} k)$ for all $|\bm\alpha|\leq n$.
\end{restatable}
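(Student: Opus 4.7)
The plan is to prove the biconditional by splitting into the forward implication (kernel conditions $\Rightarrow$ sample regularity) and the reverse implication (sample regularity $\Rightarrow$ kernel conditions), establishing the general case \ref{thm holder: general case} first and then deducing \ref{thm holder: stationary case} and \ref{thm holder: isotropic case} as corollaries via chain-rule identities. The backbone of the whole argument is \citet{azmoodeh_necessary_2014}'s converse to the Kolmogorov continuity theorem for Gaussian processes, which characterises Hölder regularity of paths via the second moment of kernel increments.

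For the forward direction of \ref{thm holder: general case}, I would induct on $n$. For $n=0$, the hypothesis is exactly that $\mathbb{E}[(f(\bm{x}+\bm{h})-f(\bm{x}))^2]=\mathcal{O}(\|\bm{h}\|^{2\epsilon})$ locally uniformly for all $\epsilon<\gamma$; by Gaussianity every $L^p$-moment is controlled by the $L^2$-moment, so Azmoodeh's theorem yields a modification with samples in $C^{(0+\gamma)^-}_{\mathrm{loc}}(O)$. For $n\geq 1$, the key step is to construct jointly Gaussian processes $\{f_{\bm{\alpha}}\}_{|\bm{\alpha}|\leq n}$ with covariance structure $\mathrm{Cov}(f_{\bm{\alpha}}(\bm{x}),f_{\bm{\beta}}(\bm{y}))=\partial^{\bm{\alpha},\bm{\beta}}k(\bm{x},\bm{y})$, which is positive semi-definite since it is a limit of PSD matrices of difference quotients. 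Assumption $k\in C^{n\otimes n}(O\times O)$ combined with the $\mathcal{O}(\|\bm{h}\|^{2\epsilon})$ increment bound on the top-order derivatives gives, via the base case applied to each $f_{\bm{\alpha}}$, continuous paths for all $f_{\bm{\alpha}}$. A standard argument (cf.\ \cite{scheuerer_regularity_2010,adler_random_2007}) then shows that difference quotients of $f_{\bm{\alpha}}$ converge to $f_{\bm{\alpha}+\bm{e}_i}$ not merely in mean square but uniformly on compacts almost surely, which upgrades $f_{\bm{0}}=f$ to a pathwise $C^n$ version with $\partial^{\bm{\alpha}}f=f_{\bm{\alpha}}$. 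Finally, applying the $n=0$ case to each $f_{\bm{\alpha}}$ with $|\bm{\alpha}|=n$, and noting that Hölder regularity of all top-order derivatives implies membership in $C^{(n+\gamma)^-}_{\mathrm{loc}}(O)$, completes the step.

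For the reverse direction, if $f$ has samples in $C^{(n+\gamma)^-}_{\mathrm{loc}}(O)$ then pathwise $\partial^{\bm{\alpha}}f$ exists for $|\bm{\alpha}|\leq n$ and is itself a centered Gaussian process. Passing to the limit inside the covariance (justified by uniform integrability of Gaussian difference quotients on compacts via Fernique's theorem, since the Hölder seminorm on a compact $K$ is a Gaussian-measurable finite random variable with all moments) shows that $\partial^{\bm{\alpha},\bm{\beta}}k$ exists and equals $\mathrm{Cov}(\partial^{\bm{\alpha}}f(\bm{x}),\partial^{\bm{\beta}}f(\bm{y}))$; continuity of this covariance yields $k\in C^{n\otimes n}(O\times O)$. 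For $|\bm{\alpha}|=|\bm{\beta}|=n$, a local Hölder bound of exponent $\epsilon<\gamma$ on $\partial^{\bm{\alpha}}f$ with Fernique-integrable constant gives $\mathbb{E}[(\partial^{\bm{\alpha}}f(\bm{x}+\bm{h})-\partial^{\bm{\alpha}}f(\bm{x}))(\partial^{\bm{\beta}}f(\bm{x}+\bm{h})-\partial^{\bm{\beta}}f(\bm{x}))]=\mathcal{O}(\|\bm{h}\|^{2\epsilon})$ by Cauchy--Schwarz, and this quantity expands exactly into the four-term kernel increment in the statement.

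The stationary and isotropic reductions should follow from the identities $\partial^{\bm{\alpha},\bm{\beta}}k(\bm{x},\bm{y})=(-1)^{|\bm{\beta}|}\partial^{\bm{\alpha}+\bm{\beta}}k_\delta(\bm{x}-\bm{y})$ and, by a Faà di Bruno computation, a reduction of the isotropic case to the single scalar function $k_r$ evaluated along rays. The four-term kernel increment for $|\bm{\alpha}|=|\bm{\beta}|=n$ collapses to a symmetric second difference of $\partial^{2\bm{\alpha}}k_\delta$ at $\bm{0}$, which by a routine algebraic lemma (writing the second difference as an integral of first differences) is $\mathcal{O}(\|\bm{h}\|^{2\epsilon})$ iff $|\partial^{2\bm{\alpha}}k_\delta(\bm{h})-\partial^{2\bm{\alpha}}k_\delta(\bm{0})|$ is, giving \ref{thm holder: stationary case}; the isotropic case follows analogously. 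I expect the main obstacle to be the rigorous upgrade from mean-square to almost-sure pathwise differentiability under the minimal assumptions stated: the joint construction of $\{f_{\bm{\alpha}}\}$ and the verification that $f_{\bm{\alpha}}$ genuinely equals the pathwise derivative of $f_{\bm{\alpha}-\bm{e}_i}$, rather than merely a mean-square derivative, is the one subtle point that cannot be bypassed purely by Azmoodeh's theorem.
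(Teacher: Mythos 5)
Your skeleton matches the paper's: Azmoodeh's converse Kolmogorov theorem as the $n=0$ engine, induction on the order of differentiation for the forward direction, covariance-of-increments plus integrability of the pathwise Hölder constant for the reverse direction, and chain-rule identities to reduce the stationary and isotropic cases. The reverse implication as you sketch it is essentially the paper's argument (the paper uses the mean value theorem plus the moment bound from Azmoodeh's Theorem 1 to get $L^2$ convergence of difference quotients, then reads off $\partial^{\bm\alpha,\bm\beta}k$ as a covariance). However, two steps you wave at are genuine gaps. First, the upgrade from mean-square to pathwise differentiability: you correctly flag it as the crux, but ``difference quotients converge uniformly on compacts almost surely'' is not a standard fact and is not how the cited literature proceeds. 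The paper closes this via \citet[Theorem 3.2]{potthoff_sample_2010}: once the mean-square derivative $\partial^{\bm e_i}_{ms}f\sim\GP(0,\partial^{\bm e_i,\bm e_i}k)$ is shown (Lemma \ref{lem extrapol}) to have a \emph{continuous} modification --- which follows from applying the $n=0$ case to the kernel $\partial^{\bm e_i,\bm e_i}k$ --- Potthoff's theorem yields a modification of $f$ that is pathwise differentiable with $\partial^{\bm e_i}f=\partial^{\bm e_i}_{ms}f$ a.s. Your plan needs this (or an equivalent fundamental-theorem-of-calculus argument) spelled out; as written the induction does not close.

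Second, the isotropic case does not ``follow analogously'' from the stationary one. The direction requiring work is showing that $k_r\in C^{2n}(\R)$ with a one-dimensional Hölder condition at $0$ implies $k_\delta=k_r(\|\cdot\|)\in C^{2n}(\R^d)$ with the multivariate condition at $\bm 0$; a Fa\`a di Bruno computation breaks down precisely there because $\|\cdot\|$ is not differentiable at the origin. The paper handles this by subtracting from $k_r$ its even Taylor polynomial (a polynomial in $\|\bm x\|^2$, hence smooth in $\bm x$), controlling the derivatives of the remainder composed with $\|\cdot\|$ via a separate induction (Lemma \ref{lem norm concat}), and using the Lagrange remainder to transfer the Hölder bound; evenness of $k_r$ is essential. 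Relatedly, your reduction of the four-term increment to a ``symmetric second difference'' whose $\mathcal O(\|\bm h\|^{2\epsilon})$ bound is ``equivalent'' to that of the first difference is not a valid general principle (odd perturbations kill second differences but not first differences); it is true here only because $\partial^{\bm\alpha+\bm\beta}k_\delta$ is even for $|\bm\alpha+\bm\beta|=2n$, which makes the second difference literally equal to twice the first difference (Equation (\ref{eq: identity diag vs sing})). Your joint construction of the family $\{f_{\bm\alpha}\}$ up front, rather than the paper's one-derivative-at-a-time induction through Lemma \ref{lem extrapol}, is a legitimate variation, but it does not circumvent either of the two gaps above.
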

\noindent The \hyperref[proof: mainthm]{proof} can be found in \hyperref[app: holder thm]{Appendix A}.
\begin{remark}\label{rmk: kernel regularity}
    The positive definiteness of $k$ allows us to extrapolate regularity: $k$, $k_\delta$ or $k_r$ have no more derivatives around the diagonal $\{(\bm x,\bm x)\in O\times O: \bm x\in O\}$, $\bm 0$ or $0$ respectively as they do on the rest of the domain (see Lemma \ref{lem extrapol}). Therefore, when applying Theorem \ref{thm: holder}, it suffices to check the existence of the respective derivatives around the diagonal, $\bm 0$ or $0$ respectively.

    Another way to see this is that, to characterise any \emph{local} sample path regularity property, information about $k$ in a neighbourhood of the diagonal must be sufficient as it must suffice to consider the covariances at arbitrarily close by points.
\end{remark}
Theorem \ref{thm: holder} combines results about sample Hölder continuity of GPs \cite{azmoodeh_necessary_2014} (or the more recent \cite{nummi_necessary_2024} for the multi-dimensional case) to results about sample differentiability of GPs \cite{potthoff_sample_2010}, applied inductively on the partial derivatives. It provides necessary and sufficient conditions for $f$ to have almost-Hölder continuous samples of a certain degree. Let us note however that it does not quite give us necessary conditions for $f$ to have samples in $C^n(O)$, as the latter is not an almost-Hölder space. To achieve this, we can adapt the proof of Theorem \ref{thm: holder} combining necessary and sufficient conditions for sample continuity of GPs with the results about sample differentiability from \cite{potthoff_sample_2010} (see \cite[Theorem 5.3.16]{scheuerer_comparison_2010} for a result of this flavour). Characterising necessary and sufficient conditions on the kernel for sample continuity of the GP is much more involved than almost-Hölder continuity, and therefore we do not expand our results to this setting (see \cite{fernique_regularite_1975} for the stationary case, \cite{talagrand_regularity_1987} for the general case). Moreover we demonstrate in the examples below that Theorem \ref{thm: holder} is all we need.

In the following we demonstrate how Theorem \ref{thm: holder} can be applied in practice.
We investigate the sample path regularity of the most widely used GP families, recovering known results as well as proving novel ones.

The reverse implications in Theorem \ref{thm: holder} allow us to obtain sharp sample path regularity characterisations in the following sense: $f$ is said to have samples in $C^{(n+\gamma)^-}_{\loc}(O)$ \emph{and no more}, if $f$ is not sample $C^{(n'+\gamma')^-}_{\loc}(O)$ for any $n'+\gamma'>n+\gamma$.

\subsection{Wiener Kernel}
The Wiener process is a centered GP on  $O = \R_{>0}$ with covariance kernel
$$k(x,y) = \min(x,y)$$
for $x,y\in \R_{>0}$. $\min(\cdot,\cdot)$ is Lipschitz but non-differentiable in both its arguments. So by \thmitemref{Theorem}{thm: holder}{thm holder: general case} the Wiener process has samples in $C^{\nicefrac{1}{2}^-}_{\loc}(\R_{>0})$ and no more.
\begin{remark}
    By the fundamental theorem of calculus, the $n$-times integrated Wiener process \cite[Section B]{schober_probabilistic_ode_2014} has samples in $C^{(n+\nicefrac{1}{2})^-}_{\loc}(\R_{> 0})$ and no more.
\end{remark}

\subsection{Matérn Kernels}
The Matérn kernels are isotropic kernels on $\R^d$ given by
\begin{equation}\label{eq: matern kernel}
k(\bm x,\bm y) = k_r(\|\bm x-\bm y\|) = \frac{2^{1-\nu}}{\Gamma(\nu)}\left(\sqrt{2\nu}\|\bm x-\bm y\|\right)^\nu K_\nu\left(\sqrt{2\nu}\|\bm x-\bm y\|\right)
\end{equation}
for $\bm x,\bm y\in\R^d$, where $K_\nu$ is the modified Bessel function of the second kind and $\nu >0$ is the smoothness parameter. The cases where $\nu = n+\nicefrac{1}{2}$ for some $n\in\N_0$ are particularly interesting because the expression in Equation (\ref{eq: matern kernel}) simplifies to a product of a polynomial and an exponential of the radial distance \cite[Equation 4.16]{rasmussen_gaussian_2005}.
\begin{proposition}\label{prop: matern}
    A centered Matérn GP with smoothness parameter $\nu>0$ has samples in $C^{\nu^-}_{\loc}(O)$ and no more. In particular, if $\nu = n+\nicefrac{1}{2}$ for some $n\in\N_0$, the GP has samples in $C^{(n+\nicefrac{1}{2})^-}_{\loc}(\R^d)$ and no more.
\end{proposition}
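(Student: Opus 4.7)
The plan is to apply the isotropic case \thmitemref{Theorem}{thm: holder}{thm holder: isotropic case} to the Matérn radial profile. As $K_\nu$ is real-analytic on $(0,\infty)$, $k_r$ is smooth away from $0$, so by Remark \ref{rmk: kernel regularity} its local regularity is entirely controlled by its behaviour at $r=0$. I would begin by extracting an explicit expansion there from the standard power series of the modified Bessel function. For $\nu\notin\N$, combining $K_\nu = \tfrac{\pi}{2\sin(\nu\pi)}(I_{-\nu}-I_\nu)$ with the power series of $I_{\pm\nu}$ yields
\begin{equation*}
    z^\nu K_\nu(z) = A(z^2) + z^{2\nu}\,B(z^2)
\end{equation*}
with $A,B$ real-analytic near $0$ and $B(0)\neq 0$; substituting $z = \sqrt{2\nu}\,|r|$ and absorbing constants gives
\begin{equation*}
    k_r(r) = \tilde A(r^2) + |r|^{2\nu}\,\tilde B(r^2), \qquad \tilde B(0) \neq 0.
\end{equation*}
For integer $\nu$, the analogous expansion carries an additional factor $\log(z)$ multiplying the $z^{2\nu}$ term.

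Writing $\nu = N+\theta$ with $N\in\N_0$ and $\theta\in(0,1]$, differentiating the expansion $2N$ times I would conclude that $k_r\in C^{2N}(\R)$ and that, in the non-integer case,
\begin{equation*}
    k_r^{(2N)}(h) - k_r^{(2N)}(0) = c_{N,\nu}\,\tilde B(0)\,|h|^{2\theta} + \mathcal O(h^2)
\end{equation*}
for an explicit nonzero constant $c_{N,\nu}$; in the integer case $\theta=1$, the logarithmic factor produces $\mathcal O(h^2|\log|h||)$ instead. Either way, the second bullet of \thmitemref{Theorem}{thm: holder}{thm holder: isotropic case} holds for every $\epsilon \in (0,\theta)$, and the sufficient direction delivers samples in $C^{(N+\theta)^-}_{loc}(\R^d) = C^{\nu^-}_{loc}(\R^d)$.

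For the sharpness (``and no more''), I would invoke the necessary direction of \thmitemref{Theorem}{thm: holder}{thm holder: isotropic case}: fix $(n',\gamma')$ with $n'+\gamma'>\nu$. Either $n'=N$ and $\gamma'>\theta$, in which case the leading term $|h|^{2\theta}$ (or $h^2\log|h|$ when $\theta=1$) prevents the required $\mathcal O(|h|^{2\epsilon})$ estimate for some $\epsilon\in(\theta,\gamma')$; or $n'\geq N+1$, in which case one further differentiation of $|r|^{2\theta}$ (respectively $r^2\log|r|$) produces a term that diverges or is discontinuous at $0$, so $k_r\notin C^{2N+2}$. The particular case $\nu = n+\tfrac{1}{2}$ follows by setting $\theta=\tfrac 1 2$.

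The main technical effort is establishing the two-term expansion of $k_r$ near $0$ and confirming the nonvanishing of its leading coefficient uniformly in $\nu$, carefully distinguishing integer and non-integer $\nu$. Once that expansion is in hand, the application of Theorem \ref{thm: holder} reduces to elementary differentiation.
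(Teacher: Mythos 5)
Your proposal is correct and takes essentially the same route as the paper's proof: both use the identity $K_\nu = \frac{\pi}{2\sin(\nu\pi)}(I_{-\nu}-I_\nu)$ (and its logarithmic analogue for integer $\nu$) to decompose $k_r$ near $0$ into a smooth even part plus $|r|^{2\nu}$ (respectively $r^{2\nu}\log|r|$) times a smooth factor nonvanishing at the origin, and then apply both directions of \thmitemref{Theorem}{thm: holder}{thm holder: isotropic case}. The only difference is presentational: you spell out the sharpness step and the nonvanishing of the leading coefficient somewhat more explicitly than the paper does.
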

This follows from \thmitemref{Theorem}{thm: holder}{thm holder: isotropic case} and the proof can be found in \hyperref[app: proofs]{Appendix B} \ref{proof: matern}.
\begin{remark}
    Proposition \ref{prop: matern} also covers the (multivariate) Ornstein-Uhlenbeck process, the centered GP with covariance kernel given by $k(\bm x,\bm y) = \exp(-\|\bm x-\bm y\|)$ which corresponds to the Matérn GP with $\nu =\nicefrac{1}{2}$. So the (multivariate) Ornstein-Uhlenbeck process has samples in $C^{\nicefrac{1}{2}^-}_{\loc}(\R^d)$ and no more.
\end{remark}

\subsection{Wendland Kernels}
The Wendland kernels are isotropic kernels which are compactly supported and piecewise polynomial in the radial distance. On $\R^d$, they are defined as follows \cite[Definition 9.11]{wendland_scattered_2004}:
$$k(\bm x,\bm y) = k_r(\|\bm x-\bm y\|) = \mathcal I^n \phi_{\lfloor \nicefrac{d}{2} \rfloor+n+1}(\|\bm x-\bm y\|)$$
for $\bm x,\bm y\in\R^d$, where $\phi_j(\rho) := \max((1-\rho)^j,0)$ and $\mathcal I\phi (\rho) := \int_\rho^\infty t\phi(t)\,dt/ \int_0^\infty t\phi(t)\,dt$ for $\rho\geq0$. $n\in \N_0$ is a parameter controlling the degree of the polynomial (precisely, $k_r(x)$ has degree $\lfloor \nicefrac{d}{2}\rfloor +3n+1$ in $\rho$ around 0).
\begin{proposition}\label{prop: wendland}
    \sloppy A centered Wendland GP with degree parameter $n$ has samples in $C^{(n+\nicefrac{1}{2})^-}_{\loc}(O)$ and no more.
\end{proposition}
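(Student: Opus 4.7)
The plan is to apply \thmitemref{Theorem}{thm: holder}{thm holder: isotropic case} with smoothness index $n$ and Hölder exponent $\gamma = \nicefrac{1}{2}$, verifying both the sufficient and the necessary directions. The essential input is the following sharp Wendland regularity: writing $k_r(\rho)=p(|\rho|)$ near $\rho=0$ with $p$ the polynomial representing $\mathcal I^n\phi_{\lfloor d/2\rfloor+n+1}$ on $[0,1]$, the odd Taylor coefficients of $p$ at $0$ vanish up to order $2n-1$, while the coefficient $a_{2n+1}$ of $\rho^{2n+1}$ is nonzero. Equivalently, $k_r \in C^{2n}(\R)\setminus C^{2n+1}(\R)$, and $k_r^{(2n)}$ is polynomial, hence locally Lipschitz, near $0$.

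I would obtain these coefficient facts by induction on $n$ from the identity $(\mathcal I\psi)'(\rho) = -\rho\psi(\rho)/c_\psi$, where $c_\psi = \int_0^\infty t\psi(t)\,dt$. If $\psi(\rho) = \sum_j a_j\rho^j$ near $0$, integrating this identity gives $\mathcal I\psi(\rho)-\mathcal I\psi(0) = -\sum_j \frac{a_j}{(j+2)c_\psi}\rho^{j+2}$, so each application of $\mathcal I$ shifts the index of the lowest non-vanishing odd Taylor coefficient up by two. Since $\phi_\ell(\rho) = (1-\rho)^\ell$ has lowest odd coefficient $-\ell \neq 0$ at index $1$, after $n$ iterations the lowest non-vanishing odd coefficient of $\mathcal I^n\phi_\ell$ sits at index $2n+1$, as claimed.

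The sufficient direction is then immediate: the local Lipschitz property of $k_r^{(2n)}$ yields $|k_r^{(2n)}(h)-k_r^{(2n)}(0)| = O(|h|) \subseteq O(|h|^{2\epsilon})$ for every $\epsilon \in (0,\nicefrac{1}{2})$, so \thmitemref{Theorem}{thm: holder}{thm holder: isotropic case} delivers samples in $C^{(n+\nicefrac{1}{2})^-}_{loc}(\R^d)$.

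For the ``no more'' claim I would rule out $n'+\gamma' > n + \nicefrac{1}{2}$ in two cases. If $n'\geq n+1$, the necessary direction requires $k_r\in C^{2n'}(\R) \subseteq C^{2n+2}(\R)$, contradicting $k_r\notin C^{2n+1}(\R)$. If $n'=n$ and $\gamma'>\nicefrac{1}{2}$, it requires $|k_r^{(2n)}(h)-k_r^{(2n)}(0)| = O(|h|^{2\epsilon})$ for some $\epsilon>\nicefrac{1}{2}$; but the Taylor expansion gives $k_r^{(2n)}(h)-k_r^{(2n)}(0) = (2n+1)!\,a_{2n+1}\,h + O(h^2)$, which is strictly of order $|h|$ by the non-vanishing of $a_{2n+1}$, forbidding any bound with $2\epsilon>1$. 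The main obstacle in this plan is concentrated in the coefficient computation above: once the non-vanishing of $a_{2n+1}$ is secured, everything else follows routinely from Theorem \ref{thm: holder}.
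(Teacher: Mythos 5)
Your proof is correct and follows essentially the same route as the paper: both reduce the claim to the fact that the polynomial representation of $k_r$ near $0$ has vanishing odd Taylor coefficients up to degree $2n-1$ and a nonvanishing coefficient at degree $2n+1$, and then apply \thmitemref{Theorem}{thm: holder}{thm holder: isotropic case} in both directions. The only difference is that the paper cites this coefficient fact from \citet[Theorem 9.12]{wendland_scattered_2004}, whereas you derive it with a short (and correct) induction on the operator $\mathcal I$.
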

This follows from \thmitemref{Theorem}{thm: holder}{thm holder: isotropic case} and the proof can be found in \hyperref[app: proofs]{Appendix B} \ref{proof: wendland}.
\subsection{Smooth Kernels}
From Theorem \ref{thm: holder} immediately follows the following important corollary:
\begin{corollary}
The process $f\sim \GP(0,k)$ has samples in $C^\infty(O)$ if and only if,
    \begin{enumerate}[label=(\arabic*)]
        \item for general $k$, $k\in C^\infty(O\times O)$.
        \item for stationary $k(\bm x,\bm y)=k_\delta(\bm x-\bm y)$, $k_\delta\in C^\infty (\R^d)$.
        \item for isotropic $k(\bm x,\bm y)=k_r(\|\bm x-\bm y\|)$, $k_r\in C^\infty (\R)$.
    \end{enumerate}
\end{corollary}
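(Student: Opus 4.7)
The plan is to obtain this corollary as an immediate consequence of Theorem~\ref{thm: holder} by taking the intersection over all $n \in \N_0$. First I would observe that, since $C^{n+1}(O) \subseteq C^{(n+1)^-}_{loc}(O) \subseteq C^n(O)$, one has $C^\infty(O) = \bigcap_{n \in \N_0} C^{(n+1)^-}_{loc}(O)$. Hence sample paths lie in $C^\infty(O)$ iff they lie in $C^{(n+1)^-}_{loc}(O)$ for every $n$, and the task reduces to applying Theorem~\ref{thm: holder} with $(n, \gamma = 1)$ for each $n$.

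For the ``if'' direction, assuming $k$ is smooth, the regularity condition $k \in C^{n \otimes n}(O \times O)$ is trivial, and for $|\bm\alpha| = |\bm\beta| = n$ I would rewrite the rectangular difference in \thmitemref{Theorem}{thm: holder}{thm holder: general case} via two applications of the fundamental theorem of calculus as an iterated integral of the mixed derivative of $(\bm s, \bm t) \mapsto \partial^{\bm\alpha,\bm\beta}k(\bm x + \bm s, \bm x + \bm t)$ over $[0,\bm h] \times [0, \bm h]$. Continuity of this mixed derivative (guaranteed by $k \in C^\infty$) bounds it by $\mathcal O(\|\bm h\|^2)$ locally uniformly in $\bm x$, which is certainly $\mathcal O(\|\bm h\|^{2\epsilon})$ for any $\epsilon \in (0, 1]$. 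The stationary and isotropic cases proceed analogously via a single-variable Taylor expansion of $\partial^{\bm\alpha}k_\delta$ around $\bm 0$ and of $k_r^{(2n)}$ around $0$.

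For the ``only if'' direction, samples in $C^\infty(O)$ lie in $C^{(n+1)^-}_{loc}(O)$ for every $n$, so Theorem~\ref{thm: holder} yields $k \in C^{n \otimes n}(O \times O)$ (respectively $k_\delta \in C^{2n}(\R^d)$ or $k_r \in C^{2n}(\R)$) for every $n$; letting $n \to \infty$ and observing that any partial derivative on $O \times O$ splits as $\partial^{\bm\alpha, \bm\beta}$ for some $\bm\alpha,\bm\beta$ recovers the claimed smoothness of the kernel. The only subtlety I anticipate is the modification issue: Theorem~\ref{thm: holder} supplies, for each $n$, a separate modification with paths in $C^{(n+1)^-}_{loc}(O)$, while the statement asks for a single construction with $C^\infty$ sample paths everywhere. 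I would handle this by fixing any sample-continuous modification (unique up to a null set), noting that it must coincide almost surely with each $C^n$ modification and therefore has $C^\infty$ sample paths almost surely, and then redefining it on the exceptional null set. Beyond this bookkeeping, no substantive obstacle arises.
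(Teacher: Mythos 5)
Your proof is correct and follows exactly the route the paper intends: the paper states this corollary without proof as an ``immediate'' consequence of Theorem~\ref{thm: holder}, namely applying it with $\gamma=1$ for every $n\in\N_0$ and intersecting, using $C^\infty(O)=\bigcap_n C^{(n+1)^-}_{loc}(O)$ and $\bigcap_n C^{n\otimes n}(O\times O)=C^\infty(O\times O)$. The details you supply --- the double-integral bound showing a smooth kernel's rectangular increment is $\mathcal O(\|\bm h\|^2)$, and the gluing of the countably many modifications via a common continuous version --- are exactly the bookkeeping the paper leaves implicit.
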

\begin{remark}\label{rmk: smooth}
    The centered GPs with squared exponential ($k(\bm x,\bm y)=\exp(-\|\bm x-\bm y\|^2)$), rational quadratic ($k(\bm x,\bm y)= (1+\|\bm x-\bm y\|^2)^{-a}$ for some $a >0$) and periodic ($k(x,y)=\exp(-\sin(\pi(x-y))^2)$ for $x,y\in\R$) covariance kernels are therefore sample $C^\infty(\R^d)$.
\end{remark}
\subsection{Feature Kernels}\label{sec: feature kernel}
For any map $\bm\phi\colon O\to \R^m$ is associated a feature kernel
$$k(\bm x, \bm y) = \bm\phi(\bm x)^\top \bm\phi(\bm y)$$
for $\bm x,\bm y\in O$. From \thmitemref{Theorem}{thm: holder}{thm holder: general case} we obtain the following result:
\begin{proposition}\label{prop: feature}
    For $n\in\N_0$ and $\gamma\in(0,1]$, if $\phi_i\in C^{(n+\gamma)^-}_{\loc}(O)$ for all $1\leq i \leq m$ then $f\sim \GP(0,k)$ has samples in $C^{(n+\gamma)^-}_{\loc}(O)$.
\end{proposition}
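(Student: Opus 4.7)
The plan is to verify the conditions of \thmitemref{Theorem}{thm: holder}{thm holder: general case} for the feature kernel $k(\bm x,\bm y) = \sum_{i=1}^m \phi_i(\bm x)\phi_i(\bm y)$. First, I would observe that $C^{(n+\gamma)^-}_{loc}(O) \subset C^n(O)$ (since $n+0 < n+\gamma$), so all $\phi_i$ are $n$ times continuously differentiable. By linearity, for any multi-indices $\bm\alpha,\bm\beta$ with $|\bm\alpha|,|\bm\beta|\leq n$, the partial derivative $\partial^{\bm\alpha,\bm\beta}k(\bm x,\bm y) = \sum_i \partial^{\bm\alpha}\phi_i(\bm x)\,\partial^{\bm\beta}\phi_i(\bm y)$ exists and is continuous on $O\times O$. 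This gives $k\in C^{n\otimes n}(O\times O)$.

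The second condition requires bounding the mixed second difference. The key algebraic observation is that, writing $u_i := \partial^{\bm\alpha}\phi_i$ and $v_i := \partial^{\bm\beta}\phi_i$, the expression factorises as
\begin{align*}
&\partial^{\bm\alpha,\bm\beta}k(\bm x+\bm h,\bm x+\bm h)-\partial^{\bm\alpha,\bm\beta}k(\bm x+\bm h,\bm x)-\partial^{\bm\alpha,\bm\beta}k(\bm x,\bm x+\bm h)+\partial^{\bm\alpha,\bm\beta}k(\bm x,\bm x)\\
&\qquad=\sum_{i=1}^m\bigl(u_i(\bm x+\bm h)-u_i(\bm x)\bigr)\bigl(v_i(\bm x+\bm h)-v_i(\bm x)\bigr).
\end{align*}
Now for $|\bm\alpha|=|\bm\beta|=n$ and any $\epsilon\in(0,\gamma)$, the hypothesis $\phi_i\in C^{(n+\gamma)^-}_{loc}(O)\subset C^{n,\epsilon}_{loc}(O)$ means that $u_i$ and $v_i$ are locally $\epsilon$-Hölder. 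On any compact $K\subset O$, each difference is bounded by $C_{i,K}\|\bm h\|^\epsilon$ uniformly in $\bm x\in K$ (for $\bm h$ small enough). Multiplying, summing over the finite index set, and taking the maximum of the finitely many constants yields the bound $\mathcal{O}(\|\bm h\|^{2\epsilon})$ locally uniformly in $\bm x$, for every $\epsilon\in(0,\gamma)$.

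Both conditions of \thmitemref{Theorem}{thm: holder}{thm holder: general case} are therefore satisfied, which yields the claim. There is no real obstacle here: the only thing to watch is the factorisation of the mixed difference (which is automatic from bilinearity of $(\bm x,\bm y)\mapsto u_i(\bm x)v_i(\bm y)$), and the observation that local Hölder constants can be made uniform over finitely many $i$ on any compact set.
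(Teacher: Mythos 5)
Your proof is correct and follows essentially the same route as the paper's: both compute $\partial^{\bm\alpha,\bm\beta}k$ via the feature map, factorise the mixed second difference into a sum of products of increments, and conclude from the local $\epsilon$-Hölder continuity of the highest-order derivatives of the $\phi_i$. The only cosmetic difference is that you bound the sum termwise over the finitely many features while the paper applies Cauchy--Schwarz to the vector of increments; these are interchangeable here.
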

The proof can be found in \hyperref[app: proofs]{Appendix B} \ref{proof: feature}.
\begin{remark}
    As a special case, linear kernels ($k(\bm x,\bm y) =\langle \bm x,\bm y\rangle$) and polynomial kernels ($k(\bm x,\bm y) = (1+\langle \bm x,\bm y\rangle)^m$ for some $m\in\N$) are feature kernels, with the feature maps being the coordinate maps or polynomials of the coordinate maps respectively. Therefore the associated centered GPs have samples in $C^\infty(\R^d)$.
\end{remark}
\begin{remark}
    Proposition \ref{prop: feature} could alternatively be obtained by the fact that such a feature GP has sample paths of the form $f(\cdot, \omega) = \sum_{i=1}^m \xi_i(\omega)\phi_i$ for some i.i.d.~standard Gaussian random variables $\xi_i\sim \mathcal N(0,1)$.
\end{remark}
\subsection{Kernel Algebra}\label{sec: kernel algebra}
The versatility of the GP framework in applications is largely due to the fact that covariance kernels can be recursively combined using algebraic operations like pointwise sums or products which allows for constructing highly flexible and complex prior models in a systematic and interpretable fashion.
It is therefore important to understand how the sample path regularity of a GP depends on the recursive structure of the covariance kernel.

In the following the $k_i$ are always assumed to be symmetric positive definite kernels. Note that in this section we work solely with general kernels, leveraging \thmitemref{Theorem}{thm: holder}{thm holder: general case}. It would then be straightforward to similarly leverage \thmitemref{Theorem}{thm: holder}{thm holder: stationary case} and \thmitemref{Theorem}{thm: holder}{thm holder: isotropic case} for stationary and isotropic kernels respectively.
\subsubsection{Conic Combinations}
Conic combinations of kernels are pointwise linear combinations of the kernel functions with positive weights, i.e.
$$k(\bm x, \bm y) = \sum_{i = 1}^m a_i k_i(\bm x, \bm y)$$
for $\bm x,\bm y\in O$, where $a_i >0$ for all $1\le i\le m$.
Sums of kernels and positive scaling of kernels are special cases of conic combinations with $a_i = 1$ and $m = 1$, respectively.
\begin{proposition}\label{prop: conic}
    For $n\in\N_0$ and $\gamma\in (0,1]$, if the $k_i$ satisfy the condition in \thmitemref{Theorem}{thm: holder}{thm holder: general case} for all $1\leq i\leq m$ then $f\sim \GP(0,k)$ has samples in $C^{(n+\gamma)^-}_{\loc}(O)$.
\end{proposition}
This follows from the fact that for $f\sim\GP(0,k)$, we can express the sample paths as $f(\cdot,\omega)= \sum_{i=1}^m \sqrt{a_i}f_i(\cdot,\omega)$ where $f_i\sim\GP(0,k_i)$ are independent, and that $C^{(n+\gamma)^-}_{\loc}(O)$ is a vector space.
\subsubsection{Products}
Define the product kernel
$$k(\bm x, \bm y) = \prod_{i=1}^m k_i(\bm x , \bm y)$$
for $\bm x,\bm y\in O$. $k$ is also positive definite \cite[Chapter 3 Theorem 1.12]{berg_harmonic_1984}.
\begin{proposition}\label{prop: product}
    For $n\in\N_0$ and $\gamma\in (0,1]$, if the $k_i$ satisfy the condition in \thmitemref{Theorem}{thm: holder}{thm holder: general case} for all $1\leq i\leq m$ then $f\sim \GP(0,k)$ has samples in $C^{(n+\gamma)^-}_{\loc}(O)$.
\end{proposition}
The proof can be found in \hyperref[app: proofs]{Appendix B} \ref{proof: product}.
\begin{remark}
    Combining Proposition \ref{prop: product} with Proposition \ref{prop: feature} we get that if $\tilde k$ satisfies the condition in \thmitemref{Theorem}{thm: holder}{thm holder: general case} for some $n\in\N_0$ and $\gamma\in (0,1]$, and $\phi\in C^{(n+\gamma)^-}_{\loc}(O)$, then defining $k(\bm x,\bm y) = \phi(\bm x)\tilde k(\bm x,\bm y)\phi(\bm y)$ for all $\bm x,\bm y\in O$, $f\sim \GP(0,k)$ has samples in $C^{(n+\gamma)^-}_{\loc}(O)$. We could equivalently prove this by noting that we can express the sample paths as $f(\cdot,\omega) = \phi(\cdot)\tilde f(\cdot,\omega)$ where $\tilde f\sim \GP(0,\tilde k)$.
\end{remark}
\subsubsection{Tensor Products}
Let $O_i \subset \R^{d_i}$ be open sets for $1\leq i\leq m$ and $O_1\times\dots\times O_m =: O \subset \R^d = \R^{d_1}\times\dots\times \R^{d_m}$. For $k_i \colon O_i\times O_i\to\R$ for $1\leq i\leq m$, the tensor product kernel $k\colon O\times O\to \R$, is given by
$$k(\bm x,\bm y) = (k_1\otimes\dotsb\otimes k_m)(\bm x, \bm y) := \prod_{i=1}^m k_i(\bm x_i , \bm y_i )$$
for $\bm x,\bm y\in \R^d$, where $\bm x_i, \bm y_i\in \R^{d_i}$ are the orthogonal projections of $\bm x$, $\bm y$ respectively onto $\R^{d_i}$.

GPs with tensor product covariance kernels are interesting as they allow \emph{spherically asymmetric} sample path regularity. We need the following definition to formalise this.
\begin{definition}
    Let $n_1,\dots,n_m\in \N_0$ and $\gamma_1,\dots,\gamma_m\in [0,1]$.
    \begin{enumerate}[label=(\arabic*)]
        \item $C^{(n_1,\gamma_1) \otimes \dotsb \otimes (n_m,\gamma_m)}_{\loc}(O_1\times\dotsb\times O_m)$ is the space of functions $f$ on $O$ for which $\partial^{\bm\alpha}f$ exists for all multi-indices $\bm\alpha\in\N_0^d$ with $|\bm\alpha_i|\leq n_i$, where $\bm\alpha_i\in \N_0^{d_i}$ is the projection of $\bm\alpha$ onto $\N_0^{d_i}$, and such that the highest order partial derivatives satisfy a Hölder condition of the form:
            for all compact subsets $K \subset O$ there is a constant $C_K>0$ such that
            \begin{equation*}
                |\partial^{\bm\alpha}f(\bm x) -\partial^{\bm\alpha}f(\bm y)| \leq C_K\|\bm x_i - \bm y_i\|^{\gamma_i}
            \end{equation*}
            for all $1\leq i\leq d$, $\bm x,\bm y\in K$ with $\bm x_j = \bm y_j$ for $j \neq i$, and $|\bm\alpha_i| = n_i$.\\
    \item $C^{(n_1+\gamma_1)^- \otimes \dotsb \otimes (n_m+\gamma_m)^-}_{\loc}(O_1\times\dotsb\times O_m) :=\bigcap_{n_i'+\gamma_i'<n_i+\gamma_i} C^{(n_1',\gamma_1') \otimes \dotsb \otimes (n_m',\gamma_m')}_{\loc}(O_1\times\dotsb\times O_m)$.
    \end{enumerate}
\end{definition}
The following proposition generalises Theorem \ref{thm: holder}.
\begin{proposition}\label{prop: tensor product} For $n_1,\dots,n_m\in \N_0$ and $\gamma_1,\dots,\gamma_m\in (0,1]$, $f\sim \GP(0,k)$ has samples in $C^{(n_1+\gamma_1)^-\otimes\dots\otimes(n_m+\gamma_m)^-}_{\loc}(O_1 \times \dotsb \times O_m)$ if and only if the $k_i$ satisfy the condition in \thmitemref{Theorem}{thm: holder}{thm holder: general case} with $n=n_i$ and $\gamma=\gamma_i$ for all $1\leq i\leq m$.
\end{proposition}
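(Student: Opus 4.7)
The plan is to reduce Proposition \ref{prop: tensor product} to the one-factor version, Theorem \ref{thm: holder}\ref{thm holder: general case}, by exploiting the multiplicative structure of the tensor product kernel. As in Theorem \ref{thm: holder}, I would separate the proof into a differentiability step and a Hölder regularity step for the top-order derivatives.

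\textbf{Forward direction.} Assume each $k_i$ satisfies Theorem \ref{thm: holder}\ref{thm holder: general case} with $n=n_i,\gamma=\gamma_i$. First I would iterate the differentiability half of Theorem \ref{thm: holder} coordinate by coordinate to obtain, for every $\bm\alpha$ with $|\bm\alpha_i|\le n_i$, a modification of $\partial^{\bm\alpha}f$ that is a GP with covariance $\partial^{\bm\alpha,\bm\alpha}k=\bigotimes_{i=1}^m\partial^{\bm\alpha_i,\bm\alpha_i}k_i$; the tensor structure is preserved by partial differentiation in a single factor, so this reduces to one-variable differentiability at each stage. For the top-level case $|\bm\alpha_i|=n_i$, I need an axis-wise Hölder condition on $\partial^{\bm\alpha}f$. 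The key algebraic observation is that for $\bm h$ supported only on axis $i$, the quadruple increment of $\partial^{\bm\alpha,\bm\alpha}k$ at $(\bm x,\bm x+\bm h)$ factors as
\[
\prod_{j\neq i}\partial^{\bm\alpha_j,\bm\alpha_j}k_j(\bm x_j,\bm x_j)\cdot\bigl[\text{quadruple increment of }\partial^{\bm\alpha_i,\bm\alpha_i}k_i\text{ along }\bm h_i\bigr],
\]
because for $j\neq i$ the $j$-th factor is evaluated at $(\bm x_j,\bm x_j)$ throughout and cancels out. By hypothesis this is $\mathcal O(\|\bm h_i\|^{2\epsilon_i})$ for any $\epsilon_i<\gamma_i$, locally uniformly. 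An anisotropic version of Azmoodeh's theorem, obtained by feeding these axis-wise estimates into an anisotropic Kolmogorov continuity argument (or equivalently by iterating the one-dimensional Azmoodeh argument along slices and gluing via joint measurability), then yields a single modification of $\partial^{\bm\alpha}f$ which is $\epsilon_i$-Hölder in $\bm x_i$ uniformly in the other coordinates on compacts, for every $i$.

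\textbf{Backward direction.} Suppose $f$ has samples in $C^{(n_1+\gamma_1)^-\otimes\dotsb\otimes(n_m+\gamma_m)^-}_{loc}(O)$. Fix $i$ and choose $\bm x_{\neq i}:=(\bm x_j)_{j\neq i}$ with $c_i:=\prod_{j\neq i}k_j(\bm x_j,\bm x_j)>0$ (if no such choice exists then some $k_j\equiv 0$, by positive definiteness and Cauchy--Schwarz, and the statement is trivial). The restriction of $f$ to the slice $\{\bm y:\bm y_j=\bm x_j\text{ for }j\neq i\}$ is a GP on $O_i$ with covariance kernel $c_i\cdot k_i$, and the tensor product regularity hypothesis puts its samples in $C^{(n_i+\gamma_i)^-}_{loc}(O_i)$. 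Applying the necessity direction of Theorem \ref{thm: holder}\ref{thm holder: general case} to this one-dimensional GP and dividing by $c_i>0$ yields the required condition on $k_i$.

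\textbf{Main obstacle.} The forward direction's anisotropic Azmoodeh/Kolmogorov step is the delicate one: the standard Kolmogorov theorem produces isotropic Hölder modifications, so getting a \emph{single} modification that is simultaneously Hölder of exponent $\epsilon_i$ along each axis $i$ requires either an anisotropic covering/chaining argument adapted to the metric $\sum_i\|\bm x_i-\bm y_i\|^{\epsilon_i}$, or an inductive slice-by-slice construction combined with a Fubini-style argument to guarantee joint measurability of the glued modification. Everything else is a routine factorisation of the tensor product kernel and bookkeeping on multi-indices.
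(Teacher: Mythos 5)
The paper gives no written proof of this proposition; it only remarks that the argument ``follows the same steps'' as the proof of Theorem \ref{thm: holder} in Appendix \ref{app: holder thm}, i.e.\ one reruns that inductive scheme (mean-square differentiation via Lemma \ref{lem extrapol}, the upgrade to sample derivatives via \citet[Theorem 3.2]{potthoff_sample_2010}, and Azmoodeh's theorem together with its converse) with anisotropic multi-index bookkeeping. Your forward direction is exactly that scheme, and your factorisation of the quadruple increment of $\partial^{\bm\alpha,\bm\alpha}k$ along a single block is the correct way to feed the hypotheses on the $k_i$ into it; the anisotropic Kolmogorov--Chentsov/Azmoodeh step you single out is indeed the only ingredient not already contained in Appendix \ref{app: holder thm} (uniformity of the axis-$i$ Hölder constant over the frozen coordinates on a compact set genuinely requires a multiparameter chaining argument rather than slice-by-slice gluing), and the paper supplies no more detail on it than you do. Your backward direction takes a genuinely different and cleaner route than the paper's remark suggests: rather than re-deriving mean-square differentiability and the converse Kolmogorov estimate for the tensor kernel and then disentangling the factors, you restrict to a slice, observe that the slice process is a GP on $O_i$ with kernel $c_i k_i$, and invoke the necessity half of Theorem \ref{thm: holder}\ref{thm holder: general case} directly --- a true reduction to the one-factor theorem, at the cost only of choosing frozen coordinates with $c_i>0$. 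One small correction: when some $k_j\equiv 0$ the statement is not ``trivial''; the ``only if'' direction actually fails there (take $k_j\equiv 0$ and $k_i$ irregular, so that $k\equiv 0$ has smooth samples while $k_i$ violates the condition), a degenerate case which the proposition itself silently excludes and which your parenthetical should acknowledge rather than dismiss.
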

The proof of Proposition \ref{prop: tensor product} follows the same steps as the one of Theorem \ref{thm: holder} in \hyperref[app: holder thm]{Appendix A}.
\begin{remark}
    We could generalise Theorem \ref{thm: holder} to allow for even more flexibility in the asymmetrical derivatives in Theorem \ref{thm: holder}. This can be done by defining Hölder spaces extending the spaces $C^A(O)$, where $A\subset \N_0^d$ is a downward closed set of multi-indices which specifies which partial derivatives exist \cite[Definition B.9]{pfortner_physics-informed_2023}.
\end{remark}
\subsubsection{Coordinate Transformations}\label{sec: coord}
For a map $\bm\varphi\colon O \to \R^m$ and a symmetric positive definite kernel $\tilde k$ defined on $\Im(\bm\varphi)\times\Im(\bm\varphi)$, we have a kernel
$$k(\bm x, \bm y) = \tilde k(\bm\varphi(\bm x), \bm\varphi(\bm y)).$$
\begin{proposition}\label{prop: coord}
    For $n\in\N_0$ and $\gamma,\delta\in (0,1]$, if $\tilde k$ satisfies the condition in \thmitemref{Theorem}{thm: holder}{thm holder: general case} and $\varphi_i\in C^{(n+\delta)^-}_{\loc}(O)$ for all $1\leq i\leq m$ then $f\sim \GP(0,k)$ has samples in $C^{(n+\gamma\delta)^-}_{\loc}(O)$.
\end{proposition}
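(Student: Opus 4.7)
The plan is to realise any sample of $\GP(0,k)$ as the composition of a sample of $\GP(0,\tilde k)$ with $\bm\varphi$, then reduce the proposition to a deterministic composition statement for local almost-Hölder spaces. First I would pick an open set $\tilde O\supset \Im(\bm\varphi)$ on which $\tilde k$ is defined (implicit in the hypothesis that $\tilde k$ satisfies \thmitemref{Theorem}{thm: holder}{thm holder: general case}) and let $\tilde f \sim \GP(0,\tilde k)$ on $\tilde O$. Setting $f(\bm x,\omega):=\tilde f(\bm\varphi(\bm x),\omega)$ yields a mean-zero Gaussian process with $\cov(f(\bm x,\cdot),f(\bm y,\cdot)) = \tilde k(\bm\varphi(\bm x),\bm\varphi(\bm y)) = k(\bm x,\bm y)$, so $f$ is a valid construction of $\GP(0,k)$. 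It thus suffices to show that $\tilde f(\cdot,\omega)\circ \bm\varphi \in C^{(n+\gamma\delta)^-}_{loc}(O)$ almost surely.

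Applying \thmitemref{Theorem}{thm: holder}{thm holder: general case} to $\tilde k$ gives $\tilde f(\cdot,\omega)\in C^{(n+\gamma)^-}_{loc}(\tilde O)$ almost surely, so the proposition reduces to a purely deterministic composition lemma: if $F\in C^{(n+\gamma)^-}_{loc}(\tilde O)$ and $\bm\varphi\colon O\to \tilde O$ has components in $C^{(n+\delta)^-}_{loc}(O)$, then $F\circ \bm\varphi\in C^{(n+\gamma\delta)^-}_{loc}(O)$. For $n=0$ this is immediate: on a compact $K\subset O$, composing a $(\gamma-\epsilon)$-Hölder function with a $(\delta-\epsilon')$-Hölder function yields a $(\gamma-\epsilon)(\delta-\epsilon')$-Hölder function, and letting $\epsilon,\epsilon'\to 0$ exhausts every exponent strictly below $\gamma\delta$.

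For $n\geq 1$ I would invoke Faà di Bruno's formula to write, for each $|\bm\alpha|=n$,
\begin{equation*}
\partial^{\bm\alpha}(F\circ \bm\varphi)(\bm x)=\sum c\,(\partial^{\bm\alpha'} F)(\bm\varphi(\bm x))\prod_{j=1}^{r}\partial^{\bm\alpha^{(j)}}\varphi_{i_j}(\bm x),
\end{equation*}
as a finite sum over partitions with $|\bm\alpha'|=r$, $|\bm\alpha^{(j)}|\geq 1$, and $\sum_j\bm\alpha^{(j)}=\bm\alpha$. Tracking the local Hölder exponents of each factor, $\partial^{\bm\alpha'}F$ is locally Lipschitz when $|\bm\alpha'|<n$ (since it admits one more continuous derivative) and is $(\gamma-\epsilon)$-Hölder when $|\bm\alpha'|=n$; likewise $\partial^{\bm\alpha^{(j)}}\varphi_{i_j}$ is locally Lipschitz when $|\bm\alpha^{(j)}|<n$ and is $(\delta-\epsilon')$-Hölder when $|\bm\alpha^{(j)}|=n$. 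Since $n\geq 1$ forces $\bm\varphi\in C^1(O,\R^m)$ and hence locally Lipschitz, composing $\partial^{\bm\alpha'}F$ with $\bm\varphi$ preserves its Hölder exponent; moreover the constraint $\sum|\bm\alpha^{(j)}|=n$ allows at most one $|\bm\alpha^{(j)}|=n$, which then forces $r=1$ and $|\bm\alpha'|=1$. Because a product of bounded locally Hölder functions is locally Hölder with the minimum exponent, each summand lies in $C^{0,\min(\gamma-\epsilon,\delta-\epsilon')}_{loc}(O)$, covering every exponent below $\min(\gamma,\delta)\geq \gamma\delta$. The main obstacle is the delicate case $n=1$, where neither first-order factor has an extra derivative to spare: one must handle $\partial_j F\circ\bm\varphi$ as a $(\gamma-\epsilon)$-Hölder function composed with the locally Lipschitz $\bm\varphi$, yielding exponent $\gamma-\epsilon$, and then its product with the $(\delta-\epsilon')$-Hölder factor $\partial_i\varphi_j$ produces exponent $\min(\gamma-\epsilon,\delta-\epsilon')\geq \gamma\delta-\eta$ for any $\eta>0$, as required.
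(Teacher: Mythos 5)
Your proof is correct and follows the same route as the paper: realise the samples of $\GP(0,k)$ as $\tilde f(\cdot,\omega)\circ\bm\varphi$ with $\tilde f\sim\GP(0,\tilde k)$, apply \thmitemref{Theorem}{thm: holder}{thm holder: general case} to $\tilde f$, and conclude by a deterministic composition lemma for local (almost-)Hölder spaces --- which the paper only states, and which you substantiate via Faà di Bruno. As a side benefit, your bookkeeping shows that for $n\geq 1$ the exponent can be taken to be $\min(\gamma,\delta)\geq\gamma\delta$, the product $\gamma\delta$ being the true loss only in the case $n=0$; this is consistent with, and slightly sharper than, the stated conclusion.
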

This follows from the fact that for $\tilde f\sim GP(0,\tilde k)$ we have $\tilde f\circ\bm\varphi\sim\GP(0,k)$, and that the composition of a $C^{n,\zeta}_{\loc}(O)$ with a $C^{n,\epsilon}_{\loc}(O)$ function is $C^{n,\epsilon\zeta}_{\loc}(O)$.
\subsection{Manifold Kernels}\label{sec: manifold}
We can generalise the results in the present work to manifold domains; instead of considering GPs on open subsets $O\subset \R^d$, we consider GPs on manifolds $M$. There exists two ways of constructing positive definite kernels, and hence GPs, on manifolds $M$.

\emph{Extrinsic kernels} are defined by viewing $M\subset \R^d$, taking a positive definite kernel $k$ on $\R^d$, and restricting it to $M$. This restriction is analogous to a coordinate transformation (Section \ref{sec: coord}), since $k|_{M\times M}(\bm x,\bm y) = k(\bm\iota(\bm x),\bm\iota(\bm y))$ for $\bm x,\bm y\in M$, where $\bm \iota: M\to \R^d$ is the inclusion map. Therefore, if the manifold is smooth, $k|_{M\times M}$ has the same regularity as $k$.

\emph{Intrinsic kernels} are constructed directly on the manifold. Examples of these include the intrinsic Matérn and heat kernels \cite{borovitskiy_matern_2020}, and the hyperbolic secant kernel \cite{da_costa_invariant_2023}. In this case, since the sample path regularity results in this work are all local, we may study sample path regularity in each coordinate patch $O\subset M$, treating them as open subsets of $\R^d$.

\section{Discussion on Sobolev Regularity}\label{sec: sobolev}
In this section we discuss how the regularity of the kernel affects the weak differentiability of the sample paths. Specifically, we consider the important $L^2$-Sobolev regularity. The result in this section, Theorem \ref{thm: sobolev}, is a direct consequence of \cite{scheuerer_regularity_2010} combined with our Lemma \ref{lem equiv}. The more general $L^p$-Sobolev regularity is studied in \cite{henderson_sobolev_2024}. This reference includes in particular an alternative characterisation for $p=2$ \cite[Proposition 4.4]{henderson_sobolev_2024}.

To characterise only the local regularity of the sample paths, as for Hölder spaces, we define the local pre-Sobolev spaces.

\begin{definition}[Local pre-Sobolev spaces]\label{def: sobolev}
    Let $n\in\N_0$. The local pre-Sobolev space $\mathcal{H}^n_{\loc}(O)$ is the space of functions\footnote{A (local) Sobolev space is an $L^2$ quotient of a (local) pre-Sobolev space. Since we are interested in \emph{function} spaces, pre-Sobolev spaces are the correct objects of interest for us.} $f$ on $O$ for which for every compact $K\subset O$ the $L^2$ weak derivative $\partial^{\bm\alpha}_wf$ exists on $K$ for all multi-index $\bm \alpha\in\N_0^d$ with $|\bm\alpha|\leq n$, i.e.~for all such $\bm \alpha$ there is a function $\partial^{\bm\alpha}_wf\in L^2(K)$ such that
    $$\int_K\partial^{\bm\alpha}_wf(\bm x) \varphi(\bm x)\,d\bm x= (-1)^{|\bm \alpha|}\int_K f(\bm x) \partial^{\bm\alpha}\varphi(\bm x)\,d\bm x$$
    for all $\varphi\in C^\infty(O)$.
\end{definition}

\begin{theorem}[Sample path Sobolev regularity]\label{thm: sobolev}
    Let $n\in\N$.
    The process $f \sim \GP(0,k)$ has samples in $\mathcal{H}^n_{\loc}(O)$ if,
    \begin{enumerate}[label=(\arabic*)]
        \item \label{thm sobolev: general case}
            for general $k$, $k\in C^{n\otimes n}(O\times O)$.
        \item \label{thm sobolev: stationary case}
            for stationary $k(\bm x, \bm y) = k_\delta(\bm x - \bm y)$, $\partial^{\bm \alpha} k_\delta$ exists at $\bm 0$ for all $|\bm \alpha| \leq 2n$.
        \item \label{thm sobolev: isotropic case}
            for isotropic $k(\bm x, \bm y) = k_r(\| \bm x - \bm y \|)$, $k_r^{(j)}$ exists at $0$ for all $j \leq 2n$.
    \end{enumerate}
    If we assume that $k$ is continuous, or more generally that $f$ has a measurable modification, then the converse holds in the scenarios \ref{thm sobolev: stationary case} and \ref{thm sobolev: isotropic case}.
\end{theorem}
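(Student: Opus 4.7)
The plan is to first handle case \ref{thm sobolev: general case} by constructing mean-square partial derivatives and identifying them with weak derivatives; cases \ref{thm sobolev: stationary case} and \ref{thm sobolev: isotropic case} then reduce to it. For the forward direction in case \ref{thm sobolev: general case}, the hypothesis $k\in C^{n\otimes n}(O\times O)$ yields, by the standard Gaussian mean-square calculus already invoked in the proof of Theorem~\ref{thm: holder}, centered GPs $\partial^{\bm\alpha}f$ with covariance $\partial^{\bm\alpha,\bm\alpha}k$ for every $|\bm\alpha|\leq n$. Picking jointly measurable modifications and applying Fubini,
$$\E\int_K |\partial^{\bm\alpha}f(\bm x)|^2\,d\bm x = \int_K\partial^{\bm\alpha,\bm\alpha}k(\bm x,\bm x)\,d\bm x<\infty$$
for every compact $K\subset O$, hence $\partial^{\bm\alpha}f\in L^2(K)$ almost surely.

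The crucial step is then to check that $\partial^{\bm\alpha}f$ is a \emph{weak} derivative of $f$. For a test function $\varphi$ compactly supported in the interior of $K$, the Gaussian random variable
$$Z_\varphi := \int_K \partial^{\bm\alpha}f(\bm x)\,\varphi(\bm x)\,d\bm x - (-1)^{|\bm\alpha|}\int_K f(\bm x)\,\partial^{\bm\alpha}\varphi(\bm x)\,d\bm x$$
is centered, and a direct expansion of $\mathrm{Var}(Z_\varphi)$ via Fubini, followed by $|\bm\alpha|$ integrations by parts transferring the derivatives from $\varphi$ onto the kernel, cancels the three resulting terms and gives $\mathrm{Var}(Z_\varphi)=0$. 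A countable dense family of test functions in $C^\infty_c$ then yields a single full-measure event on which $\partial^{\bm\alpha}f=\partial^{\bm\alpha}_w f$ in $L^2(K)$, establishing case \ref{thm sobolev: general case}.

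For cases \ref{thm sobolev: stationary case} and \ref{thm sobolev: isotropic case}, the extrapolation lemma (Lemma~\ref{lem extrapol}, cited in Remark~\ref{rmk: kernel regularity}) upgrades the mere existence of $\partial^{\bm\alpha}k_\delta(\bm 0)$ for $|\bm\alpha|\leq 2n$ (respectively $k_r^{(j)}(0)$ for $j\leq 2n$) to global $C^{2n}$ regularity of $k_\delta$ (respectively $k_r$), whence $k\in C^{n\otimes n}(O\times O)$ by the chain rule, reducing these cases to \ref{thm sobolev: general case}.

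For the converse under the measurable modification hypothesis, the map $f\mapsto\partial^{\bm\alpha}_w f$ is linear and measurable, so on each compact $K\subset O$ the weak derivative $\partial^{\bm\alpha}_w f$ is a Gaussian element of $L^2(K)$; its $L^2(K)$-norm is therefore a Gaussian semi-norm, and Fernique's zero-one law upgrades $\|\partial^{\bm\alpha}_w f\|_{L^2(K)}<\infty$ a.s.\ to $\E\|\partial^{\bm\alpha}_w f\|_{L^2(K)}^2<\infty$, whence Fubini gives $\E|\partial^{\bm\alpha}_w f(\bm x)|^2<\infty$ for a.e.\ $\bm x$. In the stationary case this quantity is translation-invariant and equals $(-1)^{|\bm\alpha|}\partial^{2\bm\alpha}k_\delta(\bm 0)$, while the remaining derivatives $\partial^{\bm\beta}k_\delta(\bm 0)$ with $|\bm\beta|\leq 2n$ are recovered by writing $\bm\beta=\bm\alpha_1+\bm\alpha_2$ with $|\bm\alpha_i|\leq n$ and evaluating the cross-covariance $\cov(\partial^{\bm\alpha_1}_w f(\bm x),\partial^{\bm\alpha_2}_w f(\bm x))$; the isotropic case follows by restricting to a radial line through the origin. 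The main obstacle throughout is the careful bookkeeping around modifications: the measurable modification hypothesis is used both to make Fubini applicable to the pointwise covariance and to guarantee that the weak derivative of a Gaussian process inherits Gaussianity with the correct covariance structure.
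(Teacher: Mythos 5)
Your attempt takes a genuinely different route from the paper, whose proof of Theorem~\ref{thm: sobolev} is essentially a citation: case \ref{thm sobolev: general case} to Scheuerer's theorem on Sobolev sample paths of second-order random fields, cases \ref{thm sobolev: stationary case} and \ref{thm sobolev: isotropic case} to its corollary, and the converses to Scheuerer's Proposition~1 together with Gneiting's propagation-of-regularity result and Lemma~\ref{lem equiv}. Your direct argument for the forward implication in case \ref{thm sobolev: general case} is sound and is in effect the Gaussian specialisation of Scheuerer's proof: Lemma~\ref{lem extrapol} gives mean-square continuous derivatives $\partial^{\bm\alpha}_{ms}f\sim\GP(0,\partial^{\bm\alpha,\bm\alpha}k)$ (hence jointly measurable modifications), and the computation $\mathrm{Var}(Z_\varphi)=0$ — expand the square, apply Fubini, integrate by parts to move $\partial^{\bm\alpha}$ from $\varphi$ onto $k$, and observe the $1-2+1$ cancellation — correctly identifies the mean-square derivative with the weak derivative on a single full-measure event obtained from a countable dense family of test functions and a countable compact exhaustion of $O$.

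The gaps are in the reduction of cases \ref{thm sobolev: stationary case}--\ref{thm sobolev: isotropic case} to case \ref{thm sobolev: general case} and in the converse. First, Lemma~\ref{lem extrapol} does not deliver the upgrade you attribute to it: its diagonal condition requires existence \emph{and continuity} of $\partial^{\bm\alpha,\bm\alpha}k$ in a \emph{neighbourhood} of the diagonal, whereas the hypothesis of case \ref{thm sobolev: stationary case} is only the existence of $\partial^{\bm\alpha}k_\delta$ at the single point $\bm 0$, with no continuity assumed. Passing from pointwise differentiability at the origin to $k_\delta\in C^{2n}(\R^d)$ is precisely the nontrivial analytic content of these cases; it uses positive definiteness through Bochner's theorem (existence of the order-$2n$ derivatives at $\bm 0$ forces $\int\|\bm\xi\|^{2n}\,d\mu(\bm\xi)<\infty$ for the spectral measure $\mu$, which then yields continuous derivatives everywhere), i.e.\ exactly the Scheuerer/Gneiting propagation of regularity that the paper cites and that your proof omits. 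Second, the converse as written is circular: you identify $\E\lvert\partial^{\bm\alpha}_wf(\bm x)\rvert^2$ with $(-1)^{\lvert\bm\alpha\rvert}\partial^{2\bm\alpha}k_\delta(\bm 0)$ before having established that this derivative exists, which is the very thing to be proved. The honest route again passes through the spectral measure: Fernique gives $\E\lVert\partial^{\bm\alpha}_wf\rVert^2_{L^2(K)}<\infty$, and one must show this forces finiteness of the corresponding spectral moments, whence existence of all $\partial^{\bm\beta}k_\delta(\bm 0)$ for $\lvert\bm\beta\rvert\le 2n$. The assertion that $\partial^{\bm\alpha}_wf$ is a Gaussian element of $L^2(K)$ because $f\mapsto\partial^{\bm\alpha}_wf$ is ``linear and measurable'' also needs an actual argument (e.g.\ realising the weak derivative as an $L^2(K)$-limit of difference quotients or of mollified fields).
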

\begin{proof}
    \ref{thm sobolev: general case} follows from the general theorem for second order measurable random fields \cite[Theorem 1]{scheuerer_regularity_2010}. \ref{thm sobolev: stationary case} and \ref{thm sobolev: isotropic case} follow from its corollary \cite[Corollary 1]{scheuerer_regularity_2010}. The converse in \ref{thm sobolev: stationary case} follows from \cite[Proposition 1]{scheuerer_regularity_2010}. Finally, if \ref{thm sobolev: isotropic case} holds then, by the propagation of regularity from \cite{gneiting_derivatives_1999}, we have $k_r\in C^{2n}(\R)$. So by Lemma \ref{lem equiv} \ref{lem equiv: isotropic case} $\Rightarrow$ \ref{lem equiv: stationary case} (to be precise an adapted version without the Hölder conditions) we deduce \ref{thm sobolev: stationary case}, and hence we obtain the converse statement in this case too.
\end{proof}
As observed in \cite{scheuerer_regularity_2010}, Sobolev differentiability is a natural regularity notion for sample paths of general random fields as it can be deduced from continuous mean square differentiability. In the case of GPs however, if, in addition to the condition in \thmitemref{Theorem}{thm: sobolev}{thm sobolev: general case}, we assume Hölder control on the highest order partial derivatives of the kernel at the diagonal as in \thmitemref{Theorem}{thm: holder}{thm holder: general case}, we deduce strong differentiability of the sample paths. The Hölder condition in \thmitemref{Theorem}{thm: holder}{thm holder: general case} being a weak one, one can usually not obtain a greater number of weak derivatives using Theorem \ref{thm: sobolev} than of strong derivatives using Theorem \ref{thm: holder}. The reverse implications in \thmitemref{Theorem}{thm: sobolev}{thm sobolev: stationary case} and \thmitemref{Theorem}{thm: sobolev}{thm sobolev: isotropic case} show that, in the stationary and isotropic cases, this is not a limitation of the theorem, but an inherent property of stationary and isotropic GPs. For example, the proof of Proposition \ref{prop: matern} \hyperref[app: proofs]{Appendix B} \ref{proof: matern} shows that, by the converse to \thmitemref{Theorem}{thm: sobolev}{thm sobolev: isotropic case}, even in the edge case $\nu\in\N$, the samples of the Matérn GPs have as many Sobolev derivatives as of strong derivatives (see also \cite{henderson_sobolev_2024}).

However, in the non-stationary case, one can in some cases obtain maximally more Sobolev derivatives than strong derivatives \cite{scheuerer_regularity_2010}, \cite{henderson_sobolev_2024}. One way to construct non-stationary GPs with more weak derivatives than strong derivatives is through feature kernels, with features that admit more weak derivatives than strong derivatives (see Section \ref{sec: feature kernel} and \cite[Example 3.3]{henderson_sobolev_2024}). This also shows that it is not possible to obtain a converse for \thmitemref{Theorem}{thm: sobolev}{thm sobolev: general case} as we have for \thmitemref{Theorem}{thm: sobolev}{thm sobolev: stationary case} and \thmitemref{Theorem}{thm: sobolev}{thm sobolev: isotropic case}.

Also note that deducing sample path Hölder regularity from applying Theorem \ref{thm: sobolev} and Sobolev embedding theorems introduces a superfluous dependence on dimension, and yields weaker results than the dimension independent Theorem \ref{thm: holder}.

We therefore expect Theorem \ref{thm: sobolev} to have less practical value for GPs as its general version \cite[Theorem 1]{scheuerer_regularity_2010} for second order measurable random fields. For GPs we encourage the use of Theorem \ref{thm: holder}.

An interesting avenue of future research would be to generalise Theorem \ref{thm: sobolev} to fractional Sobolev spaces. One may wonder whether the condition $k\in C^{n\otimes n}(O\times O)$ in addition to Hölder condition on the highest order partial derivatives $|\partial^{\bm \alpha,\bm \beta} k(\bm x+\bm h,\bm x+\bm h)-\partial^{\bm \alpha,\bm \beta}k(\bm x+\bm h,\bm x)-\partial^{\bm \alpha,\bm\beta}k(\bm x,\bm x+\bm h)+\partial^{\bm \alpha,\bm \beta}k(\bm x,\bm x)| = \mathcal{O}(\|\bm h\|^{2\gamma})$ as $\bm h \to \bm 0$, locally uniformly in $\bm x\in O$, for all $|\bm \alpha|=|\bm\beta|=n$ and some $\gamma\in (0,1)$, is enough to guarantee that the GP has samples in $\mathcal H^{n+\gamma}_{\loc}(O)$ (as opposed to $C^{(n+\gamma)^-}_{\loc}(O)$, which is what one would obtain from Theorem \ref{thm: holder}).

\section{Conclusion}

In this paper we have investigated the Hölder and Sobolev sample path regularity of Gaussian processes in terms of the covariance kernel, with a combination of results from the literature and novel ones. We focused on providing results that are readily applicable, and demonstrated this with the most commonly used examples in practice. We hope that this work will serve as a comprehensive guide on the regularity of Gaussian process sample paths for practitioners.

\begin{figure}
\centering
\includegraphics[height=174pt]{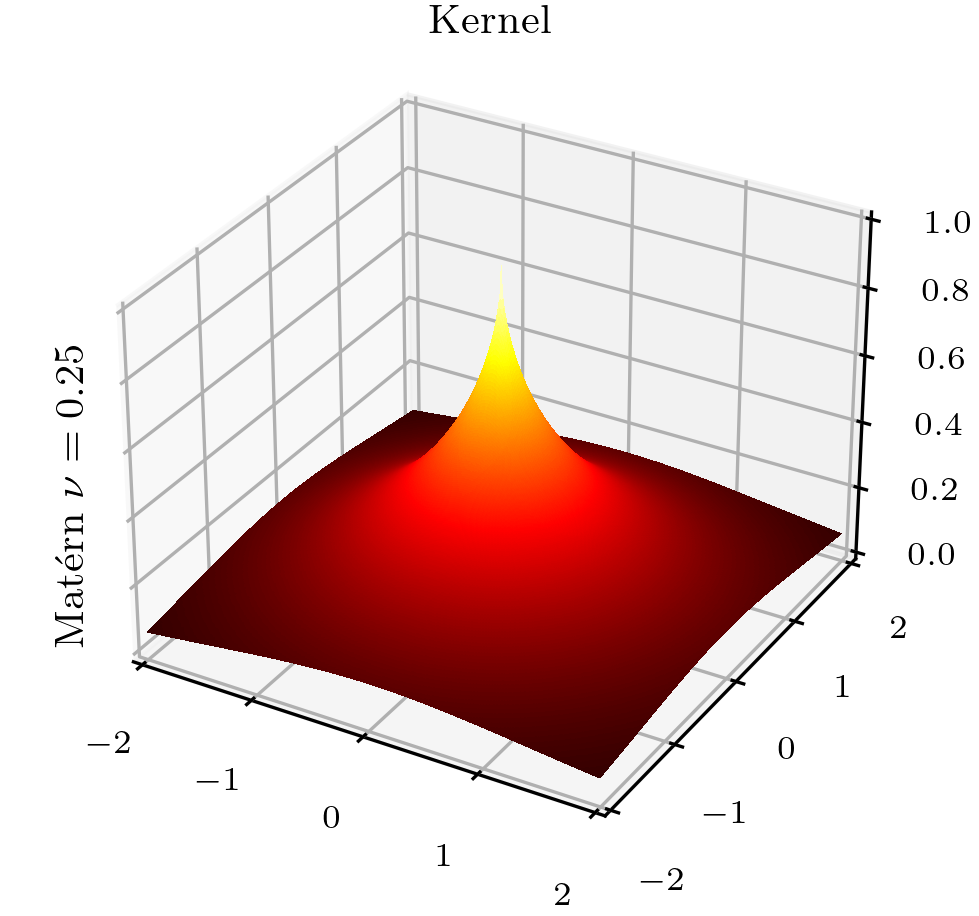}
\includegraphics[height=174pt]{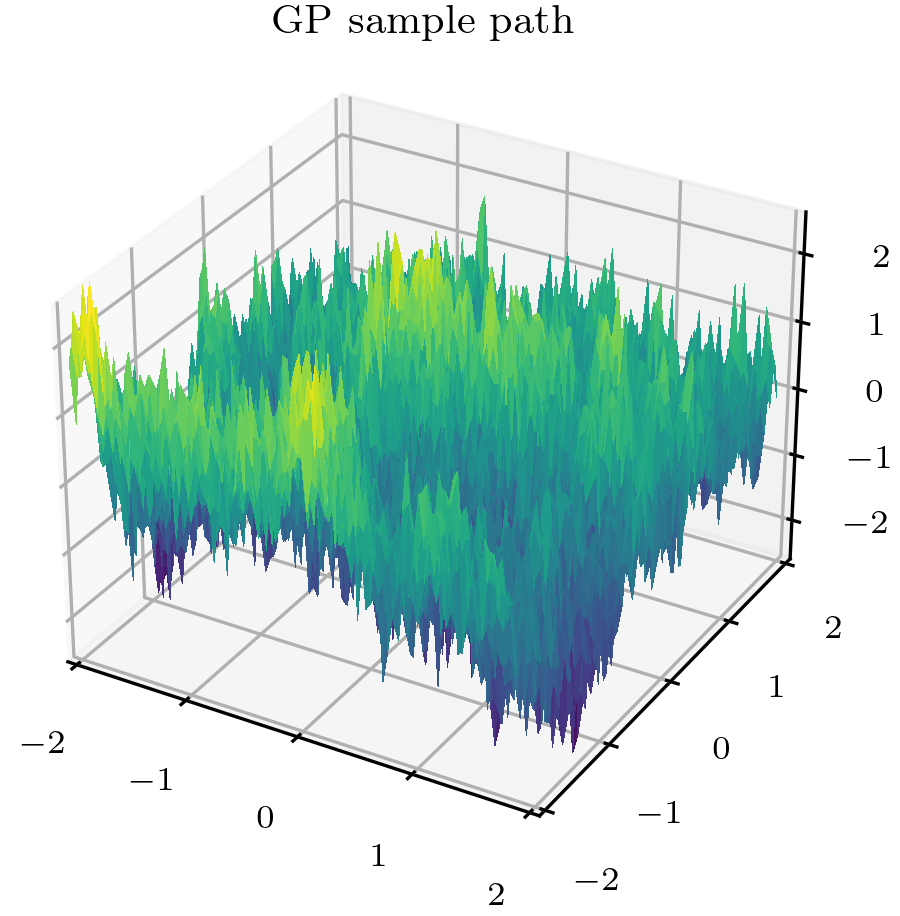} \\
\includegraphics[height=170pt]{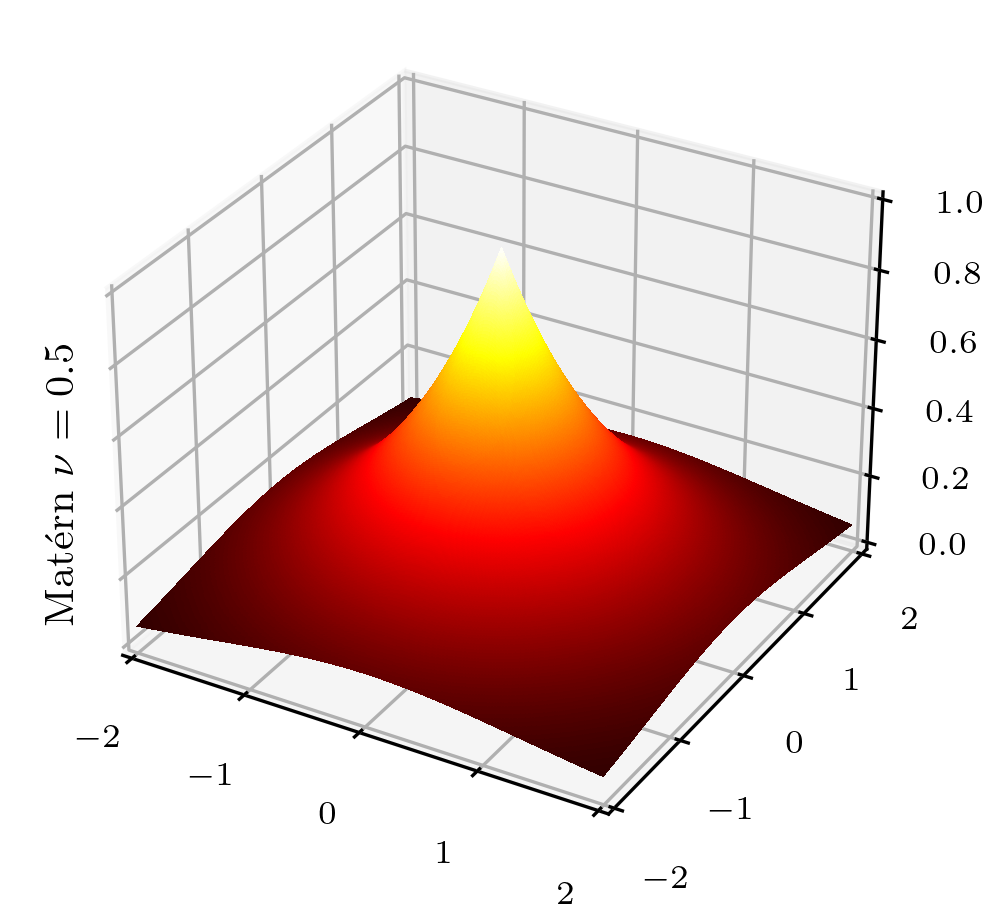}
\includegraphics[height=170pt]{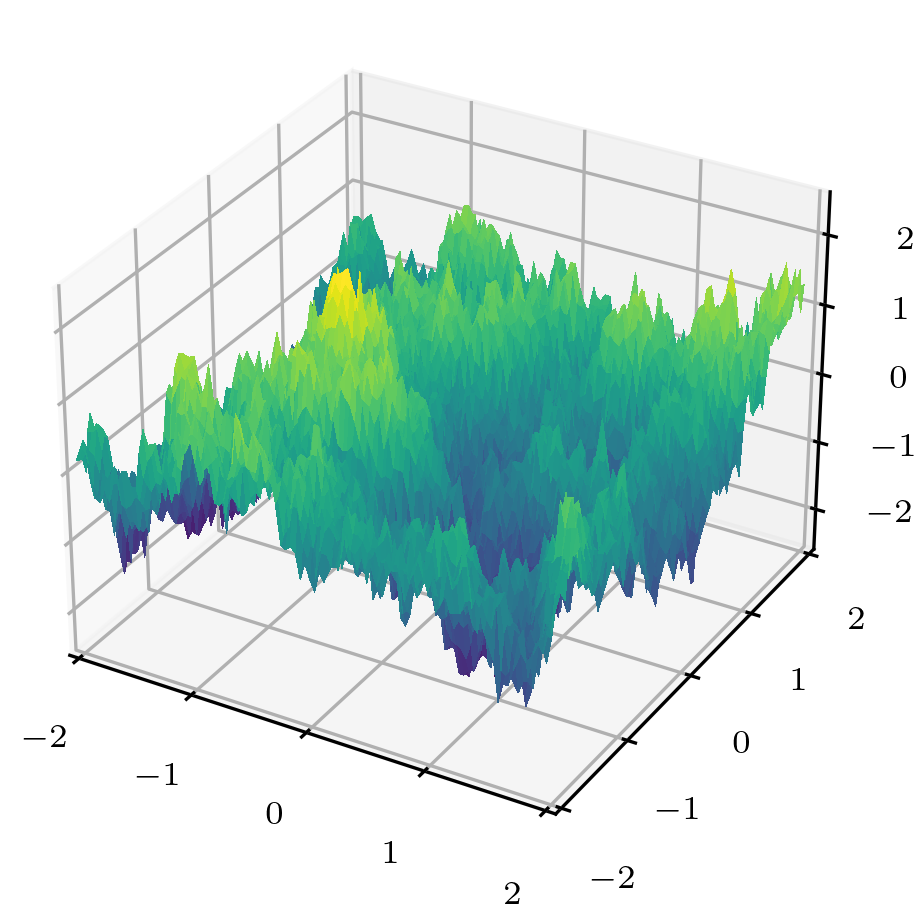} \\
\includegraphics[height=170pt]{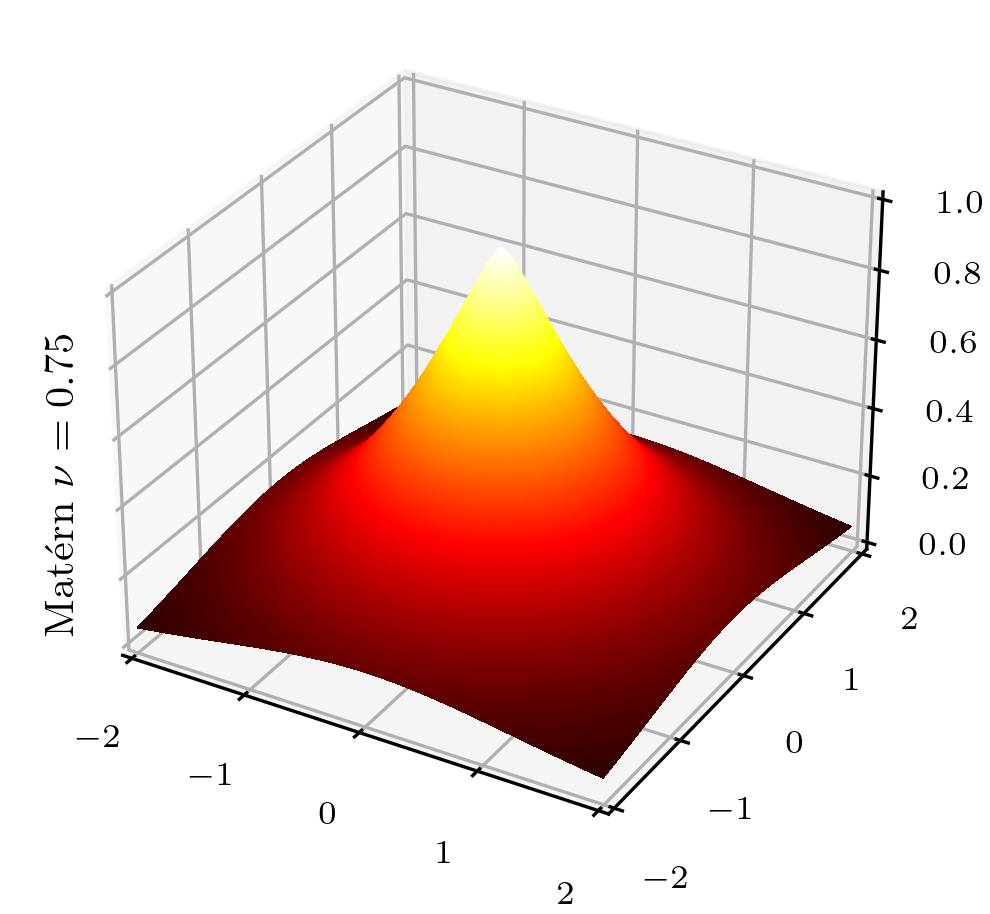}
\includegraphics[height=170pt]{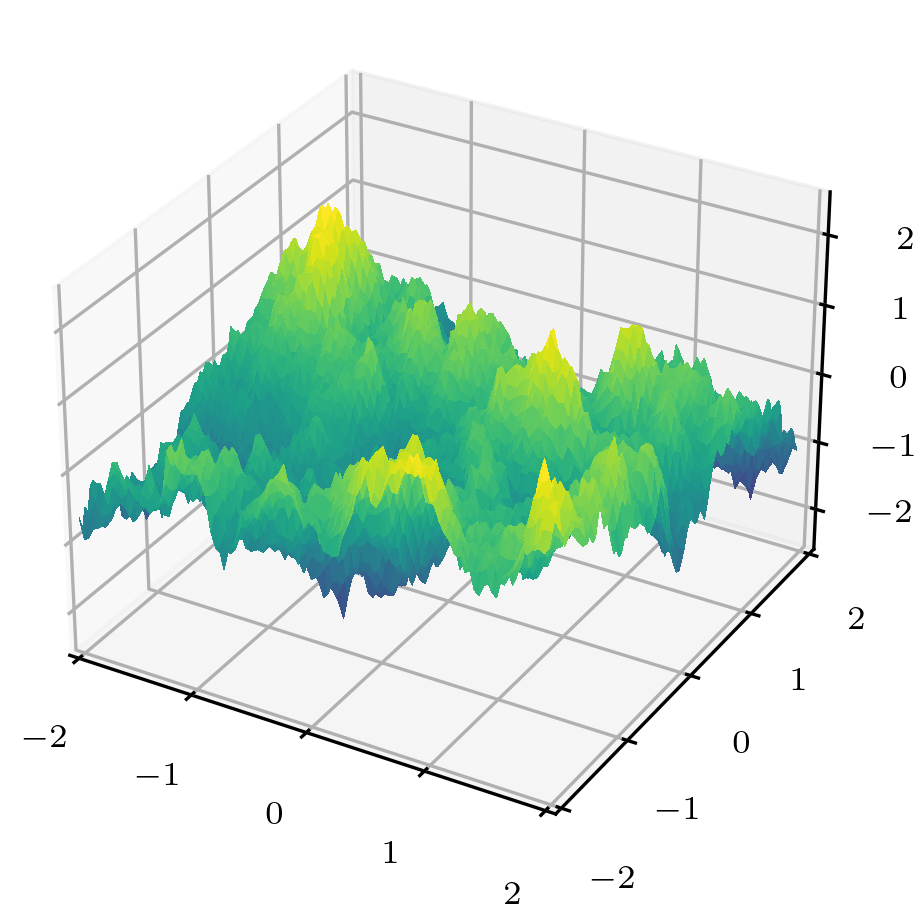}
\end{figure}
\begin{figure}
\centering
\includegraphics[height=174pt]{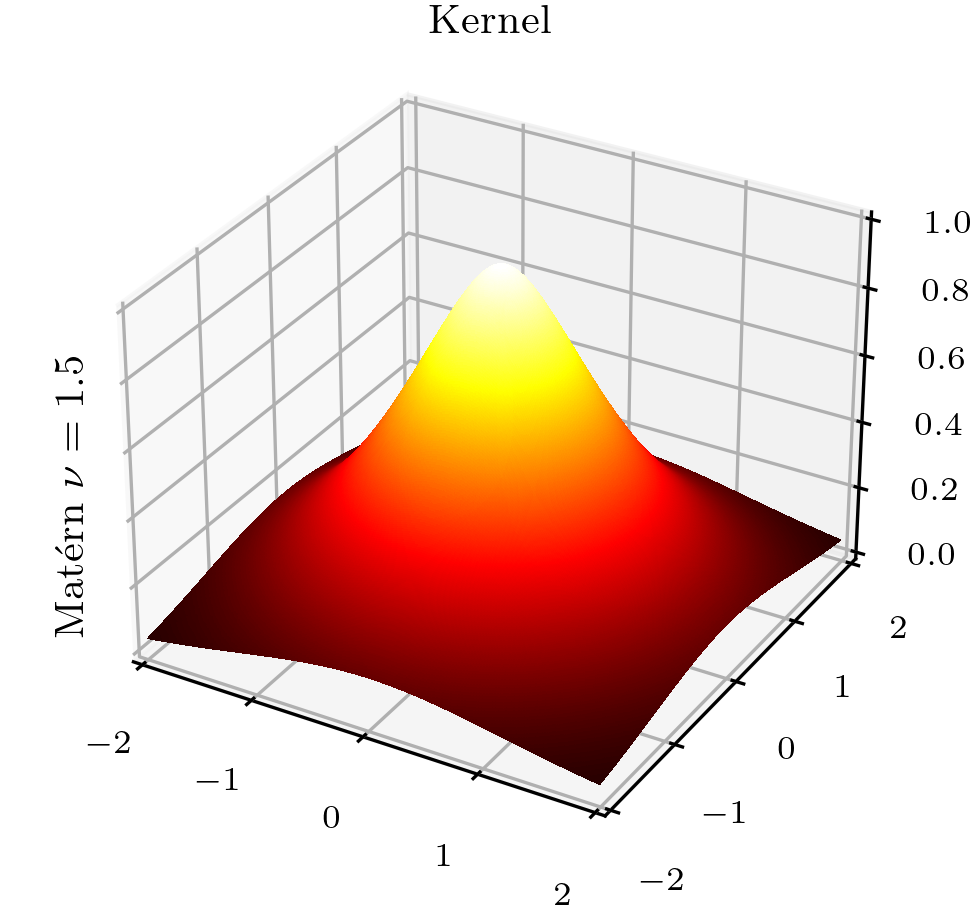}
\includegraphics[height=174pt]{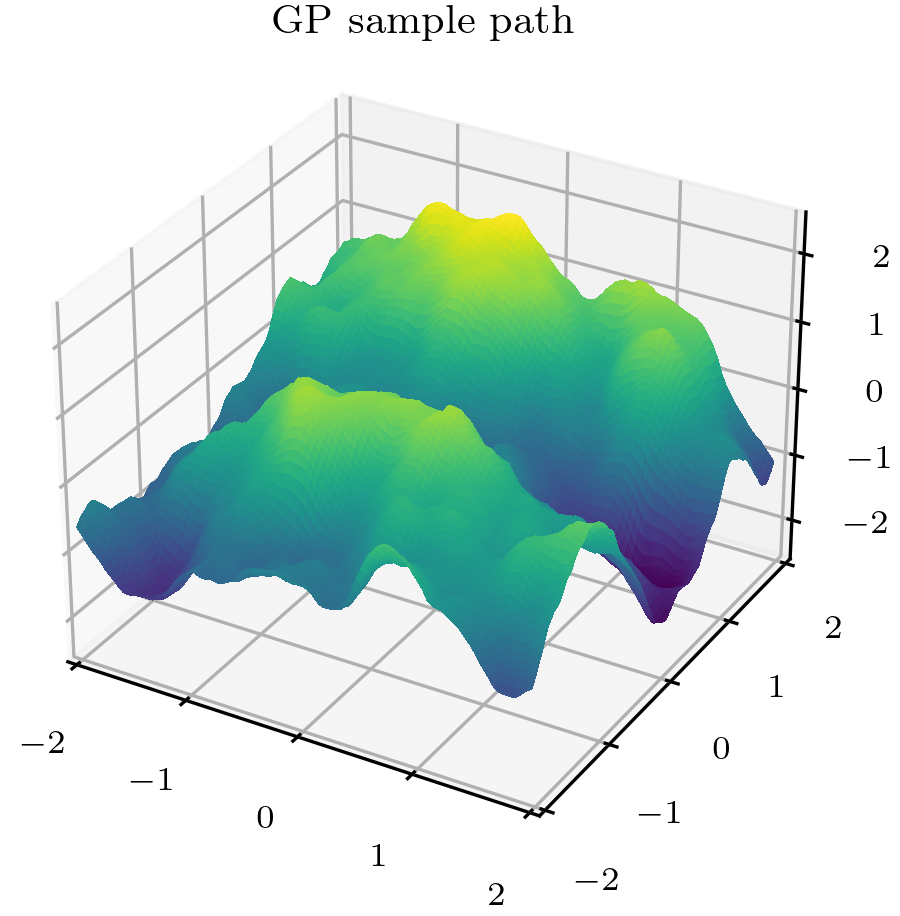} \\
\includegraphics[height=170pt]{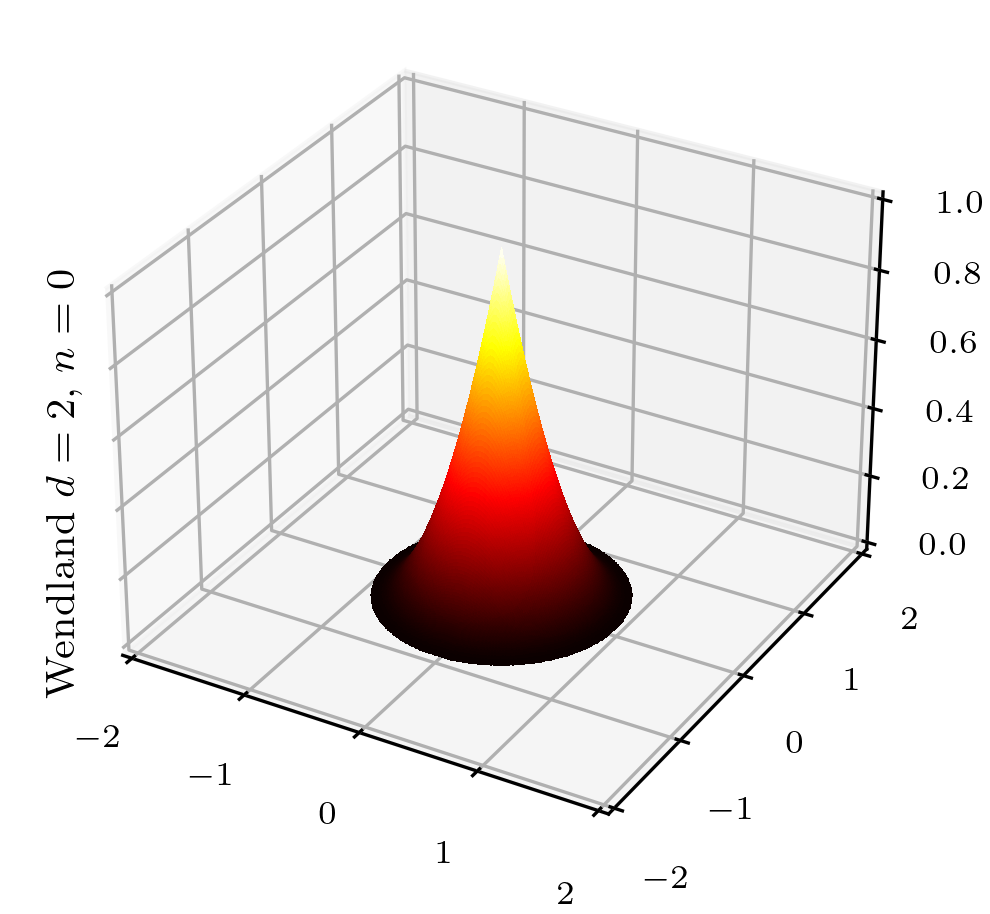}
\includegraphics[height=170pt]{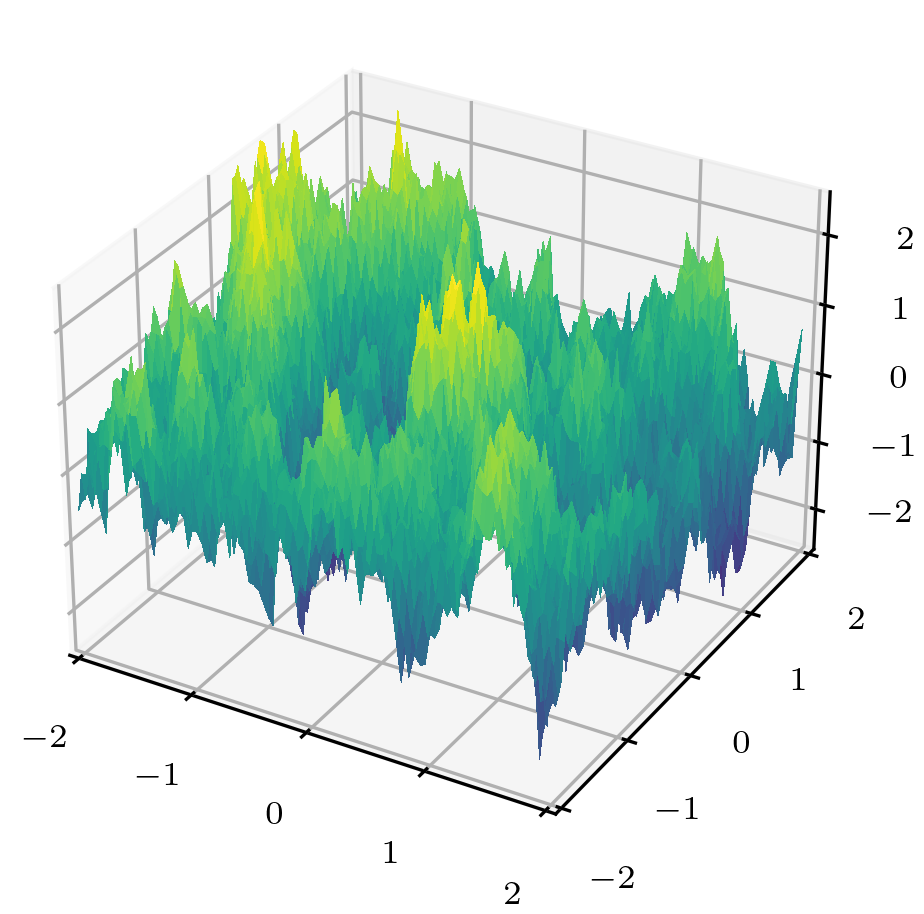} \\
\includegraphics[height=170pt]{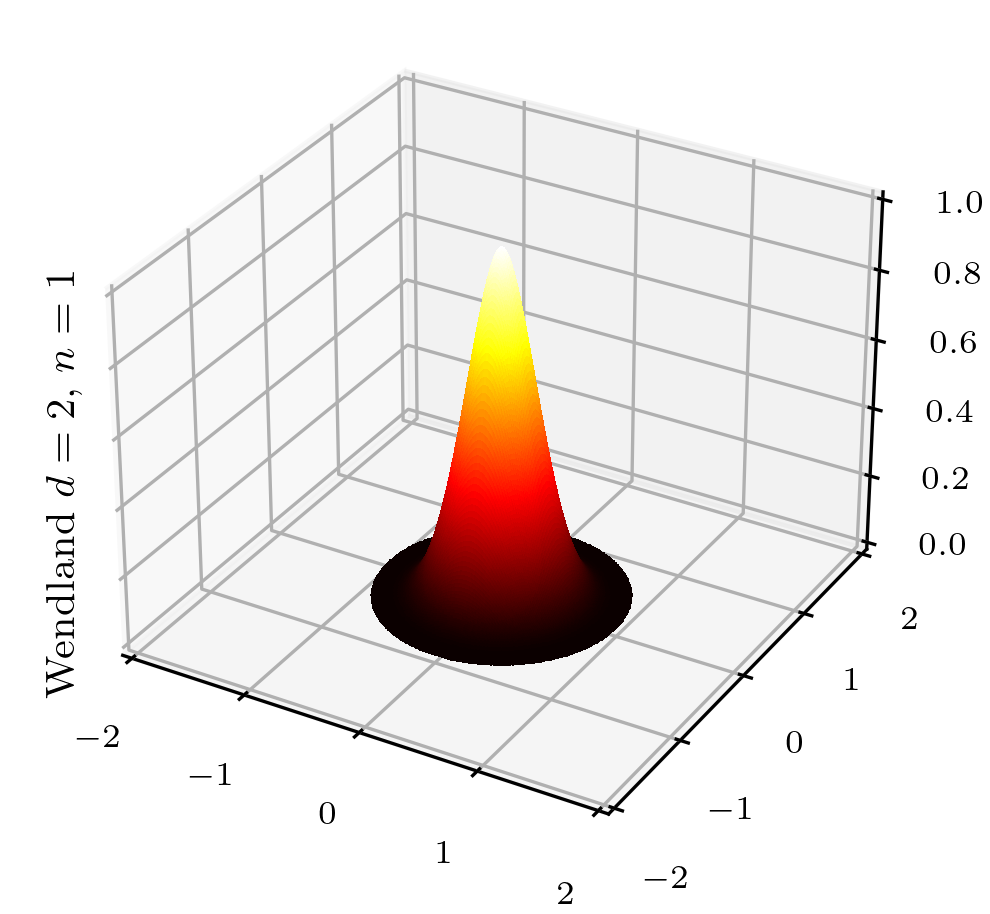}
\includegraphics[height=170pt]{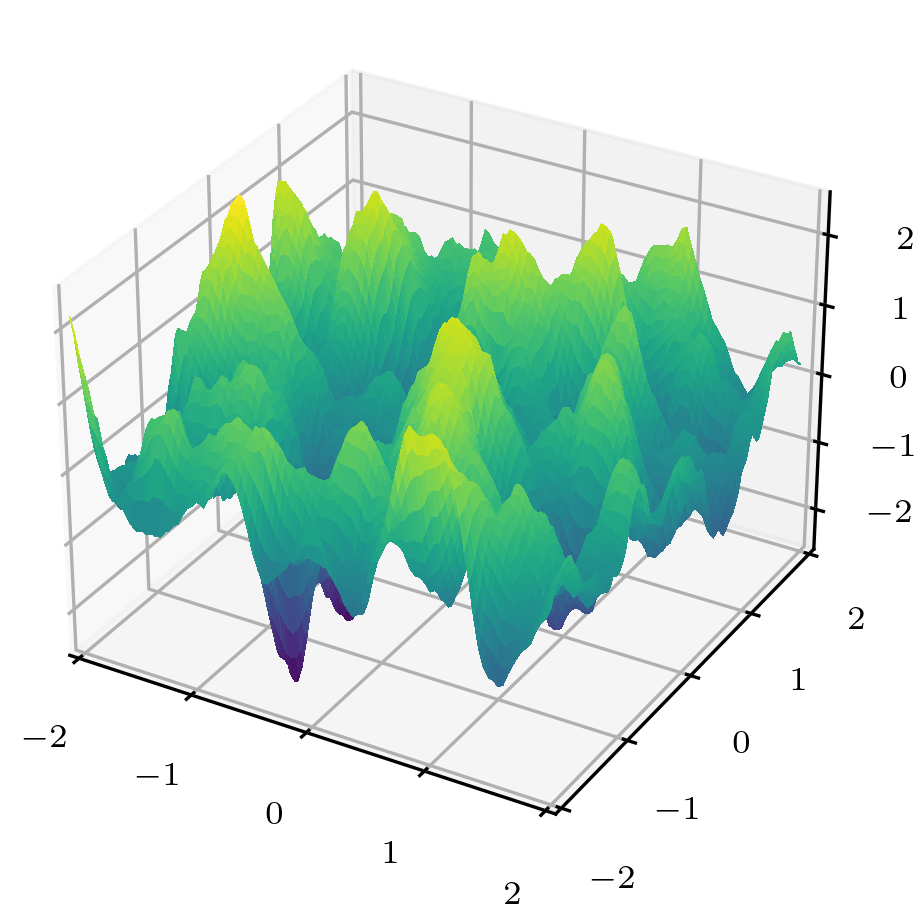} \\
\end{figure}
\begin{figure}
\centering
\includegraphics[height=174pt]{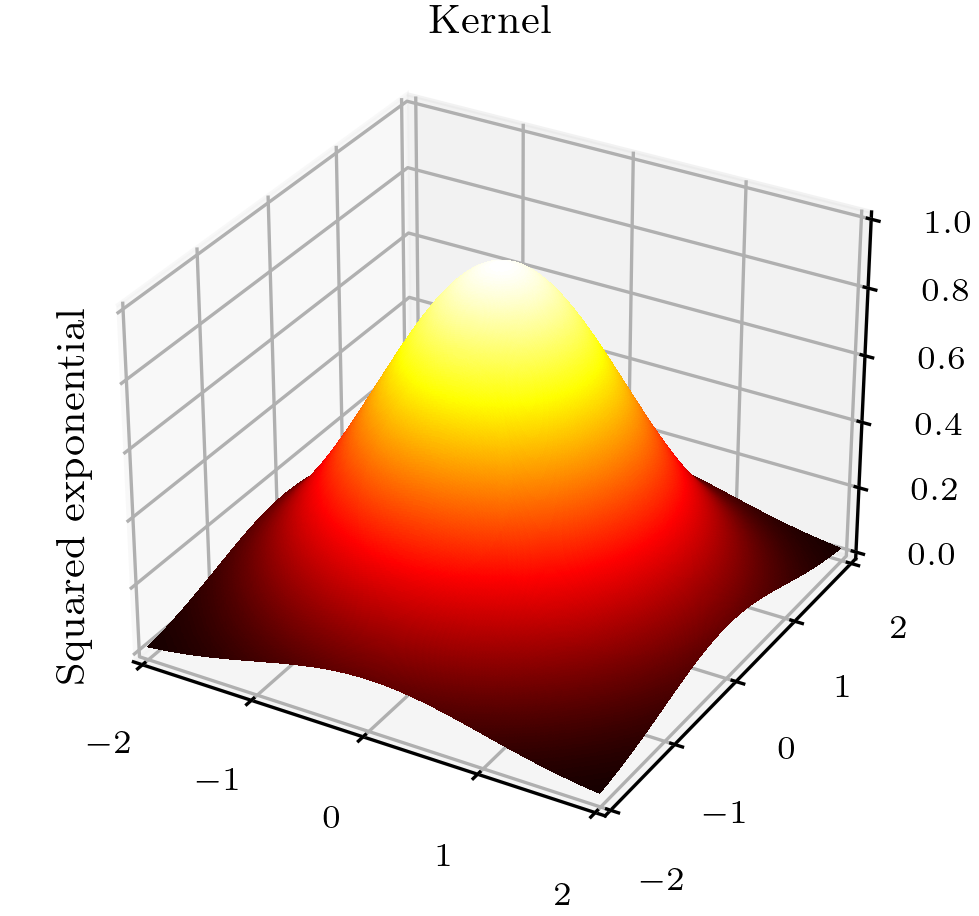}
\includegraphics[height=174pt]{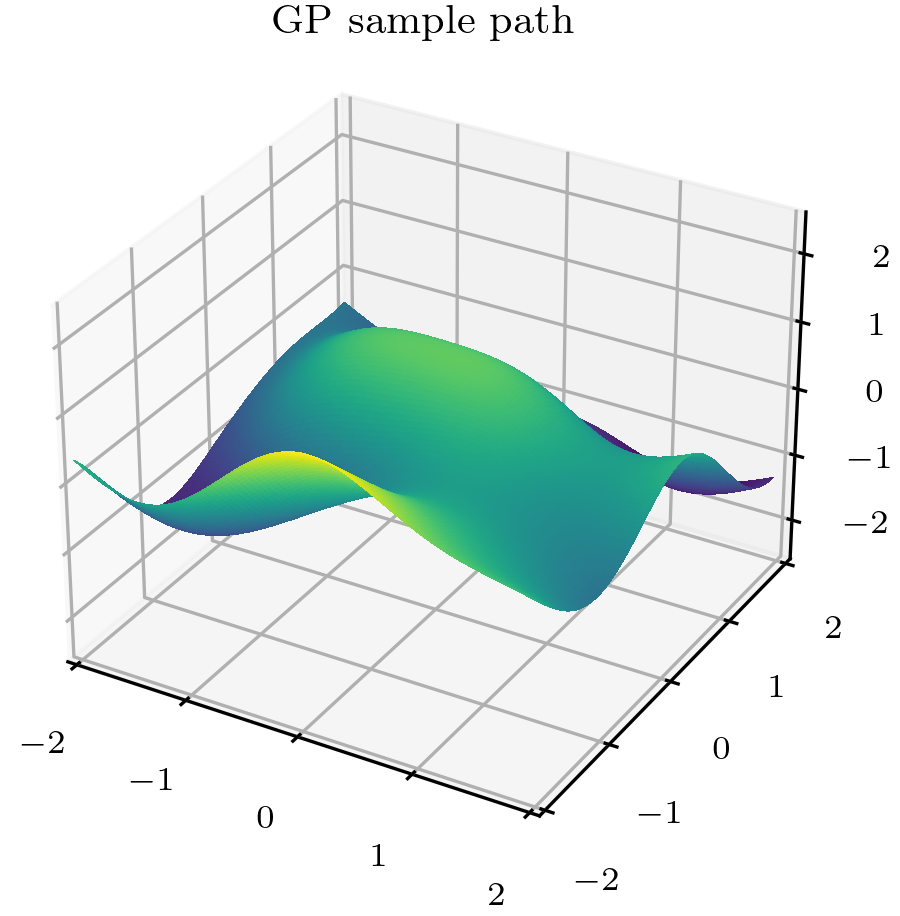} \\
\includegraphics[height=170pt]{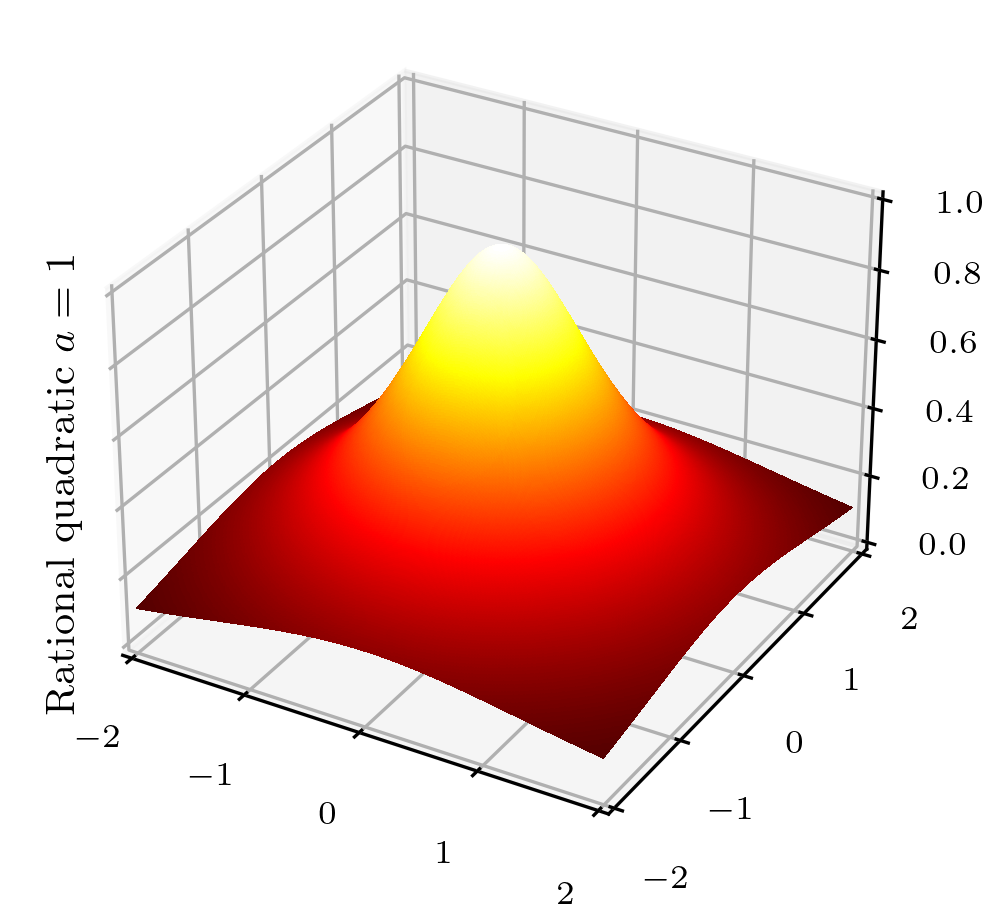}
\includegraphics[height=170pt]{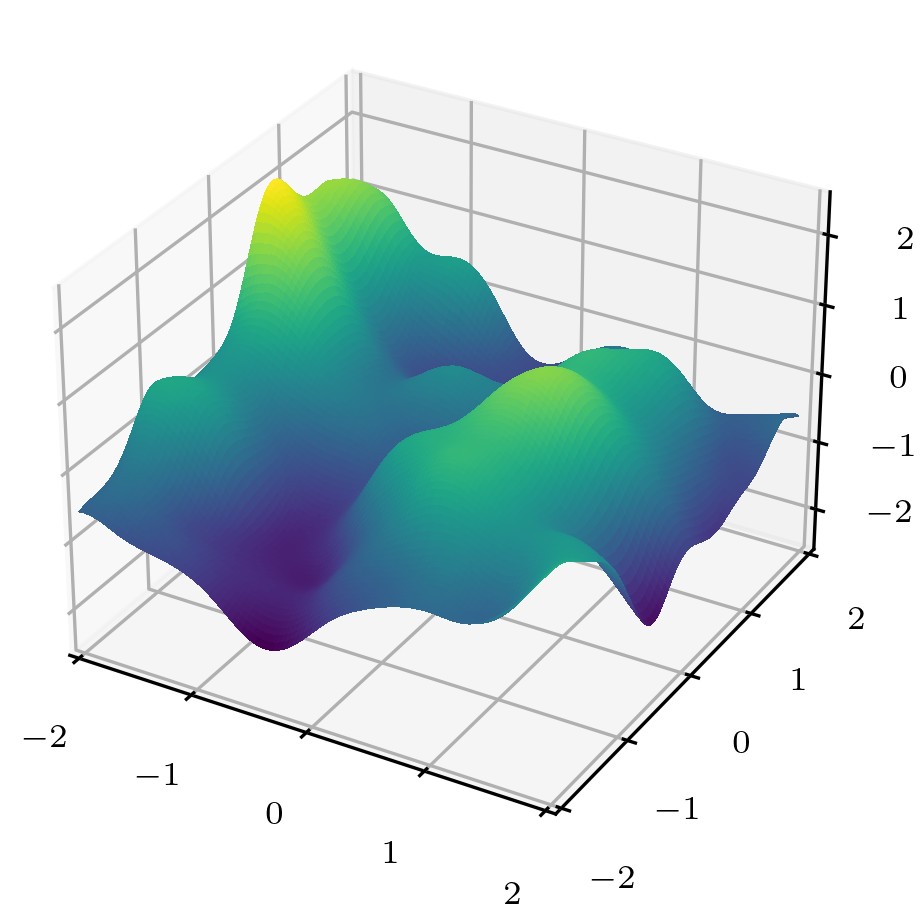} \\
\includegraphics[height=170pt]{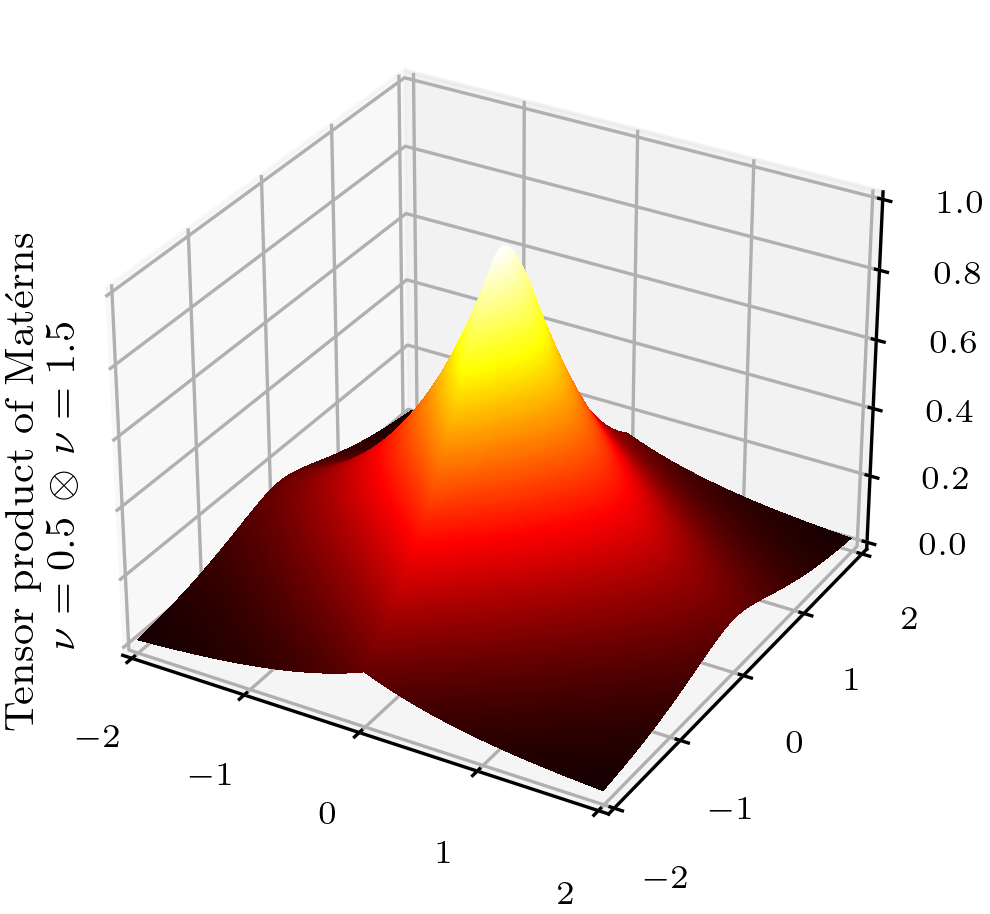}
\includegraphics[height=170pt]{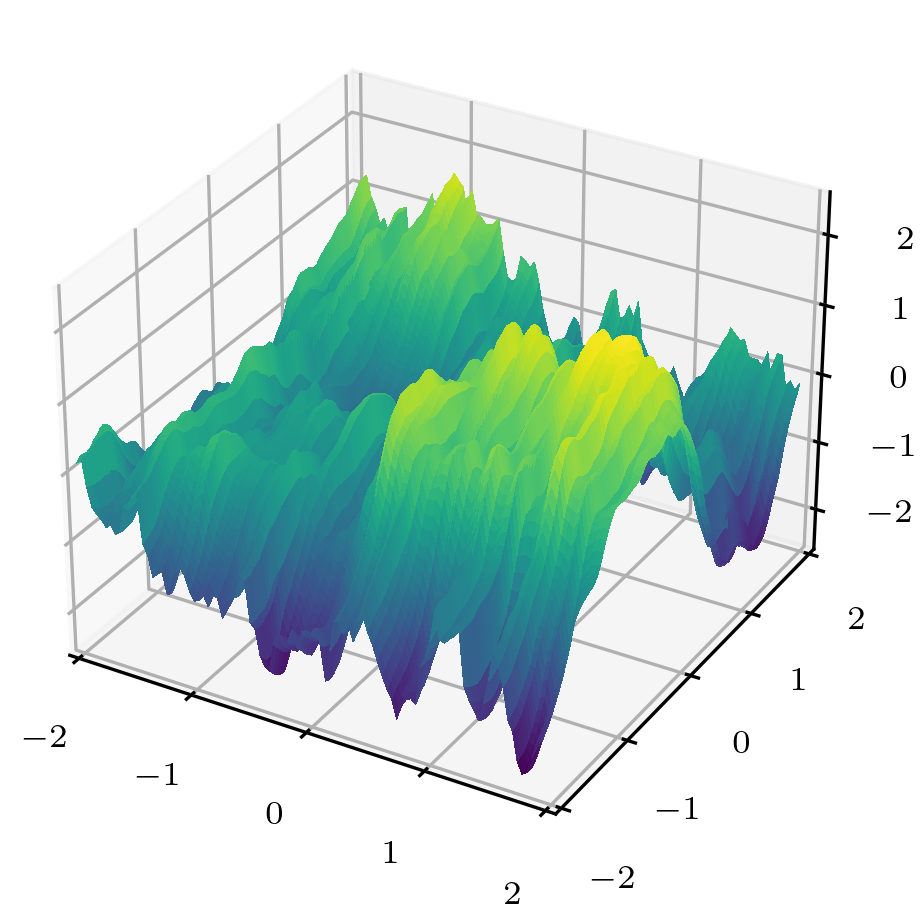}
\end{figure}

\section*{Acknowledgments}
    NDC, MP and PH gratefully acknowledge financial support by
    the European Research Council through ERC StG Action 757275 / PANAMA;
    the DFG Cluster of Excellence “Machine Learning - New Perspectives for Science”, EXC 2064/1, project number 390727645;
    the German Federal Ministry of Education and Research (BMBF) through the Tübingen AI Center (FKZ: 01IS18039A);
    and funds from the Ministry of Science, Research and Arts of the State of Baden-Württemberg.
    NDC acknowledges the support of the Fonds National de la Recherche, Luxembourg.
    The authors thank the International Max Planck Research School for Intelligent Systems (IMPRS-IS) for supporting NDC and MP.

\newpage

\appendix

\section*{Appendix}

\section{Proof of Theorem \ref{thm: holder}}
\label{app: holder thm}
We start by showing the equivalences of the various kernel conditions in Theorem \ref{thm: holder}, namely \ref{thm holder: general case} $\Leftrightarrow$ \ref{thm holder: stationary case} in the stationary case, and \ref{thm holder: general case} $\Leftrightarrow$ \ref{thm holder: stationary case} $\Leftrightarrow$ \ref{thm holder: isotropic case} in the isotropic case.
\begin{lemma}
    \label{lem equiv}
    Let $n \in \N_0$ and $\epsilon \in (0,1]$.
    Then the condition
    \begin{enumerate}[label=(\arabic*)]
        \item \label{lem equiv: general case}%
            \begin{itemize}
                \item $k\in C^{n\otimes n}(O\times O)$,
                \item $|\partial^{\bm \alpha,\bm \beta} k(\bm x+\bm h,\bm x+\bm h)-\partial^{\bm \alpha,\bm \beta}k(\bm x+\bm h,\bm x)-\partial^{\bm \alpha,\bm\beta}k(\bm x,\bm x+\bm h)+\partial^{\bm \alpha,\bm \beta}k(\bm x,\bm x)| = \mathcal{O}(\|\bm h\|^{2\epsilon})$ \\
                as $\bm h \to \bm 0$, locally uniformly in $\bm x\in O$, for all $|\bm \alpha|=|\bm\beta|=n$.
            \end{itemize}
    \end{enumerate}
    is equivalent to the following:
    \begin{enumerate}[label=(\arabic*),resume]
        \item \label{lem equiv: stationary case}%
            for stationary $k(\bm x, \bm y) = k_\delta(\bm x -\bm y)$,
            \begin{itemize}
                \item $k_\delta\in C^{2n}(\R^d)$,
                \item $|\partial^{\bm\alpha}k_\delta(\bm h)-\partial^{\bm\alpha}k_\delta(\bm 0)| = \mathcal{O}(\|\bm h\|^{2\epsilon})$ as $\bm h \to \bm 0$ for all $|\bm \alpha|=2n$.
            \end{itemize}
        \item \label{lem equiv: isotropic case}%
            for isotropic $k(\bm x, \bm y) = k_r(\| \bm x - \bm y \|)$,
            \begin{itemize}
                \item $k_r \in C^{2n}(\R)$,
                \item $|k_r^{(2n)}(h)-k_r^{(2n)}(0)| = \mathcal{O}(|h|^{2\epsilon})$ as $h \to 0$.
            \end{itemize}
    \end{enumerate}
\end{lemma}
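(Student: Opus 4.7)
Since isotropic kernels are stationary, the lemma reduces to proving two equivalences: \ref{lem equiv: general case} $\Leftrightarrow$ \ref{lem equiv: stationary case} for stationary kernels, and then \ref{lem equiv: stationary case} $\Leftrightarrow$ \ref{lem equiv: isotropic case} for isotropic kernels. I will handle them in that order.

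For the stationary step, the plan is to translate all mixed partial derivatives through the identity $\partial^{\bm\alpha,\bm\beta} k(\bm x,\bm y) = (-1)^{|\bm\beta|}\partial^{\bm\alpha + \bm\beta} k_\delta(\bm x-\bm y)$. Because any multi-index $\bm\gamma$ with $|\bm\gamma|\le 2n$ admits a componentwise split $\bm\gamma = \bm\alpha + \bm\beta$ with $|\bm\alpha|,|\bm\beta|\le n$, the smoothness condition $k \in C^{n\otimes n}(O\times O)$ is equivalent to $k_\delta \in C^{2n}(\R^d)$. For the Hölder part, substituting the identity into the four-term mixed difference of \ref{lem equiv: general case} and using that $\partial^{\bm\gamma} k_\delta$ is even when $|\bm\gamma|$ is even (a consequence of $k_\delta$ itself being even), the expression collapses to $2(-1)^n[\partial^{\bm\gamma} k_\delta(\bm 0) - \partial^{\bm\gamma} k_\delta(\bm h)]$, independent of $\bm x$, which matches the bound in \ref{lem equiv: stationary case} as $\bm\gamma = \bm\alpha+\bm\beta$ ranges over all multi-indices of degree $2n$.

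The isotropic step is where the main difficulty lies, because $\|\cdot\|$ is not smooth at $\bm 0$ and the chain rule cannot be applied there naively. The reverse direction \ref{lem equiv: stationary case} $\Rightarrow$ \ref{lem equiv: isotropic case} is the short one: restricting $k_\delta$ to the ray $t\mapsto t\bm e$ for a unit vector $\bm e\in\R^d$ gives $k_r(|t|) = k_\delta(t\bm e)$, whence the ordinary chain rule together with evenness yields $k_r \in C^{2n}(\R\setminus\{0\})$; the evenness of $k_\delta$ along the ray then forces the odd-order one-sided derivatives of $k_r$ at $0$ to vanish, upgrading this to $k_r \in C^{2n}(\R)$ and transferring the Hölder estimate from $\partial^{2n\bm e}k_\delta$ at $\bm 0$ to $k_r^{(2n)}$ at $0$.

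The forward direction \ref{lem equiv: isotropic case} $\Rightarrow$ \ref{lem equiv: stationary case} is the main obstacle. Away from the origin, the standard chain rule expresses each $\partial^{\bm\gamma} k_\delta(\bm h)$ as a polynomial in $\{k_r^{(j)}(\|\bm h\|)\}_{j\le |\bm\gamma|}$ with coefficients rational in the components of $\bm h/\|\bm h\|$, and this formula directly yields both continuity and the $\mathcal O(\|\bm h\|^{2\epsilon})$ Hölder control off $\bm 0$ from the corresponding control of $k_r^{(2n)}$ at $0$. To continuously extend these derivatives through the origin I will invoke the propagation-of-regularity result of \cite{gneiting_derivatives_1999} already cited in the excerpt, which guarantees that for a positive definite isotropic kernel, $k_r \in C^{2n}(\R)$ already forces $k_\delta \in C^{2n}(\R^d)$, with $\partial^{\bm\gamma} k_\delta(\bm 0)$ read off from the even Taylor expansion of $k_r$ at $0$. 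Combining this with the off-origin chain-rule bound closes the proof.
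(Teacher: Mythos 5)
Your stationary step and the direction \ref{lem equiv: stationary case} $\Rightarrow$ \ref{lem equiv: isotropic case} coincide with the paper's argument and are fine. The problem is the hard direction \ref{lem equiv: isotropic case} $\Rightarrow$ \ref{lem equiv: stationary case}, where your claim that the off-origin chain rule ``directly yields'' the continuity and the $\mathcal{O}(\|\bm h\|^{2\epsilon})$ control is exactly where the difficulty of the whole lemma is hiding. For $|\bm\gamma|=2n$ the chain-rule expansion of $\partial^{\bm\gamma}k_\delta(\bm h)$ contains terms of the form $k_r^{(j)}(\|\bm h\|)\,p(\bm h/\|\bm h\|)\,\|\bm h\|^{-(2n-j)}$ with $j<2n$, each of which is individually singular as $\bm h\to\bm 0$; the desired bound only emerges after cancelling these against the even Taylor polynomial of $k_r$ at $0$ and tracking the decay order of every intermediate derivative through the composition with $\|\cdot\|$. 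This is precisely what the paper's proof does: it subtracts the even Taylor polynomial to form $g$, shows $g^{(j)}(h)=\mathcal{O}(|h|^{2n-j+2\epsilon})$ via the Lagrange remainder, runs a dedicated induction (Lemma \ref{lem norm concat}) to get $\partial^{\bm\alpha}(g\circ\|\cdot\|)(\bm h)=\mathcal{O}(\|\bm h\|^{2n+2\epsilon-|\bm\alpha|})$, and then uses the mean value theorem to extend the derivatives through $\bm 0$. Without this bookkeeping your ``direct'' step is a genuine gap.

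Your fallback on \cite{gneiting_derivatives_1999} does not repair it. That result is invoked in the paper only to propagate the existence of $k_r$'s derivatives at $0$ to all of $\R$, and it requires positive definiteness, whereas Lemma \ref{lem equiv} is a purely analytic statement about stationary and isotropic functions whose proof in the paper never uses positive definiteness. More importantly, even granting a version of Gneiting's theorem that yields $k_\delta\in C^{2n}(\R^d)$, it would not deliver the quantitative Hölder estimate $|\partial^{\bm\alpha}k_\delta(\bm h)-\partial^{\bm\alpha}k_\delta(\bm 0)|=\mathcal{O}(\|\bm h\|^{2\epsilon})$ with the stated exponent, which is the other half of condition \ref{lem equiv: stationary case}. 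You need the Taylor-subtraction and norm-composition argument (or an equivalent) to make this direction rigorous.
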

\begin{proof}\textbf{of Lemma \ref{lem equiv} \ref{lem equiv: general case} $\Leftrightarrow$ \ref{lem equiv: stationary case}}
    Let $k$ be stationary, i.e.~$k(\bm x, \bm y) = k_\delta(\bm x - \bm y)$.

    \textbf{\ref{lem equiv: stationary case} $\Rightarrow$ \ref{lem equiv: general case}:} For $|\bm \alpha|, |\bm\beta|\leq n$,
    $$\partial^{\bm\alpha,\bm\beta}k(\bm x, \bm y) = (-1)^{|\bm\beta|}\partial^{\bm\alpha+\bm\beta}k_\delta (\bm x-\bm y)$$
    for all $\bm x,\bm y \in \R^d$, by the chain rule. So the existence and the continuity of $\partial^{\bm \alpha+\bm\beta}k_\delta$ implies the existence and the continuity of $\partial^{\bm \alpha,\bm\beta}k$. Moreover when $|\bm\alpha|=|\bm\beta|=n$,
    \begin{equation}\label{eq: identity diag vs sing}
    \begin{aligned}
    &|\partial^{\bm \alpha,\bm \beta}k(\bm x+\bm h,\bm x+\bm h)-\partial^{\bm \alpha,\bm \beta}k(\bm x+\bm h,\bm x)-\partial^{\bm \alpha,\bm \beta}k(\bm x,\bm x+\bm h) +\partial^{\bm \alpha,\bm \beta}k (\bm x,\bm x)|\\
    &= 2|\partial^{\bm \alpha+\bm\beta}k_\delta (\bm 0)-\partial^{\bm \alpha+\bm\beta}k_\delta (\bm h)|
    \end{aligned}
    \end{equation}
    for all $\bm x, \bm h\in \R^d$, so the Hölder conditions on the highest order partial derivatives of $k$ at the diagonal and of $k_\delta$ at $\bm 0$ correspond.

    \textbf{\ref{lem equiv: general case} $\Rightarrow$ \ref{lem equiv: stationary case}:} For $|\bm\alpha|\leq 2n$, say $\bm\alpha = \bm\beta+\bm\gamma$ with $|\bm\beta|,|\bm\gamma|\leq n$, $\partial^{\bm \alpha}k_\delta(\bm x)$ exists and is equal to $(-1)^{|\bm \gamma|}\partial^{\bm \beta,\bm \gamma}k(\bm x,\bm 0)$, for all $\bm x\in \R^d$. Moreover the Hölder conditions on the partial derivatives of $k$ and $k_\delta$ correspond by Equation (\ref{eq: identity diag vs sing}).
\end{proof}
To prove the second equivalence in Lemma \ref{lem equiv}, we need the following result:
\begin{lemma}
    \label{lem norm concat}
    Let $m \in \N_0$, $g \in C^m(\R\setminus\{0\})$, and $\nu \in \R$ such that $g^{(j)}(h) = \mathcal O(|h|^{\nu-j})$ as $h \to 0$, for all $0 \leq j \leq m$.
    Define $f := g\circ \|\cdot\|\colon\R^d\setminus\{\bm 0\}\to\R$.
    Then $f\in C^m(\R^d\setminus\{\bm 0\})$ and $\partial^{\bm \alpha} f(\bm h) = \mathcal O(\|\bm h\|^{\nu-j})$ as $\bm h\to \bm 0$, for $0\leq j\leq m$ and $|\bm \alpha| = j$.
\end{lemma}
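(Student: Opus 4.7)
The plan is to split the proof into two stages: first establish $C^m$-smoothness of $f$ on $\R^d \setminus \{\bm 0\}$ abstractly, and then quantify the decay of $\partial^{\bm\alpha} f$ via Faà di Bruno's formula together with a scaling estimate on derivatives of the norm.

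For the smoothness part, the map $\bm h \mapsto \|\bm h\|$ is in $C^\infty(\R^d \setminus \{\bm 0\})$, and maps $\R^d \setminus \{\bm 0\}$ into $\R \setminus \{0\}$ where $g$ is $C^m$. Iterated application of the chain rule then yields $f \in C^m(\R^d \setminus \{\bm 0\})$.

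For the quantitative estimate, write $r(\bm h) := \|\bm h\|$. I would first show by induction on $|\bm\alpha|$ that each $\partial^{\bm\alpha} r$ is a finite linear combination of terms of the form $\bm h^{\bm\gamma}\, r(\bm h)^{-q}$ with $|\bm\gamma| - q = 1 - |\bm\alpha|$. The base case $\bm\alpha = \bm 0$ is trivial, and the inductive step uses $\partial_i (\bm h^{\bm\gamma} r^{-q}) = \gamma_i \bm h^{\bm\gamma - \bm e_i} r^{-q} - q \bm h^{\bm\gamma + \bm e_i} r^{-q-2}$, which preserves the invariant $|\bm\gamma| - q = 1 - |\bm\alpha|$. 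Since $|\bm h^{\bm\gamma}| \leq r(\bm h)^{|\bm\gamma|}$, this yields
\begin{equation*}
|\partial^{\bm\alpha} r(\bm h)| = \mathcal O\!\left(r(\bm h)^{1 - |\bm\alpha|}\right) \quad \text{as } \bm h \to \bm 0.
\end{equation*}
Next, I apply the multivariate Faà di Bruno formula for $|\bm\alpha| = j \geq 1$:
\begin{equation*}
\partial^{\bm\alpha} f(\bm h) = \sum_{\pi} c_\pi\, g^{(|\pi|)}(r(\bm h)) \prod_{B \in \pi} \partial^{\bm\beta_B} r(\bm h),
\end{equation*}
where $\pi$ runs over partitions of $\bm\alpha$ into non-zero sub-multi-indices $\{\bm\beta_B\}_{B \in \pi}$ with $\sum_B \bm\beta_B = \bm\alpha$ and $|\pi|$ denoting the number of blocks. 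Combining the hypothesis $g^{(|\pi|)}(r) = \mathcal O(r^{\nu - |\pi|})$ with the norm-derivative estimate, each summand is of order
\begin{equation*}
\mathcal O\!\left(r^{\nu - |\pi|}\right) \cdot \prod_{B \in \pi} \mathcal O\!\left(r^{1 - |\bm\beta_B|}\right) = \mathcal O\!\left(r^{\nu - |\pi| + |\pi| - j}\right) = \mathcal O\!\left(r^{\nu - j}\right),
\end{equation*}
using $\sum_B |\bm\beta_B| = |\bm\alpha| = j$. Summing the finitely many terms gives $\partial^{\bm\alpha} f(\bm h) = \mathcal O(\|\bm h\|^{\nu - j})$, as claimed. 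The case $j = 0$ is the assumption on $g$ itself.

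The only real obstacle here is bookkeeping: correctly stating Faà di Bruno in the multivariate form and tracking that the homogeneity invariant $|\bm\gamma| - q = 1 - |\bm\alpha|$ is preserved under differentiation. Once the scaling degrees are lined up, the required order $\nu - j$ falls out independently of which partition $\pi$ contributes the summand, which is the key structural reason the estimate holds uniformly across all terms in Faà di Bruno's expansion.
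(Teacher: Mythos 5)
Your proof is correct, but it takes a genuinely different route from the paper's. The paper proceeds by induction on $m$: it writes $\partial^{\bm e_i} f(\bm x) = \tilde g(\|\bm x\|)\,x_i$ with $\tilde g(x) := g'(x)/x$, checks that $\tilde g \in C^{m-1}(\R\setminus\{0\})$ satisfies the same hypotheses with $\nu$ replaced by $\nu-2$ (via the Leibniz expansion of $\tilde g^{(j)}$), and then applies the induction hypothesis to $\tilde g \circ \|\cdot\|$ before multiplying back by $h_i$. This keeps the argument entirely elementary and self-contained, at the cost of somewhat obscuring where the exponent $\nu - j$ comes from. You instead combine the multivariate Faà di Bruno formula with a homogeneity invariant for the derivatives of the norm, namely that $\partial^{\bm\alpha} r$ is a combination of monomials $\bm h^{\bm\gamma} r^{-q}$ with $|\bm\gamma| - q = 1 - |\bm\alpha|$, hence $\partial^{\bm\alpha} r = \mathcal O(r^{1-|\bm\alpha|})$; the exponent count $\nu - |\pi| + \sum_B(1 - |\bm\beta_B|) = \nu - j$ then holds uniformly over all partitions. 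Your argument is more transparent about the scaling mechanism and delivers the estimate in one pass rather than by induction, but it imports the combinatorial machinery of Faà di Bruno (and requires care that $|\pi| \leq j \leq m$ so the hypothesis on $g^{(|\pi|)}$ is actually available, which you do use implicitly and which does hold). Both proofs are complete; there is no gap in yours.
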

\begin{proof}
    By induction on $m$.
    $m=0$ is clear; the growth/decay of $f$ at $\bm 0$ is the same as that of $g$ at $0$.
    For $m>0$, $\bm x \in \R^d\setminus\{\bm 0\}$, and $1\leq i\leq d$, note that
    \begin{equation}\label{eq: partial f}
    \partial^{\bm e_i}f(\bm x) = g'(\|\bm x\|)\frac{x_i}{\|\bm x\|}= \tilde{g}(\|\bm x\|)x_i
    \end{equation}
    where $\bm e_i$ is the $i$\textsuperscript{th} unit vector and $\tilde{g}(x) :=  \frac{g'(x)}{x}$ for $x\in \R\setminus \{0\}$.
    Clearly $\tilde{g}\in C^{m-1}(\R\setminus\{0\})$ and
    $$\tilde{g}^{(j)}(h) = \sum_{l=0}^{j} \binom{j}{l} g^{(l+1)}(h)(-1)^{j-l}(j-l)!h^{-1-j+l}=\mathcal O(|h|^{(\nu - 2) - j})$$
    as $h\to 0$, for all $0\leq j\leq m-1$.
    Let $\tilde{f} = \tilde{g} \circ \|\cdot\|$.
    By the induction hypothesis, $\partial^{\bm \alpha} \tilde{f}(\bm h) = \mathcal O(\|\bm h\|^{(\nu - 2) - j})$ as $\bm h\to \bm 0$, for $|\bm \alpha| = j$ and $0 \leq j \leq m - 1$. Thus, by (\ref{eq: partial f}),
    \begin{equation*}
        \partial^{\bm \alpha} \partial^{\bm e_i} f(\bm h)
        = \left\{ \begin{aligned}
            & h_i \partial^{\bm \alpha} \tilde{f}(\bm h) &\ & \text{if } \alpha_i = 0 \\
            & h_i \partial^{\bm \alpha} \tilde{f}(\bm h) + \partial^{\bm \alpha - \bm e_i} \tilde{f}(\bm h) && \text{if } \alpha_i > 0
        \end{aligned} \right\}
        = \mathcal O(\|\bm h\|^{\nu - (j + 1)})
    \end{equation*}
    as $\bm h\to \bm 0$.
    Consequently, $\partial^{\bm \alpha} f(\bm h) = \mathcal O(\|\bm h\|^{\nu - j})$ as $\bm h\to \bm 0$ for $|\bm \alpha| = j$ and $0 \leq j \leq m$.
    This concludes the induction step.
\end{proof}
\begin{proof}\textbf{of Lemma \ref{lem equiv} \ref{lem equiv: general case} $\Leftrightarrow$ \ref{lem equiv: isotropic case}}
    Let $k$ be isotropic, i.e.~$k(\bm x, \bm y) = k_r(\| \bm x - \bm y \|)$.
    By Lemma \ref{lem equiv} \ref{lem equiv: general case} $\Leftrightarrow$ \ref{lem equiv: stationary case}, it suffices to show that the condition on $k_\delta$ in \ref{lem equiv: stationary case} is equivalent to the condition on $k_r$ in \ref{lem equiv: isotropic case}.

    \textbf{\ref{lem equiv: stationary case} $\Rightarrow$ \ref{lem equiv: isotropic case}:} This follows from the fact that $x \mapsto k_\delta(x \bm e_1)$ is exactly $k_r$, and analogously $x \mapsto \partial^{j \bm e_1}k_\delta(x \bm e_1)$ is equal to $k_r^{(j)}$ for $0\leq j\leq 2n$.

    \textbf{\ref{lem equiv: isotropic case} $\Rightarrow$ \ref{lem equiv: stationary case}:}
    $k_r\colon\R \to \R$ is even, so the odd derivatives of $k_r$ which exist at 0 vanish there.
    Let
    $$g(x) := k_r(x)-\sum_{j=0}^n\frac{x^{2j}}{j!}k_r^{(2j)}(0),$$
    for all $x\in \R$.
    The Lagrange form of the remainder in an order $2n-j -1$ Taylor expansion of $g^{(j)}$ then reveals that
    $$
    g^{(j)}(h) = \frac{h^{2n-j}}{(2n-j)!} (k_r^{(2n)}(\xi_j(h))-k_r^{(2n)}(0)) = \mathcal O(|h|^{2n-j+2\epsilon})
    $$
    as $h\to 0$, for all $0\leq j\leq 2n$ and where $\xi_j(h)\in (0,h)$ (or $(h,0)$ if $h<0$). So $g \vert_{\R \setminus \{0\}}$ satisfies the conditions of Lemma \ref{lem norm concat} with $m = 2n$ and $\nu = 2n+2\epsilon$.
    Let $f:= g\circ \|\cdot\|\colon\R^d \to \R$.
    Then, by Lemma \ref{lem norm concat}, $f \vert_{\R^d \setminus \{\bm 0\}}\in C^{2n}(\R^d\setminus\{\bm 0\})$, and $\partial^{\bm \alpha} f(\bm h) = \mathcal O(\|\bm h\|^{2n+2\epsilon-j})$ as $\bm h\to \bm 0$ for $|\bm \alpha| = j$ and $0\leq j\leq 2n$.
    In particular, $\partial^{\bm \alpha} f(\bm h) \to 0$ as $\bm h \to \bm 0$, and by the mean value theorem this is sufficient to deduce that $\partial^{\bm \alpha} f(\bm 0)$ exists and is 0. Now by noting that $\|\bm x\|^{2j} =(x_1^2+\dots+x_d^2)^j$ is smooth in $\bm x$ for all $j$, we have that
    $$k_\delta(\bm x) = f(\bm x) +\sum_{j=0}^n\frac{\|\bm x\|^{2j}}{j!}k_r^{(2j)}(0)$$
    is in $C^{2n}(\R)$.
    Moreover, for $\bm h\in \R^d$ and $|\bm\alpha| = 2n$ there is a constant $C_{\bm\alpha}\in \R$ such that $\partial^{\bm \alpha}k_\delta(\bm h) = \partial^{\bm \alpha} f(\bm h)+C_{\bm\alpha}$, implying that
    $$|\partial^{\bm\alpha}k_\delta(\bm h)-\partial^{\bm\alpha}k_\delta(\bm 0)| = |\partial^{\bm \alpha} f(\bm h)-\underbrace{\partial^{\bm \alpha} f(\bm 0)}_{= 0}| = \mathcal O(\|\bm h\|^{2n+2\epsilon-2n}) = \mathcal O(\|\bm h\|^{2\epsilon})$$
    as $\bm h\to \bm 0$. Thus $k_\delta$ satisfies the Hölder condition in \ref{lem equiv: stationary case}.
\end{proof}
Note that Lemma \ref{lem equiv} is purely a result about stationary and isotropic functions; the proof does make use of the positive definiteness of $k$.
In fact, the positive definiteness of $k$ gives us additional information about its regularity, which can be seen through Lemma \ref{lem extrapol} below.

For a GP $f$, the mean-square partial derivative $\partial^{\bm e_i}_{ms}f$, when it exists, is a stochastic process defined on the same probability space such that $\partial^{\bm e_i}_{ms}f(\bm x)$ is the $L^2$ limit of $\frac{f(\bm x+h\bm e_i)-f(\bm x)}{h}$ as $h\to 0$. The existence of a mean-square partial derivative is neither sufficient nor necessary for the existence of sample derivatives. However it is sufficient if we know in addition that this derivative is mean-square continuous and has continuous samples \cite[Theorem 3.2]{potthoff_sample_2010}. Moreover, unlike sample derivatives, mean-square derivatives are directly related to the derivatives of the covariance kernel:
\begin{lemma}\label{lem extrapol}
Let $n\in\N_0$. For $f\sim \GP(0,k)$, the following conditions are equivalent:
\begin{enumerate}[label=(\arabic*)] 
    \item\label{lem extrapol: all} $k\in C^{n\otimes n}(O\times O)$, \\
    \item\label{lem extrapol: diag} $\partial^{\bm \alpha,\bm \alpha}k$ exists in a neighbourhood the diagonal and is continuous there for all $|\bm \alpha|\leq n$, \\
    \item\label{lem extrapol: ms dve} the mean-square partial derivative $\partial^{\bm\alpha}_{ms}f$ exists and is mean-square continuous for all $|\bm \alpha|\leq n$.
\end{enumerate}
Moreover, if either condition holds then we have $\partial^{\bm\alpha}_{ms}f\sim \GP(0,\partial^{\bm\alpha,\bm\alpha}k)$ for all $|\bm\alpha|\leq n$.
\end{lemma}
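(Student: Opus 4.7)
The plan is to prove the cyclic chain $\ref{lem extrapol: all} \Rightarrow \ref{lem extrapol: diag} \Rightarrow \ref{lem extrapol: ms dve} \Rightarrow \ref{lem extrapol: all}$. The first implication is immediate from the definitions: continuity of $\partial^{\bm\alpha,\bm\beta}k$ on all of $O \times O$ for $|\bm\alpha|,|\bm\beta|\leq n$ restricts to continuity of $\partial^{\bm\alpha,\bm\alpha}k$ near the diagonal for $|\bm\alpha|\leq n$. For $\ref{lem extrapol: ms dve} \Rightarrow \ref{lem extrapol: all}$, I would show inductively on $|\bm\alpha|+|\bm\beta|$ that
\[
\partial^{\bm\alpha,\bm\beta}k(\bm x,\bm y) = \cov\bigl(\partial^{\bm\alpha}_{ms} f(\bm x),\, \partial^{\bm\beta}_{ms} f(\bm y)\bigr)
\]
on all of $O\times O$ for $|\bm\alpha|,|\bm\beta|\leq n$. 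The inductive step just exchanges $\cov$ with the $L^2$-limit defining each mean-square derivative, using the $L^2$-continuity of $\cov$ via Cauchy--Schwarz. Continuity of $\partial^{\bm\alpha,\bm\beta}k$ on $O\times O$ then follows from mean-square continuity of the derivatives by the same Cauchy--Schwarz bound, and this identity also delivers the final GP claim, since $L^2$-limits of Gaussians are Gaussian.

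The core of the argument is $\ref{lem extrapol: diag} \Rightarrow \ref{lem extrapol: ms dve}$, which I would prove by induction on $n$. The base case $n=0$ reduces to mean-square continuity of $f$ and follows from
\[
\E\bigl[(f(\bm x_m) - f(\bm x))^2\bigr] = k(\bm x_m, \bm x_m) - 2k(\bm x_m, \bm x) + k(\bm x, \bm x),
\]
together with continuity of $k$ on a neighbourhood of the diagonal. For the inductive step with $|\bm\alpha|=n$, write $\bm\alpha = \bm\beta + \bm e_i$ with $|\bm\beta|=n-1$. By the inductive hypothesis combined with $\ref{lem extrapol: ms dve} \Rightarrow \ref{lem extrapol: all}$ at level $n-1$, the process $g := \partial^{\bm\beta}_{ms} f$ is a centered GP with covariance $\tilde k = \partial^{\bm\beta,\bm\beta}k$; by Clairaut, which applies under the regularity already obtained, $\partial^{\bm e_i,\bm e_i}\tilde k = \partial^{\bm\alpha,\bm\alpha}k$ exists and is continuous near the diagonal. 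It therefore suffices to apply the $n=1$ case to $g$.

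For the $n=1$ case I would compute, with $D_h f(\bm x) := (f(\bm x + h\bm e_i) - f(\bm x))/h$,
\[
\E\bigl[(D_h f(\bm x) - D_{h'} f(\bm x))^2\bigr],
\]
expanding into a sum of three second-order differences of $k$ around $(\bm x, \bm x)$, scaled by $1/h^2$, $-2/(hh')$, $1/h'^2$. Using the symmetry $k(\bm u,\bm v)=k(\bm v,\bm u)$, each such difference rewrites as
\[
k(\bm x + s\bm e_i, \bm x + t\bm e_i) - k(\bm x + s\bm e_i, \bm x) - k(\bm x, \bm x + t\bm e_i) + k(\bm x, \bm x),
\]
which by the two-variable mean value theorem equals $st\cdot\partial^{\bm e_i,\bm e_i}k$ evaluated at an intermediate point, and hence tends to $\partial^{\bm e_i,\bm e_i}k(\bm x,\bm x)$ as $s,t\to 0$. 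The three terms therefore cancel in the limit, so $D_h f(\bm x)$ is $L^2$-Cauchy and converges to a Gaussian $\partial^{\bm e_i}_{ms} f(\bm x)$. Mean-square continuity follows from the same sort of expansion using continuity of $\partial^{\bm e_i,\bm e_i}k$ near the diagonal. The main obstacle is the symmetrisation step that recasts the first and third differences in a form amenable to the mixed-partial mean value theorem; once this is done, the rest of the argument is a routine differentiation under a Gaussian $L^2$ limit.
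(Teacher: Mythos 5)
Your proposal is correct and follows essentially the same route as the paper: the implication \ref{lem extrapol: diag} $\Rightarrow$ \ref{lem extrapol: ms dve} via the $L^2$-Cauchy criterion on difference quotients (reducing to second-order differences of $k$ near the diagonal), the implication \ref{lem extrapol: ms dve} $\Rightarrow$ \ref{lem extrapol: all} by identifying $\partial^{\bm\alpha,\bm\beta}k$ with covariances of mean-square derivatives obtained as $L^2$-limits, and the Gaussianity of the limit from convergence in distribution. The paper likewise proves the case $n=1$ and iterates on the partial derivatives, so the only cosmetic difference is that you make the two-variable mean value theorem and the Clairaut step explicit where the paper leaves them implicit.
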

\begin{proof} We prove this for $n=1$; for general $n$ it then suffices to apply the same argument inductively on the partial derivatives.

\textbf{\ref{lem extrapol: all} $\Rightarrow$ \ref{lem extrapol: diag}:} This implication is clear.

\textbf{\ref{lem extrapol: diag} $\Rightarrow$ \ref{lem extrapol: ms dve}:}
Let $1\leq i\leq d$. Note that
\begin{equation}\label{eq: ms to kernel}
\begin{aligned}
    &\E\left[\left(\frac{f(\bm x+h\bm e_i)-f(\bm x)}{h} \right)\left(\frac{f(\bm y+h'\bm e_i)-f(\bm y)}{h'}\right)\right] \\
    &= \frac{\E[f(\bm x+h\bm e_i)f(\bm y +h'\bm e_i)]-\E[f(\bm x+h\bm e_i)f(\bm y)]-\E[f(\bm x)f(\bm y+h'\bm e_i)]+\E[f(\bm x)f(\bm y)]}{hh'} \\
    &= \frac{k(\bm x+h\bm e_i,\bm y +h'\bm e_i)-k(\bm x+h\bm e_i,\bm y)-k(\bm x,\bm y+h'\bm e_i) + k(\bm x,\bm y)}{hh'} \\
    &\to \partial^{\bm e_i,\bm e_i}k(\bm x,\bm y)
\end{aligned}
\end{equation}
as $h, h' \to 0$, for $\bm x,\bm y\in O$ close enough, since $\partial^{\bm e_i,\bm e_i}k$ is assumed to be continuous in a neighbourhood of the diagonal.
Thus we have
\begin{equation}\label{eq: ms convergence}
\begin{aligned}
    &\E \left[\left( \frac{f(\bm x+h\bm e_i)-f(\bm x)}{h} -\frac{f(\bm x+h'\bm e_i)-f(\bm x)}{h'} \right)^2\right] \\
    &= \E\left[\left( \frac{f(\bm x+h\bm e_i)-f(\bm x)}{h}\right)^2\right] - 2\E\left[\left(\frac{f(\bm x+h\bm e_i)-f(\bm x)}{h} \right)\left(\frac{f(\bm x+h'\bm e_i)-f(\bm x)}{h'}\right)\right] \\
    &\quad+ \E\left[\left( \frac{f(\bm x+h'\bm e_i)-f(\bm x)}{h'}\right)^2\right] \\
    &\to \partial^{\bm e_i,\bm e_i}k(\bm x,\bm x) -2\partial^{\bm e_i,\bm e_i}k(\bm x,\bm x)+\partial^{\bm e_i,\bm e_i}k(\bm x,\bm x) = 0
\end{aligned}
\end{equation}
where the limit is taken as $h,h'\to 0$. \sloppy Since $L^2$ is complete, Equation (\ref{eq: ms convergence}) shows that $\lim_{h\to 0} \frac{f(\bm x+h\bm e_i)-f(\bm x)}{h}$ exists in $L^2$, i.e.~that $\partial_{ms}^{\bm e_i}f(\bm x)$ exists. Moreover Equation (\ref{eq: ms to kernel}) then implies that $\E\left[\partial^{\bm e_i}_{ms}f(\bm x)\partial^{\bm e_i}_{ms}f(\bm y)\right] =\partial^{\bm e_i,\bm e_i}k(\bm x,\bm y)$ for $\bm x,\bm y\in O$ close enough. Hence
$$
\begin{aligned}
& \E\left[(\partial^{\bm e_i}_{ms}f(\bm x+\bm h)-\partial^{\bm e_i}_{ms}f(\bm x))^2\right] \\
& =\E\left[\partial^{\bm e_i}_{ms}f(\bm x+\bm h)^2\right]-2\E\left[\partial^{\bm e_i}_{ms}f(\bm x+\bm h)\partial^{\bm e_i}_{ms}f(\bm x)\right]+\E\left[\partial^{\bm e_i}_{ms}f(\bm x)^2\right] \\
&= \partial^{\bm e_i,\bm e_i}k(\bm x+\bm h,\bm x+\bm h)-2\partial^{\bm e_i,\bm e_i}k(\bm x+\bm h,\bm x)+\partial^{\bm e_i,\bm e_i}k(\bm x,\bm x) \\
&\to 0
\end{aligned}
$$
as $\bm h\to \bm 0$ since $\partial^{\bm e_i,\bm e_i}k$ is continuous. So $\partial^{\bm e_i}_{ms}f$ is mean-square continuous.

\textbf{\ref{lem extrapol: ms dve} $\Rightarrow$ \ref{lem extrapol: all}:} For $\bm x,\bm y\in O$ and $1\leq i\leq d$ we have
$$\frac{k(\bm x+ h\bm e_i, \bm y)-k(\bm x,\bm y)}{h} = \E\left[ \frac{f(\bm x+h \bm e_i)-f(\bm x)}{h}f(\bm y) \right] \to \E\left[\partial^{\bm e_i}_{ms}f(\bm x)f(\bm y)\right]$$
as $h\to 0$. Hence $\partial^{\bm e_i,\bm 0}k(\bm x,\bm y)$ exists. Furthermore, for $1\leq j\leq d$,
    \begin{equation}\label{eq: kernel mixed dve}
    \begin{aligned}
    \frac{\partial^{\bm e_i,\bm 0}k(\bm x, \bm y+h\bm e_j)-\partial^{\bm e_i,\bm 0}k(\bm x,\bm y)}{h}&=\E\left[ \partial^{\bm e_i}_{ms}f(\bm x) \frac{f(\bm y+h\bm e_j)-f(\bm y)}{h}\right] \\
    &\to \E\left[ \partial^{\bm e_i}_{ms}f(\bm x)\partial^{\bm e_j}_{ms}f(\bm y)\right]
    \end{aligned}
    \end{equation}
as $h\to 0$. Hence $\partial^{\bm e_i,\bm e_j}k$ exists, and is continuous by Equation (\ref{eq: kernel mixed dve}), since $\partial^{\bm e_i}_{ms}f$ and $\partial^{\bm e_j}_{ms}f$ are mean-square continuous.

Now it remains to show that in all cases $\partial^{\bm e_i}_{ms}f \sim \GP(0,\partial^{\bm e_i,\bm e_i}k)$. Let $\{\bm x_1,\dots,\bm x_N\}\subset O$ a finite set of points. Then we need to show
    \begin{equation}\label{eq: gauss distribution derivative}
    (\partial^{\bm e_i}_{ms}f(\bm x_1),\dots,\partial^{\bm e_i}_{ms}f(\bm x_N))\sim \mathcal N \left(\bm 0, \left(\partial^{\bm e_i,\bm e_i}k(\bm x_p,\bm x_q)\right)_{p,q=1}^N\right).
    \end{equation}
    $(\partial^{\bm e_i}_{ms}f(\bm x_1),\dots,\partial^{\bm e_i}_{ms}f(\bm x_N))$ is the limit in $L^2(\Omega;\R^N)$ as $h\to 0$ of
    \begin{equation}\label{eq: gauss distribution difference}
    \begin{aligned}
    &\left(\frac{f(\bm x_p+h\bm e_i)-f(\bm x_p)}{h}\right)_{p=1}^N \\
    \sim \mathcal N \bigg(\bm 0,\bigg(&\frac{k(\bm x_p+h\bm e_i,\bm x_q+h\bm e_i)-k(\bm x_p+h\bm e_i,\bm x_q)-k(\bm x_p,\bm x_q +h\bm e_i)+k(\bm x_p,\bm x_q)}{h^2}\bigg)_{p,q=1}^N\bigg).
    \end{aligned}
    \end{equation}
    Now convergence in $L^2$ implies convergence in distribution, and we see that the multivariate normal distribution in Equation (\ref{eq: gauss distribution difference}) converges to the multivariate normal distribution in Equation (\ref{eq: gauss distribution derivative}) as $h\to 0$. So this shows $\partial^{\bm e_i}_{ms}f \sim \GP(0,\partial^{\bm e_i,\bm e_i}k)$.
\end{proof}
We can now prove our main result, Theorem \ref{thm: holder}, which we restate here for convenience:
\thmholder*
\begin{proof}\label{proof: mainthm}
    We only prove the general case \ref{thm holder: general case}.
    Cases \ref{thm holder: stationary case} and \ref{thm holder: isotropic case} then follow by taking intersections over $\epsilon \in (0,\gamma)$ in Lemma \ref{lem equiv}.

    \textbf{$\Longleftarrow$:} For $n= 0$ the result follows from applying the Kolmogorov continuity theorem to GPs, see \cite[Theorem 2.14]{nummi_necessary_2024} or \cite[Page 347]{potthoff_sample_2009}.

    For $n = 1$, the existence and continuity of $\partial^{\bm e_i,\bm e_i}k$ implies by Lemma \ref{lem extrapol} \ref{lem extrapol: all} $\Rightarrow$ \ref{lem extrapol: ms dve} that $f$ is mean-square differentiable in direction $i$. Moreover this mean-square partial derivative satisfies $\partial^{\bm e_i}_{ms}f\sim\GP(0,\partial^{\bm e_i,\bm e_i}k)$. Now applying the case $n= 0$ to $\partial^{\bm e_i,\bm e_i}k$ we deduce that $\partial^{\bm e_i}_{ms}f$ has samples in\footnote{The mean-square partial derivative $\partial^{\bm e_i}_{ms}f(\bm x)$ at a point $\bm x\in O$ is only well-defined almost surely on $\Omega$. But we still talk of sample paths for the process $\partial^{\bm e_i}_{ms}f$, since we mean this up to modification, as described in Section \ref{sec: preliminaries}.} $C^{\gamma^-}_{\loc}(O)$. In particular it has continuous samples, and this implies by \cite[Theorem 3.2]{potthoff_sample_2010} that $f$ has differentiable samples in direction $i$, with $\partial^{\bm e_i}f(\bm x) = \partial^{\bm e_i}_{ms}f(\bm x)$ almost surely, for all $\bm x \in O$. Thus $\partial^{\bm e_i}f\sim\GP(0,\partial^{\bm e_i,\bm e_i}k)$ and $\partial^{\bm e_i}f$ has samples in $C^{\gamma^-}_{\loc}(O)$. $1\leq i\leq d$ was arbitrary, so $f$ has samples in $C^{(1+\gamma)^-}_{\loc}(O)$.
    
    For $n >1$, we apply the same argument inductively on the partial derivatives.

    \textbf{$\Longrightarrow$:} For $n = 0$, \cite[Theorem 2.14]{nummi_necessary_2024} gives the converse to Kolmogorov's theorem for GPs (or, more generally, certain hypercontractive processes).

    Now suppose $n = 1$. Pick $1\leq i\leq d$. For any finite set of points $\{\bm x_1,\dots,\bm x_N\}\subset O$, $(\partial^{\bm e_i}f(\bm x_1),\dots,\partial^{\bm e_i}f(\bm x_N))$ is the almost sure limit of centered multivariate Gaussians with distribution
    $$\mathcal N \bigg(\bm 0,\bigg(\frac{k(\bm x_p+h\bm e_i,\bm x_q+h\bm e_i)-k(\bm x_p+h\bm e_i,\bm x_q)-k(\bm x_p,\bm x_q +h\bm e_i)+k(\bm x_p,\bm x_q)}{h^2}\bigg)_{p,q=1}^N\bigg)$$
    (see Equation (\ref{eq: gauss distribution difference})), so is itself a centered multivariate Gaussian distribution. Hence $\partial^{\bm e_i}f$ is a GP.

    Let $\bm x, \bm x+h\bm e_i\in O$. By the mean value theorem, for each $\omega\in \Omega$ there is $\xi_{\bm x,\omega}(h)$ between $\bm x$ and $\bm x+h\bm e_i$ such that
    $$\frac{f(\bm x+h\bm e_i, \omega)-f(\bm x, \omega)}{h} = \partial^{\bm e_i}f(\xi_{\bm x,\omega}(h), \omega).$$
    Thus
    $$\left|\frac{f(\bm x+h\bm e_i, \omega)-f(\bm x, \omega)}{h}-\partial^{\bm e_i}f(\bm x, \omega)\right|^2= |\partial^{\bm e_i}f(\xi_{\bm x,\omega}(h), \omega)-\partial^{\bm e_i}f(\bm x, \omega)|^2 \leq C_\omega^2|h|^{2\epsilon}$$
    for all $\epsilon\in(0,\gamma)$, for some constant $C_\omega$ depending on $\omega$ but not on $h$, assuming $h$ is small enough. Thus we have
    $$
    \E\left[\left(\frac{f(\bm x+h\bm e_i)-f(\bm x)}{h}-\partial^{\bm e_i}f(\bm x)\right)^2\right] \leq \underbrace{\E\left[C_\omega^2\right]}_{(*)}\cdot\underbrace{|h|^{2\epsilon}}_{(**)}
    $$
    for $h$ small enough and $\epsilon \in (0,\gamma)$. $(*) <\infty$ by \cite[Theorem 2.14]{nummi_necessary_2024}, which we can apply since we showed that $\partial^{\bm e_i}f$ is a GP. Also $(**)\to 0$ as $h\to 0$. Hence
    $$\frac{f(\bm x+h\bm e_i)-f(\bm x)}{h} \xrightarrow{L^2} \partial^{\bm e_i}f(\bm x)$$
    as $h\to 0$, i.e.~$f$ is mean-square differentiable in direction $i$. Hence by Lemma \ref{lem extrapol} \ref{lem extrapol: ms dve} $\Rightarrow$ \ref{lem extrapol: all}, $\partial^{\bm e_i,\bm e_j}k$ exists and is continuous for all $1\leq i,j\leq d$, and $\partial^{\bm e_i}f\sim \GP(0,\partial^{\bm e_i,\bm e_i}k)$. Now
    \begin{equation}\label{eq: holder kernel}
    \begin{aligned}
    &\partial^{\bm e_i,\bm e_j}k(\bm x+\bm h,\bm x+\bm h)-\partial^{\bm e_i,\bm e_j}k(\bm x+\bm h,\bm x)-\partial^{\bm e_i,\bm e_j}k(\bm x,\bm x+\bm h)+\partial^{\bm e_i,\bm e_j}k(\bm x,\bm x) \\
    &= \E\left[(\partial^{\bm e_i}f(\bm x+\bm h)-\partial^{\bm e_i}f(\bm x))(\partial^{\bm e_j}f(\bm x+\bm h)-\partial^{\bm e_j}f(\bm x))\right] \\
    &\leq \E\left[(\partial^{\bm e_i}f(\bm x+\bm h)-\partial^{\bm e_i}f(\bm x))^2\right]^{\nicefrac{1}{2}}\E\left[(\partial^{\bm e_j}f(\bm x+\bm h)-\partial^{\bm e_j}f(\bm x))^2\right]^{\nicefrac{1}{2}} \\
    &\leq \underbrace{\E\left[A_\omega^2\right]^{\nicefrac{1}{2}}}_{<\infty}\|\bm h\|^\epsilon\cdot\underbrace{\E\left[B_\omega^2\right]^{\nicefrac{1}{2}}}_{<\infty}\|\bm h\|^\epsilon = \mathcal O(\|\bm h\|^{2\epsilon})
    \end{aligned}
    \end{equation}
    as $\bm h \to\bm 0$, locally uniformly in $\bm x\in O$, for all $\epsilon\in (0,\gamma)$, where $A_\omega$, $B_\omega$ are some constants depending on $\omega$, by \cite[Theorem 2.14]{nummi_necessary_2024} applied to $\partial^{\bm e_i}f$ and $\partial^{\bm e_j}f$, for $1\leq i,j\leq d$.

    Finally, for $n>1$ we apply the same argument inductively on the partial derivatives.
\end{proof}
\section{Proofs in Examples}\label{app: proofs}
\subsection{Proof of Proposition \ref{prop: matern}}\label{proof: matern}
\begin{proof}
    For $\rho >0$ we can write for $\nu\not\in\N$
    $$K_\nu(\rho) = \frac{\pi}{2}\frac{I_{-\nu}(\rho)-I_\nu(\rho)}{\sin(\nu\pi)}$$
    where
    $$I_{\pm\nu}(\rho) = \left(\frac{\rho}{2}\right)^{\pm\nu}\sum_{j=0}^\infty \frac{1}{j!\,\Gamma(j\pm\nu+1)}\left(\frac{\rho}{2}\right)^{2j} $$
    are modified Bessel functions of the first kind \cite[Equations 9.6.2 \& 9.6.10]{abramowitz_handbook_1965}. So we can write the Matérn kernel (\ref{eq: matern kernel}) as
    $$k_r(x) = C_\nu \bigg( \underbrace{\sum_{j=0}^\infty \frac{1}{j!\,\Gamma(j-\nu+1)}\left(\frac{x}{2}\right)^{2j}}_{(*)}- \frac{|x|^{2\nu}}{2^\nu}\underbrace{\sum_{j=0}^\infty \frac{1}{j!\,\Gamma(j+\nu+1)}\left(\frac{x}{2}\right)^{2j}}_{(**)}\bigg)$$
    for $x\in\R$, where $C_\nu>0$ is some constant. The power series $(*)$ and $(**)$ are smooth in $x$, and $|x|^{2\nu}$ has precisely $\lceil 2\nu-1\rceil$  continuous derivatives with $(2\nu-2\lceil\nu-1\rceil)$-Hölder continuous $2\lceil\nu-1\rceil$\textsuperscript{th} derivative at 0. So, for $\nu\not\in\N$, the result follows by \thmitemref{Theorem}{thm: holder}{thm holder: isotropic case}.

    For $\nu=n\in\N$, we instead use the following formula for the modified Bessel function of the second kind \cite[Equations 9.6.11]{abramowitz_handbook_1965}: for $\rho>0$
    $$
    \begin{aligned}
    K_n(\rho)&=\frac{1}{2}\left(\frac{\rho}{2}\right)^{-n} \sum_{j=0}^{n-1} \frac{(n-j-1) !}{j !}\left(-\frac{\rho^2}{4} \right)^j+(-1)^{n+1} \log \left(\frac{\rho}{2} \right) I_n(\rho) \\
    &+(-1)^n \frac{1}{2}\left(\frac{\rho}{2}\right)^n \sum_{j=0}^{\infty}\left(\psi(j+1)+\psi(n+j+1)\right) \frac{\left(\frac{\rho^2}{4}\right)^j}{j !(n+j) !}
    \end{aligned}
    $$
    where $\psi$ is the digamma function. So we can write the Matérn kernel (\ref{eq: matern kernel}) as
    $$
    \begin{aligned}
    k_r(x) &= A_n\underbrace{\sum_{j=0}^{n-1} \frac{(n-j-1) !}{j !}(-1)^j\left(-\frac{x^2}{4} \right)^j}_{(I)}+B_n\underbrace{x^{2n}\log \left(\frac{x}{2} \right)}_{(II)} \underbrace{\sum_{j=0}^\infty \frac{1}{j!(n+j)!}\left(\frac{x}{2}\right)^{2j}}_{(III)} \\
    &+C_n\underbrace{x^{2n} \sum_{j=0}^{\infty}\left(\psi(j+1)+\psi(n+j+1)\right) \frac{\left(\frac{x}{2}\right)^{2j}}{j !(n+j) !}}_{(IV)}
    \end{aligned}
    $$
    for some constants $A_n,B_n,C_n\neq 0$. $(I)$ and $(IV)$ are smooth in $x$. $(III)$ is smooth in $x$ and is non-zero at $x=0$. $(II)$ has precisely $2n-1$ continuous derivatives, with almost-$2$-Hölder continuous $(2n-2)$\textsuperscript{th} derivative at 0 (precisely, a Hölder decay of $\mathcal O(h^2\log h)$).
    So the result follows by \thmitemref{Theorem}{thm: holder}{thm holder: isotropic case}.
\end{proof}
\subsection{Proof of Proposition \ref{prop: wendland}}\label{proof: wendland}
\begin{proof}
    It is shown in \cite[Theorem 9.12]{wendland_scattered_2004} that the Wendland kernels may be written as
    $$k_r(x) = \sum_{j=0}^{\lfloor \nicefrac{d}{2}\rfloor +3n+1}d^{(\lfloor \nicefrac{d}{2}\rfloor +n+1)}_{j,n}|x|^j$$
    for $x\in\R$, where $d^{(\lfloor \nicefrac{d}{2}\rfloor +n+1)}_{j,n}\in\R$ are coefficients. Furthermore the odd degree coefficients satisfy
    $d^{(\lfloor \nicefrac{d}{2}\rfloor +n+1)}_{2j+1,n} = 0$ if and only if $0\leq j\leq 2n-1$. Now for all $j\in \N_0$, $|x|^{2j}$ is smooth and $|x|^{2j+1}$ is precisely $2n$ times continuously differentiable with Lipschitz $2n$\textsuperscript{th} derivative. So the result follows by \thmitemref{Theorem}{thm: holder}{thm holder: isotropic case}.
\end{proof}
\subsection{Proof of Proposition \ref{prop: feature}}\label{proof: feature}
\begin{proof}
    This follows from \thmitemref{Theorem}{thm: holder}{thm holder: general case}, by noting that $$\partial^{\bm\alpha,\bm\beta}k(\bm x,\bm y) = \partial^{\bm\alpha}\bm \phi(\bm x)^T\partial^{\bm\beta}\bm \phi(\bm y)$$
    for all $\bm x,\bm y\in O$ and $\bm \alpha, \bm\beta\in\N_0^d$ with $|\bm \alpha|,|\bm \beta|\leq n$, and moreover when $|\bm \alpha|=|\bm \beta|=n$,
    $$
    \begin{aligned}
    &|\partial^{\bm\alpha,\bm\beta}k(\bm x+\bm h,\bm x+\bm h)-\partial^{\bm\alpha,\bm\beta}k(\bm x+\bm h,\bm x)-\partial^{\bm\alpha,\bm\beta}k(\bm x,\bm x+\bm h)+\partial^{\bm\alpha,\bm\beta}k(\bm x,\bm x)| \\
    &= (\partial^{\bm\alpha}\bm \phi(\bm x+\bm h)-\partial^{\bm\alpha}\bm\phi(\bm x))^T(\partial^{\bm\beta}\bm \phi(\bm x+\bm h)-\partial^{\bm\beta}\bm\phi(\bm x))\\
    &\leq \left(\sum_{i=1}^m|\partial^{\bm\alpha}\phi_i(\bm x+\bm h)-\partial^{\bm\alpha}\phi_i(\bm x)|^2\right)^{\nicefrac{1}{2}}\left(\sum_{i=1}^m|\partial^{\bm\beta}\phi_i(\bm x+\bm h)-\partial^{\bm\beta}\phi_i(\bm x)|^2\right)^{\nicefrac{1}{2}}\\
    &=\mathcal O(\|\bm h\|^{\bm \epsilon})\cdot \mathcal O(\|\bm h\|^{\bm \epsilon}) = \mathcal O(\|\bm h\|^{2\bm \epsilon})
    \end{aligned}
    $$
    as $\bm h\to \bm 0$, locally uniformly in $\bm x\in O$, for all $\epsilon \in (0,\gamma)$, where we used the Cauchy-Schwarz inequality.
\end{proof}
\subsection{Proof of Proposition \ref{prop: product}}\label{proof: product}
\begin{proof}
    We assume $m=2$; the proof for general $m$ can then be done inductively.
    Note that by the product rule
    $$\partial^{\bm\alpha,\bm\beta}k(\bm x,\bm y)= \sum_{\bm\gamma_1+\bm\gamma_2=\bm\alpha}\sum_{\bm\delta_1+\bm\delta_2=\bm\beta}\binom{|\bm \alpha|}{|\bm \gamma_1|}\binom{|\bm \beta|}{|\bm\delta_1|}\partial^{\bm\gamma_1,\bm\delta_1}k_1(\bm x,\bm y)\partial^{\bm\gamma_2,\bm\delta_2}k_2(\bm x,\bm y)$$
    for all $\bm x,\bm y\in O$ and $\bm \alpha,\bm\beta\in\N_0^d$ with $|\bm \alpha|,|\bm\beta|\leq n$.

    Now for $|\bm \alpha|= |\bm \beta| = n$,
    \begin{equation}\label{eq: diffted product expansion}
    \begin{aligned}
    &\partial^{\bm\alpha,\bm\beta}k(\bm x+\bm h,\bm x+\bm h)-\partial^{\bm\alpha,\bm\beta}k(\bm x+\bm h,\bm x)-\partial^{\bm\alpha,\bm\beta}k(\bm x,\bm x+\bm h)+\partial^{\bm\alpha,\bm\beta}k(\bm x,\bm x) \\
    &= \sum_{\bm\gamma_1+\bm\gamma_2=\bm\alpha}\sum_{\bm\delta_1+\bm\delta_2=\bm\beta}\binom{|\bm \alpha|}{|\bm \gamma_1|}\binom{|\bm \beta|}{|\bm\delta_1|} \Big(\partial^{\bm\gamma_1,\bm\delta_1}k_1(\bm x+\bm h,\bm x+\bm h)\partial^{\bm\gamma_2,\bm\delta_2}k_2(\bm x+\bm h,\bm x+\bm h) \\
    &\quad -\partial^{\bm\gamma_1,\bm\delta_1}k_1(\bm x+\bm h,\bm x)\partial^{\bm\gamma_2,\bm\delta_2}k_2(\bm x+\bm h,\bm x)-\partial^{\bm\gamma_1,\bm\delta_1}k_1(\bm x,\bm x+\bm h)\partial^{\bm\gamma_2,\bm\delta_2}k_2(\bm x,\bm x+\bm h)\\
    &\quad +\partial^{\bm\gamma_1,\bm\delta_1}k_1(\bm x,\bm x)\partial^{\bm\gamma_2,\bm\delta_2}k_2(\bm x,\bm x)\Big).
    \end{aligned}
    \end{equation}
    To show the expression in Equation (\ref{eq: diffted product expansion}) is $\mathcal O(\|\bm h\|^{2\epsilon})$, it is therefore sufficient to show that
    \begin{equation}\label{eq: holder cond product}
    \begin{aligned}
    &|\partial^{\bm\gamma_1,\bm\delta_1}k_1(\bm x+\bm h,\bm x+\bm h)\partial^{\bm\gamma_2,\bm\delta_2}k_2(\bm x+\bm h,\bm x+\bm h)-\partial^{\bm\gamma_1,\bm\delta_1}k_1(\bm x+\bm h,\bm x)\partial^{\bm\gamma_2,\bm\delta_2}k_2(\bm x+\bm h,\bm x) \\
    &\quad-\partial^{\bm\gamma_1,\bm\delta_1}k_1(\bm x,\bm x+\bm h)\partial^{\bm\gamma_2,\bm\delta_2}k_2(\bm x,\bm x+\bm h)+\partial^{\bm\gamma_1,\bm\delta_1}k_1(\bm x,\bm x)\partial^{\bm\gamma_2,\bm\delta_2}k_2(\bm x,\bm x)|= \mathcal O(\|\bm h\|^{2\epsilon})
    \end{aligned}
    \end{equation}
    as $\bm h\to \bm 0$, locally uniformly in $\bm x\in O$, for all $\epsilon\in (0,\gamma)$, $\bm\gamma_1+\bm\gamma_2=\bm\alpha$ and $\bm\delta_1+\bm\delta_2=\bm\beta$. The inside of the absolute value on the left hand side of Equation (\ref{eq: holder cond product}) can be written as
    \begin{equation}\label{eq: product expansion}
    \begin{aligned}
    &\partial^{\bm\gamma_1,\bm\delta_1}k_1(\bm x+\bm h,\bm x+\bm h)\partial^{\bm\gamma_2,\bm\delta_2}k_2(\bm x+\bm h,\bm x+\bm h)-\partial^{\bm\gamma_1,\bm\delta_1}k_1(\bm x+\bm h,\bm x)\partial^{\bm\gamma_2,\bm\delta_2}k_2(\bm x+\bm h,\bm x) \\
    &\quad-\partial^{\bm\gamma_1,\bm\delta_1}k_1(\bm x,\bm x+\bm h)\partial^{\bm\gamma_2,\bm\delta_2}k_2(\bm x,\bm x+\bm h)+\partial^{\bm\gamma_1,\bm\delta_1}k_1(\bm x,\bm x)\partial^{\bm\gamma_2,\bm\delta_2}k_2(\bm x,\bm x) \\
    &= \partial^{\bm\gamma_2,\bm\delta_2}k_2(\bm x+\bm h,\bm x+\bm h)\\
    &\quad\cdot\underbrace{(\partial^{\bm\gamma_1,\bm\delta_1}k_1(\bm x+\bm h,\bm x+\bm h)-\partial^{\bm\gamma_1,\bm\delta_1}k_1(\bm x+\bm h,\bm x)-\partial^{\bm\gamma_1,\bm\delta_1}k_1(\bm x,\bm x+\bm h)+\partial^{\bm\gamma_1,\bm\delta_1}k_1(\bm x,\bm x))}_{(*)} \\
    &\quad+\partial^{\bm\gamma_1,\bm\delta_1}k_1(\bm x,\bm x) \\
    &\quad\cdot\underbrace{(\partial^{\bm\gamma_2,\bm\delta_2}k_2(\bm x+\bm h,\bm x+\bm h)-\partial^{\bm\gamma_2,\bm\delta_2}k_2(\bm x+\bm h,\bm x)-\partial^{\bm\gamma_2,\bm\delta_2}k_2(\bm x,\bm x+\bm h)+\partial^{\bm\gamma_2,\bm\delta_2}k_2(\bm x,\bm x))}_{(**)} \\
    &\quad+\underbrace{(\partial^{\bm\gamma_1,\bm\delta_1}k_1(\bm x+\bm h,\bm x)-\partial^{\bm\gamma_1,\bm\delta_1}k_1(\bm x,\bm x))}_{(I)}\underbrace{(\partial^{\bm\gamma_2,\bm\delta_2}k_2(\bm x+\bm h,\bm x+\bm h)-\partial^{\bm\gamma_2,\bm\delta_2}k_2(\bm x+\bm h,\bm x))}_{(II)} \\
    &\quad+\underbrace{(\partial^{\bm\gamma_1,\bm\delta_1}k_1(\bm x,\bm x+\bm h)-\partial^{\bm\gamma_1,\bm\delta_1}k_1(\bm x,\bm x))}_{(III)}\underbrace{(\partial^{\bm\gamma_2,\bm\delta_2}k_2(\bm x+\bm h,\bm x+\bm h)-\partial^{\bm\gamma_2,\bm\delta_2}k_2(\bm x,\bm x+\bm h))}_{(IV)}
    \end{aligned}
    \end{equation}
    By arguing as in the proof of Theorem \ref{thm: holder}, Equation (\ref{eq: holder kernel}) we have $(*),(**)=\mathcal O(\|\bm h\|^{2\epsilon})$. Moreover, by the Cauchy-Schwarz inequality,
    \begin{equation}\label{eq: cauchy schwarz application}
    \begin{aligned}
        |(I)| &= \left|\E\big[\partial^{\bm\gamma_1}f_1(\bm x+\bm h)\partial^{\bm\delta_1}f_1(\bm x)-\partial^{\bm\gamma_1}f_1(\bm x)\partial^{\bm\delta_1}f_1(\bm x)\big]\right| \\
        &\leq \E\big[\partial^{\bm\delta_1}f_1(\bm x)^2\big]^{\nicefrac{1}{2}}\E\left[(\partial^{\bm\gamma_1}f_1(\bm x+\bm h)-\partial^{\bm\gamma_1}f_1(\bm x))^2\right]^{\nicefrac{1}{2}} \\
        &= \partial^{\bm\delta_1,\bm\delta_1}k_1(\bm x,\bm x)^{\nicefrac{1}{2}}(\partial^{\bm\gamma_1,\bm\gamma_1}k_1(\bm x+\bm h,\bm x+\bm h)-\partial^{\bm\gamma_1,\bm\gamma_1}k_1(\bm x+\bm h,\bm x) \\
        &\quad-\partial^{\bm\gamma_1,\bm\gamma_1}k_1(\bm x,\bm x+\bm h)+\partial^{\bm\gamma_1,\bm\gamma_1}k_1(\bm x,\bm x))^{\nicefrac{1}{2}} \\
        &= \mathcal O(\|\bm h\|^{\epsilon})
    \end{aligned}
    \end{equation}
    where $f_1\sim \GP(0,k_1)$. Similarly $(II),(III),(IV) = \mathcal O(\|\bm h\|^{\epsilon})$. By Equation (\ref{eq: product expansion}) we therefore deduce Equation (\ref{eq: holder cond product}), which concludes the proof.
\end{proof}

\bibliographystyle{abbrv}
\bibliography{references}

\end{document}